\documentclass[]{pkuai4m}

\usepackage[toc,page,header]{appendix}

\usepackage{minitoc}
\usepackage{multirow}     
\usepackage{multicol}     
\usepackage{booktabs}     
\usepackage{colortbl}     
\usepackage[table]{xcolor} 
\usepackage{makecell}     
\usepackage{graphicx}     
\usepackage{minted}
\usepackage{xcolor}
\usepackage{fontawesome5}
\usepackage{svg}
\usepackage{caption}      
\usepackage{fancyvrb}
\usepackage{fvextra}
\usepackage{enumitem}
\usepackage{wrapfig}
\usepackage{mathtools}
\usepackage{needspace}

\usepackage[most]{tcolorbox}
\usepackage{amsthm,amsmath,amssymb}

\theoremstyle{plain}
\newtheorem{theorem}{Theorem}
\newtheorem{corollary}[theorem]{Corollary}
\newtheorem{lemma}[theorem]{Lemma}
\newtheorem{proposition}[theorem]{Proposition}
\newtheorem{cited}[theorem]{Cited Result}
\newtheorem*{theorem*}{Theorem}
\newtheorem*{problem*}{Problem}
\newtheorem*{remark*}{Remark}
\newtheorem*{claim*}{Claim}
\newtheorem*{conjecture*}{Conjecture}

\theoremstyle{definition}
\newtheorem{definition}[theorem]{Definition}

\tcolorboxenvironment{theorem}{
    colback=blue!5,           
    colframe=blue!5,          
    arc=1mm,                  
    fonttitle=\bfseries,      
    boxrule=0pt,              
}
\tcolorboxenvironment{proposition}{
    colback=blue!5,           
    colframe=blue!5,          
    arc=1mm,                  
    fonttitle=\bfseries,      
    boxrule=0pt,              
}
\tcolorboxenvironment{lemma}{
    colback=blue!5,           
    colframe=blue!5,          
    arc=1mm,                  
    fonttitle=\bfseries,      
    boxrule=0pt,              
}
\tcolorboxenvironment{corollary}{
    colback=blue!5,           
    colframe=blue!5,          
    arc=1mm,                  
    fonttitle=\bfseries,      
    boxrule=0pt,              
}
\tcolorboxenvironment{definition}{
    colback=red!5,           
    colframe=red!5,          
    arc=1mm,                  
    fonttitle=\bfseries,      
    boxrule=0pt,              
}
\tcolorboxenvironment{problem*}{
    colback=orange!5,           
    colframe=orange!5,          
    arc=1mm,                  
    fonttitle=\bfseries,      
    boxrule=0pt,              
}
\tcolorboxenvironment{conjecture*}{
    colback=orange!5,           
    colframe=orange!5,          
    arc=1mm,                  
    fonttitle=\bfseries,      
    boxrule=0pt,              
}
\tcolorboxenvironment{cited}{
    colback=green!5,           
    colframe=green!5,          
    arc=1mm,                  
    fonttitle=\bfseries,      
    boxrule=0pt,              
}

\title{Automated Conjecture Resolution with Formal Verification}

\author[1,6,\dagger,*]{Haocheng Ju}
\author[1,6,\ddagger,*]{Guoxiong Gao}
\author[2,*]{Jiedong Jiang}
\author[3,11,*]{Bin Wu}
\author[1,4,*]{Zeming Sun}
\author[5,*]{Shurui Liu}
\author[1,6,11]{Leheng Chen}
\author[1,6]{Yutong Wang}
\author[1]{Yuefeng Wang}
\author[1,6]{Zichen Wang}
\author[1,6]{Wanyi He}
\author[6]{Peihao Wu}
\author[7]{Liang Xiao}
\author[7]{Ruochuan Liu}
\author[6,\P]{Bryan Dai}
\author[8,9,10,11\P]{Bin Dong}

\affiliation[]{
$^{1}$School of Mathematical Sciences, Peking University \\
$^{2}$Westlake Institute for Advanced Study, Westlake University \\
$^{3}$School of Mathematics, Tianjin University \\
$^{4}$Research Institute for Mathematical Sciences, Kyoto University \\
$^{5}$Department of Mathematics, Stanford University \\
$^{6}$IQuest Research \\
$^{7}$New Cornerstone Science Laboratory, School of Mathematical Sciences, Peking University \\
$^{8}$Beijing International Center for Mathematical Research and the New Cornerstone Science Laboratory, Peking University \\
$^{9}$Center for Machine Learning Research, Peking University\\
$^{10}$Center for Intelligent Computing, Great Bay Institute for Advanced Study, Great Bay University \\
$^{11}$Zhongguancun Academy \\
}

\contribution[*]{Equal Contribution}
\contribution[\dagger]{Informal Agent Lead}
\contribution[\ddagger]{Formal Agent Lead}
\contribution[\P]{Corresponding author}

\abstract{
Recent advances in large language models (LLMs) have significantly improved their ability to perform mathematical reasoning, extending from elementary problem solving to increasingly capable performance on research-level problems. However, reliably solving and verifying such problems remains challenging due to the inherent ambiguity of natural language reasoning. In this paper, we propose an automated framework for tackling research-level mathematical problems that integrates natural language reasoning with formal verification, enabling end-to-end problem solving with minimal human intervention. Our framework consists of two components: an informal reasoning agent, Rethlas, and a formal verification agent, Archon. Rethlas mimics the workflow of human mathematicians by combining reasoning primitives with our mathematical theorem search engine, Matlas, to explore solution strategies and construct candidate proofs. Archon, equipped with our formal theorem search engine LeanSearch, translates informal arguments into fully formalized Lean 4 projects through structured task decomposition, iterative refinement, and automated proof synthesis, ensuring machine-checkable correctness. Using this framework, we automatically resolve an open problem in commutative algebra proposed by D. D. Anderson (2014) and formally verify the resulting proof in Lean 4 with essentially no human involvement. Additional research-level case studies further illustrate the capabilities of Rethlas in informal mathematical reasoning and discovery, as well as the ability of Archon to formalize research-level proofs in Lean 4. Our experiments demonstrate that strong theorem retrieval tools enable the discovery and application of deep, cross-domain mathematical techniques, while the formal agent is capable of autonomously filling nontrivial gaps in informal arguments. More broadly, our work illustrates a promising paradigm for mathematical research in which informal and formal reasoning systems, equipped with theorem retrieval tools, operate in tandem to produce verifiable results, substantially reduce human effort, and offer a concrete instantiation of human–AI collaborative mathematical research with minimal human involvement.
}

\date{\today}

\def\emailicon{\raisebox{-1.5pt}{\includegraphics[height=1.05em]{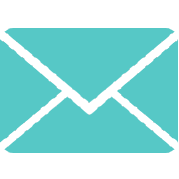}}}
\def\githubicon{\raisebox{-1.5pt}{\includegraphics[height=1.05em]{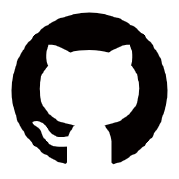}}}

\checkdata[ \emailicon \hspace{0.3em} Correspondence ]{\email{cbdai@iquestlab.com}, \email{dongbin@math.pku.edu.cn}}

\newcommand{\rethlaslink}{https://github.com/frenzymath/Rethlas}
\newcommand{\rethlasresultslink}
{https://github.com/frenzymath/Rethlas_results}
\newcommand{\archonlink}{https://github.com/frenzymath/Archon}
 \newcommand{\archonresultslink}{https://github.com/frenzymath/Anderson-Conjecture}
 
  \checkdata[ \githubicon \hspace{0.3em} Rethlas Source]{ \url{\rethlaslink} }
  \checkdata[ \githubicon \hspace{0.3em} Rethlas Results]{ \url{\rethlasresultslink} }
    \checkdata[ \githubicon \hspace{0.3em} Archon Source]{ \url{\archonlink} }
 \checkdata[ \githubicon \hspace{0.3em} Formalization Results ]{ \url{\archonresultslink} }

\begin{document}
\maketitle

\renewcommand{\thefootnote}{\fnsymbol{footnote}} 
\setcounter{footnote}{0}

\renewcommand{\thefootnote}{\arabic{footnote}}
\pagestyle{fancy}
\fancyhf{}
\fancyhead[R]{\thepage}


\section{Introduction}

In recent years, the mathematical reasoning abilities of large language models (LLMs) have advanced rapidly, progressing from handling elementary arithmetic and high-school–level problems to demonstrating competence across undergraduate curricula and beginning to engage with graduate- and research-level mathematics. Early models such as GPT-3 struggled with basic arithmetic, whereas GPT-4 \cite{achiam2023gpt} achieved a high score (92\%) on the grade-school benchmark GSM8K. With the emergence of \textit{test-time scaling}, where models allocate increased computational resources to reasoning during inference, systems such as OpenAI’s o1 have shown strong performance on high-school competition problems, including AIME. Subsequent reasoning models \cite{guo2025deepseek,qwq32b,yang2025qwen3,comanici2025gemini} have further validated the effectiveness of this paradigm. A notable milestone was achieved by Google’s Gemini Deep Think, which reached gold-medal–level performance at the International Mathematical Olympiad (IMO) using purely natural-language reasoning.

At a higher level, recent studies suggest that LLMs have also become increasingly capable in undergraduate and graduate mathematics, while continuing to make progress on research-level problems. For example, \cite{ju2026ai} showed that frontier models available at the time, such as Gemini 2.5 Pro, scored above 90 on a benchmark of undergraduate mathematics exams. The same study also found that o3-mini achieved an average score above 80 on PhD qualifying exams in Analysis, Probability, Algebra, and Geometry \& Topology. Open-source models have also demonstrated strong performance; for instance, \cite{jiang2025fate} reported that DeepSeek-R1 achieves 71.0\% proof accuracy on graduate-level algebra tasks. At the research level, progress is ongoing. On the \textit{FrontierMath} benchmark \cite{glazer2024frontiermath}, whose Tier 4 consists of unpublished research-level problems, o3-mini achieves only 4\% pass@1 accuracy, whereas GPT-5 improves this to 15\%. More recently, the leading system, GPT-5.4-Pro (web), reaches 38\% pass@1 accuracy. Notably, GPT-5.4-Pro has also solved an open problem in combinatorics from \textit{FrontierMath} that is categorized as “moderately interesting”. Building on these frontier models, LLM-based agents equipped with appropriate harness engineering can further enhance performance. For example, the \textit{Aletheia} agent \cite{feng2026towards}, which integrates a generator, a verifier, and a reviser, has autonomously tackled four Erdős problems \cite{feng2026semi} and addressed research questions such as computing eigenweights for the Arithmetic Hirzebruch Proportionality Principle \cite{feng2026eigenweights}, as well as problems concerning the simplicity of the Hodge bundle \cite{patel2026simplicity}.

However, despite these advances, evaluating the research-level capabilities of LLMs and LLM-based agents remains a significant bottleneck that requires substantial human effort. As mathematical complexity increases, evaluation increasingly relies on domain experts; however, due to the inherent imprecision of natural language, even experienced mathematicians can occasionally misjudge arguments. Indeed, errors can persist even within the rigorous peer-review processes of top journals. To enable fast and reliable verification of mathematical reasoning, it is therefore necessary to move toward more mechanized and unambiguous frameworks. Formal systems provide precisely such a foundation: a precise symbolic language paired with rigorously defined mechanisms for constructing and verifying proofs. Once a mathematical argument is translated into a formal language while preserving its semantic content, its correctness can be verified with complete rigor.

Motivated by this, we propose a framework for autonomously tackling and verifying research-level mathematics that integrates a natural language reasoning agent with a formalization agent, enabling end-to-end problem solving with minimal human supervision. The informal agent, Rethlas\footnote{Rethlas is open-sourced at \href{https://github.com/frenzymath/Rethlas}{https://github.com/frenzymath/Rethlas}.}, is designed to mimic the workflow of human mathematicians and is equipped with tools such as theorem retrieval to explore candidate solution paths. The formal agent, Archon \footnote{Archon is open-sourced at \href{https://github.com/frenzymath/Archon}{https://github.com/frenzymath/Archon}.}, builds on a dual-agent, tool-augmented architecture and is equipped with our formal theorem search engine LeanSearch \cite{gao2024semantic}. It transforms informal proofs into fully verified Lean 4 projects through structured planning, iterative formalization, and persistent memory management, ensuring both correctness and scalability.

Using this framework, we automatically resolve an open problem in commutative algebra proposed by D. D. Anderson in 2014 \cite{cahen2014open} and formally verify the resulting proof in Lean 4 with essentially no human intervention. During the natural-language solving phase, the semantic theorem search engine Matlas, employed by Rethlas, plays a crucial role in discovering a key technical result by Jensen \cite{jensen2006completions}. During formalization, Archon demonstrates its strong mathematical capability by filling non-trivial gaps left in the references and the natural language proof. The formalization has also passed the check of Comparator\footnote{https://github.com/leanprover/comparator}. 

We further demonstrate the capabilities of Rethlas and Archon separately on additional research-level problems. For informal problem solving, we test Rethlas on an expert-level problem in algebraic groups: a question that, to the best of our knowledge, had not been previously recorded in the published literature or online. Rethlas successfully proved the result, whereas GPT-5.5 Pro, accessed through its webpage interface, which also provides an agentic system, produced a completely incorrect proof. We also study a natural problem in $p$-adic Hodge theory to illustrate Rethlas’s ability to assist mathematical exploration and understanding. In this case, human mathematicians propose conjectural formulations, and Rethlas automatically proves the conjectures, shows that one assumption is unnecessary, sharpens a qualitative condition into a more explicit one, and constructs a counterexample when one condition is removed. This workflow helps mathematicians better understand the problem and goes beyond merely proving a well-formulated statement, suggesting its potential value for real mathematical research. Archon was tested on the formalization of two research-level problems and successfully formalized both: one fully autonomously, and the other with only a one-sentence natural-language hint.

The remainder of the paper is organized as follows. In \Cref{sec:related}, we review related work on agents for natural language and formal reasoning. \Cref{sec:overview} provides an overview of our framework. \Cref{sec:anderson_results} presents the automated resolution of Anderson's open problem, covering both Rethlas's natural-language proof and Archon's formalization in Lean~4. \Cref{sec:add_results} discusses the capabilities of the two agents separately on additional research-level problems. Finally, \Cref{sec:concl} concludes the paper.

\section{Related Work}\label{sec:related}
\subsection{Agents for Natural Language Reasoning}

LLM-based agents have recently demonstrated strong performance in both high-school olympiad mathematics and research-level mathematics. In the context of olympiad mathematics, a representative work is \cite{huang2025winning}, which proposes a verification-and-refinement pipeline that successfully solves 5 out of 6 problems from the IMO 2025 using models such as Gemini 2.5 Pro, Grok-4, and GPT-5. This significantly surpasses the baseline performance of these models without such a structured workflow.

At the research level, the \textit{Aletheia} agent \cite{feng2026towards}, built upon an advanced version of Gemini Deep Think, operates through an iterative process involving a generator, a verifier, and a reviser. It has tackled four Erdős problems in an autonomous manner \cite{feng2026semi}, as well as a generalized Erdős problem in a semi-autonomous setting with human collaboration \cite{barreto2026irrationality}. Beyond these, \textit{Aletheia} has also addressed genuine research problems, including computing eigenweights for the Arithmetic Hirzebruch Proportionality Principle \cite{feng2026eigenweights} and establishing bounds for independence sets \cite{lee2026lower}. Furthermore, \textit{Aletheia} solved 6 out of 10 problems in the FirstProof benchmark \cite{feng2026aletheia}, introduced in \cite{abouzaid2026first}, which consists of real research problems solved by human mathematicians but not publicly released at the time of evaluation. In parallel, the \textit{FullProof} system \cite{bryan2026motivic} demonstrates effective human–AI collaboration in algebraic geometry: the AI handles special cases, humans provide insights for the general case, and the AI subsequently completes the general solution based on these hints.

\subsection{Agents for Formal Reasoning}

Several agent systems targeting Lean~4 have recently achieved strong results on competition-level benchmarks. AlphaProof~\cite{hubert2025olympiad}, the pioneering RL-based agent from Google DeepMind, achieved silver-medal performance at IMO~2024. Seed-Prover~1.5~\cite{chen2025seed} trains tool-calling capabilities directly into the model and solves 11 of 12 Putnam~2025 problems. AxiomProver~\cite{breen2025ax} and the open-source Numina-Lean-Agent~\cite{liu2026numina} each solve all 12. Harmonic's Aristotle~\cite{achim2025aristotle}, which augments a fine-tuned proof-search model with MCTS and an informal reasoning pipeline, achieves IMO~2025 gold-medal performance and offers a public API to the community.

Beyond competition benchmarks, several of these systems have demonstrated research-level capabilities. Aristotle and AxiomProver have each formalized solutions to open Erd\H{o}s problems in Lean; AxiomProver has also solved and formalized open research conjectures such as Fel's conjecture on syzygies of numerical semigroups~\cite{chen2026fel}. With greater human involvement, Numina-Lean-Agent collaboratively formalized the Brascamp--Lieb theorem with two human experts~\cite{liu2026numina}, and the VML project~\cite{ilin2026semi} formalizes a result in mathematical physics with a single mathematician providing interactive guidance over ten days. At a larger scale, Math,~Inc.'s Gauss~\cite{mathinc_spherepacking} produces approximately 200{,}000 lines of Lean code to formally verify the Fields Medal---winning sphere packing proofs; for dimension~8, Gauss built upon a pre-existing human-prepared blueprint repository, while dimension~24 was autoformalized from the original paper. On the tooling side, Mistral's Leanstral~\cite{mistralai2026leanstral} offers an open-source agent designed for proof engineering in research-level formalization projects.

\section{Overview of the Framework}\label{sec:overview}

\begin{figure}[tb]
\centering
\includegraphics[width=0.99\linewidth]{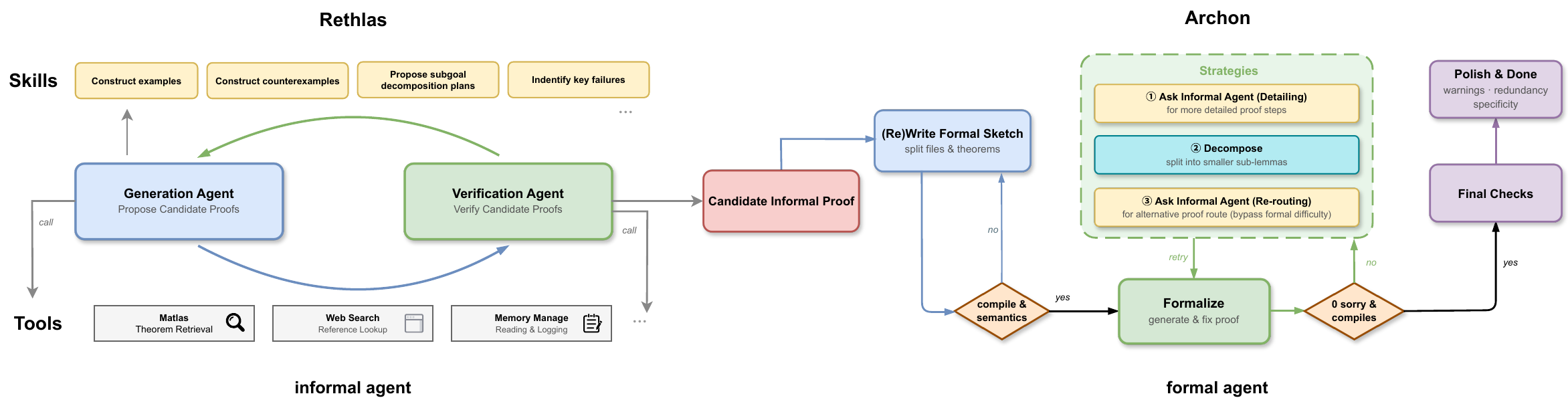}
\caption{Overview of the framework pipeline.}
\label{fig:complete}
\end{figure}

In this section, we describe our framework for autonomously tackling and verifying research-level mathematics. The framework consists of an informal agent (Rethlas), which first generates a candidate proof that is potentially correct, and a formal agent (Archon), which translates this informal proof into the formal language Lean 4 and fills in any gaps during the formalization process (see \Cref{fig:complete}). The design of the informal agent Rethlas is presented in \Cref{subsec:rethlas}, and the design of the formal agent Archon is presented in \Cref{subsec:archon}.

\subsection{Rethlas: The Informal Agent}\label{subsec:rethlas}

Rethlas consists of two subagents: a generation agent and a verification agent. As shown in \Cref{fig:rethlas}, it operates in an iterative process in which the generation agent produces an informal proof and passes it to the verification agent for correctness checking. If the proof passes verification, it is considered a candidate informal proof and the process terminates. Otherwise, the verification agent provides feedback, which is passed back to the generation agent to revise the proof.

\textbf{Generation agent.} The generation agent is designed to mimic the workflow of human mathematicians and is equipped with skills and tools tailored to mathematical research. These skills consist of reasoning primitives distilled from the processes mathematicians employ when tackling research problems, and they include:
\begin{itemize}
    \item \emph{Construct toy examples.} This skill is used when reasoning becomes stalled and simpler instances are needed to regain traction. The constructed examples should satisfy both the assumptions and the conclusion, allowing one to observe how the assumptions take effect and to develop intuition. When applying this skill, one should look for recurring patterns, invariants, or potential proof strategies suggested by the examples. Reasoning, decomposition, and retrieval may also be used to identify suitable examples or simplify the setting.
    \item \emph{Construct counterexamples.} This skill is used when a conjecture or intermediate claim appears fragile or insufficiently justified. The goal is to test whether the assumptions can hold while the conclusion fails. When applying this skill, one should leverage reasoning, decomposition, and retrieval to identify standard obstructions, pathological constructions, or known counterexamples.
    \item \emph{Search relevant results.} This skill is used to obtain theorems, constructions, examples, counterexamples, or background knowledge relevant to the current problem. It is also useful when constructing examples or counterexamples, or when proving subgoals that require supporting references. When a potentially useful theorem is found, one should expand the definitions and concepts involved using the surrounding context of the source, carefully verify its applicability to the current setting, and be explicit about any shifts in terminology across contexts. In addition, one should not stop at the statement alone, but also examine the proof to extract useful techniques, constructions, reductions, or proof patterns.
    \item \emph{Propose subgoal decomposition plans.} This skill is used after gathering sufficient insight from examples, counterexamples, search results, and prior attempts.
    \item \emph{Direct proving.} This skill is applied once one or more decomposition plans are available. Each plan is evaluated by attempting to prove its subgoals directly, while identifying key obstacles if the plan does not fully succeed.
    \item \emph{Recursive proving.} This skill is used when all current decomposition plans have been attempted via direct proving but none fully succeed. In this case, multiple subagents are deployed in parallel, each assigned a decomposition plan together with its associated key difficulties and those identified in other plans. Each subagent may refine, extend, or locally revise its assigned plan, while maintaining continuity rather than discarding it entirely.
    \item \emph{Identify key failures.} This skill is used when recursive attempts across all decomposition plans fail. The objective is to identify common patterns in these failures, such as recurring obstructions, ineffective decomposition strategies, or missing background knowledge. These insights are then summarized to guide the next iteration of plan generation.
\end{itemize}

\begin{figure}[tb]
\centering
\includegraphics[width=0.6\linewidth]{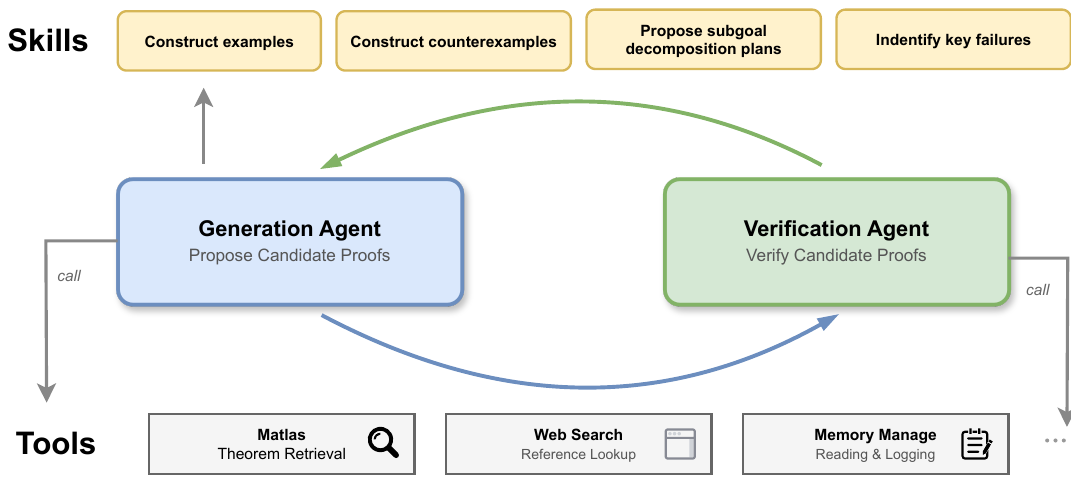}
\caption{Rethlas Agent.}
\label{fig:rethlas}
\end{figure}

As reflected in the above descriptions, these skills are not isolated but interconnected. Applying one skill often involves elements of others. For example, constructing examples and counterexamples does not occur in isolation; it typically requires reasoning, problem decomposition, and retrieval to identify appropriate constructions. When tackling a math problem, the agent is instructed to assess the current state and choose appropriate skills to apply, rather than deciding a fixed order of skills beforehand. 

Among the tools used in Rethlas, our theorem search engine Matlas \footnote{The version of Matlas used in the Rethlas experiments is a preliminary internal test system, implemented as an arXiv-based theorem search engine available at \href{https://leansearch.net/thm/}{https://leansearch.net/thm/}. Its corpus consists of approximately 13.6 million mathematical statements extracted directly from arXiv papers without further processing. Our latest Matlas system, available at \href{https://matlas.ai/}{https://matlas.ai/}, is a more advanced theorem search engine built from 8.51 million statements collected from 435K published papers across 180 journals and over 1,900 textbooks. As a result, it provides a more reliable and curated source of mathematical results. In addition, we extract statement dependencies and perform hierarchical unfolding to make statements more self-contained, which further improves their usability and quality compared to the earlier arXiv-based version. The previous arXiv-based theorem search engine will be deprecated in the near future.} plays a central role. In our experiments, Matlas serves as a semantic search engine over mathematical statements from arXiv, including definitions, propositions, theorems, corollaries, examples, and remarks. Its corpus contains approximately 13.6 million such statements, each embedded into a vector database. Given a query in the form of a mathematical statement, Matlas embeds the query and performs nearest-neighbor search using cosine similarity to retrieve relevant results. Compared to general web search, Matlas provides a more fine-grained and principled retrieval mechanism tailored to mathematical content. Although its corpus is smaller, it is more structured and enables more precise identification of relevant statements. When applying the \texttt{search-math-results} skill, Rethlas is instructed to first query Matlas and retrieve relevant theorems, examples, or counterexamples. It may then use web search either to gather additional background information and terminology, or to further explore references suggested by the results returned by Matlas.

In addition, Rethlas maintains a working memory of intermediate artifacts generated during the reasoning process, such as constructed examples, counterexamples, and subgoal decomposition plans. The agent is instructed to write these artifacts to memory and to query this memory when needed, enabling the reuse of previously generated insights and improving coherence across reasoning steps. In our experiments, unless otherwise stated, the generation agent is implemented using the OpenAI Codex agent with GPT-5.4 as the underlying model.

\textbf{Verification agent.} The verification agent is equipped with skills for checking the correctness of mathematical proofs. These include verifying statements sequentially, such as examining skipped or hand-wavy steps, identifying critical errors and gaps, and carefully assessing whether apparently unused assumptions are genuinely redundant or instead indicate an omitted argument. They also include confirming referenced statements, by checking, using our theorem search tool Matlas and web search, whether a cited statement exists and is applicable in the current context, expanding the definitions and terminology in the cited statement using the context of the source paper, and ensuring that terms used in the current proof carry the same meaning as in the cited source, since identical words may have different definitions in different contexts. In our experiments, unless otherwise stated, the verification agent is implemented using the OpenAI Codex agent with GPT-5.4 as the underlying model.

\subsection{Archon: The Formal Agent}\label{subsec:archon}

Mathematical proofs demand complete rigor, yet even expert-written proofs may contain subtle flaws, and proofs produced by LLMs, which are prone to hallucination, are far less reliable. To obtain trustworthy guarantees of correctness, we introduce formal verification into the pipeline. If a proof passes the formal language compiler, then verifying correctness reduces to checking that the top-level statement and its supporting definitions faithfully capture the intended mathematical content.

We adopt Lean 4 as our formal language. Its community-maintained library Mathlib comprises over 267,000 theorems and 127,000 definitions contributed by more than 770 members, providing a well-established foundation that makes formalization of frontier mathematics feasible, as every dependent definition and lemma must ultimately be grounded in existing libraries.

The formal agent, \textit{Archon}, extends our prior work on statement autoformalization~\cite{gao2024herald, wang2025aria} to full proof formalization: given an informal proof, it produces a Lean 4 project that states and proves the target theorem together with all supporting definitions and lemmas, grounded in Mathlib.
Archon first initializes the project by collecting dependent references, then formalizes key statements and definitions, organizing them into topic-specific files. It proceeds iteratively, filling gaps omitted in the literature, consulting the informal agent or external sources when stuck, and exploring alternative formalization strategies until the entire project compiles.

Prior to deployment on proofs generated by Rethlas, we validated Archon on FirstProof~\cite{abouzaid2026first}, a collection of 10 research-level problems without publicly released solutions at the time, thereby avoiding data contamination. Both OpenAI~\cite{openai_first_proof_2026} and Google~\cite{feng2026aletheia} evaluated their natural-language reasoning systems on these problems. We selected Problems~4 and~6 based on Mathlib infrastructure support and mathematical significance; notably, Google's Aletheia failed on both, while OpenAI succeeded but with human supervision. Archon produced complete formalizations of OpenAI's informal proofs for both problems: Problem~6 was formalized fully autonomously, and Problem~4 required only a single natural-language hint on the proof direction for one lemma. Details of these FirstProof experiments are presented in \Cref{subsubsec:archon-extra-results}. \Cref{subsubsec:workflow} introduces the workflow and architecture, while \Cref{subsubsec:enhancements} describes the enhancements made when moving from the FirstProof problems to the formalization of the conjecture.

\subsubsection{System Architecture and Workflow Design}\label{subsubsec:workflow}

\begin{figure}[!htbp]
    \centering
    \vspace{-0.6em}

    \begin{minipage}[t]{0.49\textwidth}
        \centering
        \vspace{0pt}
        \begin{minipage}[b][5.0cm][b]{\linewidth}
            \centering
            \includegraphics[
                width=\linewidth,
                height=5.0cm,
                keepaspectratio
            ]{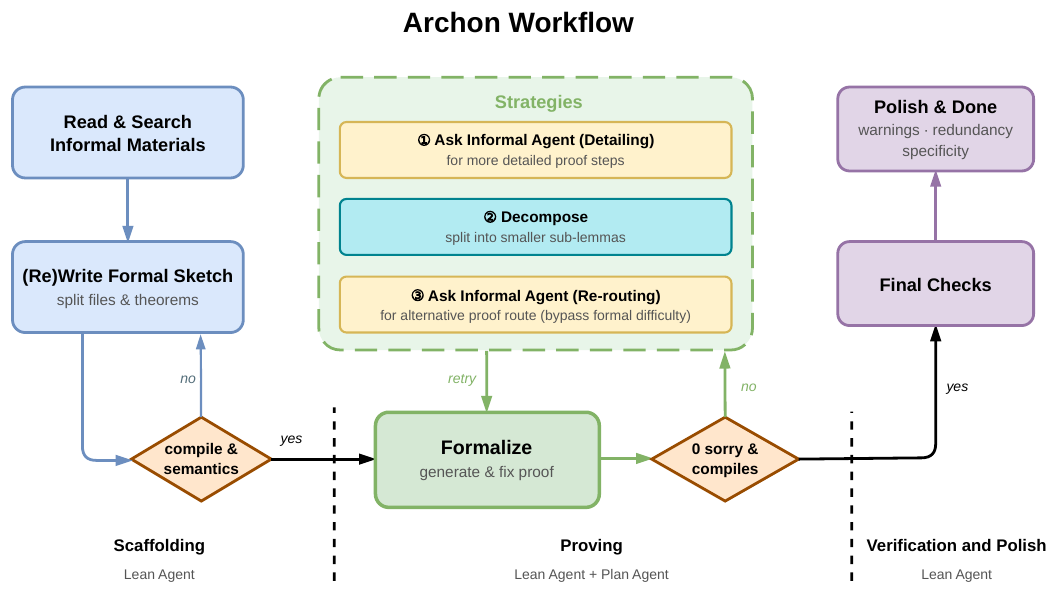}
        \end{minipage}
        \vspace{-1.8em}
        \captionof{figure}{Archon Workflow}
        \label{fig:archon-workflow}
    \end{minipage}\hspace{0.005\textwidth}%
    \begin{minipage}[t]{0.49\textwidth}
        \centering
        \vspace{0pt}
        \begin{minipage}[b][5.0cm][b]{\linewidth}
            \centering
            \includegraphics[
                width=\linewidth,
                height=5.0cm,
                keepaspectratio
            ]{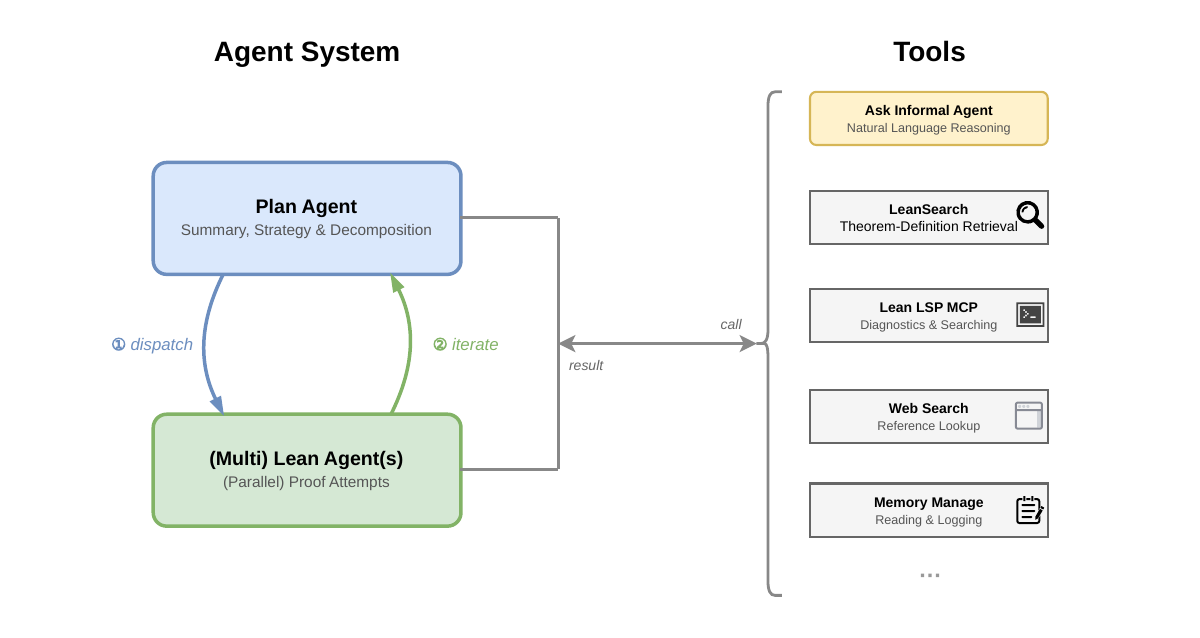}
        \end{minipage}
        \vspace{-1.8em}
        \captionof{figure}{Agent System and Tools}
        \label{fig:agent-system-tools}
    \end{minipage}

    \vspace{-0.6em}
\end{figure}

Our core design principle is to avoid hard-coded workflows and instead equip the agent with sufficient skills and tools, granting it maximal autonomy. However, we observed that in long-running sessions, particularly when the agent confronts difficult lemmas, accumulated context degrades performance: the agent's own failed attempts and pessimistic annotations erode its willingness to persevere. To address this, we introduced a dual-agent architecture consisting of a \textbf{Plan Agent}, which operates in a fresh context to decompose tasks and provide targeted guidance, and a \textbf{Lean Agent}, which executes formalization within a constrained scope. This separation substantially mitigates context pollution and task-aversion (e.g., context containing accumulated failures was observed to correlate with reduced agent persistence on difficult obligations).

The formalization process is divided into three phases (\Cref{fig:archon-workflow}):
\begin{enumerate}
    \item \textbf{Scaffolding.} The Lean Agent analyzes the informal proof and constructs the initial file structure: splitting the proof into modules, defining theorem signatures, and placing \texttt{sorry} placeholders at each proof obligation. This decomposition is not fixed and may be revised as the proof develops.
    \item \textbf{Proving.} The Lean Agent and Plan Agent enter an iterative cycle. The Lean Agent attempts to discharge \texttt{sorry} placeholders; when stuck, it returns diagnostics to the Plan Agent, which re-evaluates the proof strategy and dispatches a revised task. When the decomposition yields independent proof obligations, multiple Lean Agents can operate in parallel.
    \item \textbf{Verification and Polish.} The system confirms successful compilation and the absence of \texttt{sorry}, \texttt{axiom}, and other escape hatches, then performs a quality pass to extract reusable lemmas and reduce proof complexity.
\end{enumerate}

As shown in \Cref{fig:agent-system-tools}, the agents are equipped with a suite of tools implemented via terminal CLI and Model Context Protocols (MCPs). We highlight two that are particularly important to Archon's effectiveness:

\textbf{Ask Informal Agent.} When the Lean Agent requires mathematical guidance beyond what the provided informal proof covers, it delegates to an informal agent for natural-language sub-proofs or alternative strategies. In most cases, this is a lightweight Gemini API call for quick reasoning. When the agent encounters a more substantial obstacle, it may invoke Rethlas for deeper, long-horizon reasoning.

\textbf{LeanSearch.} Our previously published retrieval tool, widely adopted in the Lean community, provides fuzzy search over Mathlib's theorems and definitions. For Archon, we further improved its retrieval accuracy and updated it to the latest Lean and Mathlib versions. Crucially, LeanSearch enables the agent to reliably determine whether a needed result already exists in Mathlib, allowing it to make well-informed decisions about when to invoke library lemmas and when to formalize from scratch. This capability is essential for producing formalizations that fully leverage existing infrastructure.

Other tools include a Lean LSP MCP for compilation diagnostics, web search for retrieving published references, and a memory management system that persists the agent's architectural reasoning, failed approaches, and learned techniques across context compressions and session restarts.

Providing tools alone is not sufficient; the model also requires explicit guidance on when and how to deploy them. This behavioral guidance is encoded in \textbf{skills}, structured instruction files built upon the Lean 4 Skills framework that shape the agent's working patterns and corner-case handling to mirror authentic formalization practice.

\subsubsection{Enhancements for Conjecture Formalization\label{subsubsec:enhancements}}

The version of Archon used for FirstProof Problems~4 and~6 validated the dual-agent architecture: the Plan Agent largely eliminated cases where the Lean Agent stalled or refused to push forward. However, scaling to harder problems revealed two further challenges. First, as mathematical difficulty and project complexity grow, agents occasionally lose track of prior attempts and repeatedly explore the same dead ends, wasting computational resources. Second, as the project enlarges, agents must manage a growing body of informal references and can struggle to locate them when needed. We introduce two enhancements, both open-sourced, to address these issues.

\textbf{Memory system and Review Agent.} We shorten the duration of each Lean--Plan iteration and require each agent to log a summary of its session. Several global status documents are maintained across sessions, tracking the overall workflow stage, per-file proving status, and detailed records of local \texttt{sorry} elimination and refactoring work. Additionally, a \textbf{Review Agent} runs at each session boundary to synthesize trends across recent sessions. This multi-session perspective is essential for detecting stalls, which only manifest as patterns over consecutive sessions. While the Review Agent does not alter the formalization itself, it enables the Plan Agent to adjust strategy at the right time and gives human experts a clearer view of progress.

\textbf{Structured prompts and reference management.} We reorganize the agent's guidance and reference materials into a clear directory structure. During project initialization, human experts prepare paper references (with agent assistance), and the agent adds further references via search. When facing mathematical difficulties, the agent consults the informal agent, then records candidate proof routes in a dedicated location distinct from the original references. Skills and prompts, which govern the agent's working patterns and conventions, can be customized per project; in our experiments, explicitly instructing the Plan Agent to follow informal reference steps when they are deemed trustworthy significantly reduced time spent on unpromising alternatives. This design also keeps the context clean: agents load only the materials relevant to the current task, avoiding distraction from unrelated information.

\medskip

\noindent These design choices play a central role in the full-scale formalization described in \Cref{subsec:overview-result}.

\section{Automated Resolution of Anderson's Open Problem}\label{sec:anderson_results}

With the framework established, we now shift our focus to the target mathematical problem. \Cref{subsec:open_problem} introduces the background and Anderson's open problem. \Cref{subsec:nl_proof_sketch} presents an overview of the proof generated by Rethlas, and \Cref{subsec:discovery} traces the step-by-step discovery process that led to it. Finally, \Cref{subsec:overview-result} discusses Archon's formalization of the proof in Lean~4.

\subsection{Anderson's open problem}\label{subsec:open_problem}

Let \((R, \mathfrak{m})\) be a Noetherian local ring. Recall the following definitions:
\begin{enumerate}[leftmargin=*]
    \item $R$ is \emph{quasi-complete} if for every decreasing sequence $\{A_n\}_{n\ge 1}$ of ideals of $R$ and every integer \(k \ge 1\), there exists \(s_k\) such that  
    \[
    A_{s_k} \subseteq \Bigl(\bigcap_{n\ge 1} A_n\Bigr) + \mathfrak{m}^k.
    \]

    \item $R$ is \emph{weakly quasi-complete} if the above condition holds for all decreasing sequences with \(\bigcap_{n \ge 1} A_n = 0\); in this case the condition simplifies to  
  \[
  A_{s_k} \subseteq \mathfrak{m}^k.
  \]
\end{enumerate}

These properties arise naturally from the study of topologies on local rings. Quasi-completeness clearly implies weak quasi-completeness, and the implication that every complete local ring is quasi-complete was first proved by Chevalley \cite[Lemma 7]{chevalley1943theory}. D. D. Anderson posed the following question:

\begin{problem*}[{D. D. Anderson, \cite[Problem 8a]{cahen2014open}}]
Does weak quasi-completeness imply quasi-completeness for Noetherian local rings?
\end{problem*}

We tasked Rethlas with exploring both possibilities: whether weak quasi-completeness always implies quasi-completeness, or whether a counterexample exists. After about 45 minutes of autonomous reasoning, Rethlas established the following theorem, later formalized in Lean by Archon. This theorem gives a negative answer by constructing a counterexample.

\begin{theorem}\label{thm:main}
    There exists a weakly quasi-complete ring that is not quasi-complete.
\end{theorem}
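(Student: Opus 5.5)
The plan is to translate both properties into statements about the completion $\widehat R$, and then realize a counterexample with a Heitmann/Loepp-type prescribed-completion construction; I expect Jensen's result \cite{jensen2006completions} to supply that construction.

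\textbf{Step 1: quasi-completeness reduces to weak quasi-completeness of quotients.} Quasi-completeness of $R$ is equivalent to weak quasi-completeness of every quotient $R/I$. Indeed, given a decreasing chain $\{A_n\}$ in $R$ with $\bigcap_n A_n = I$, the chain $\{A_n/I\}$ in $R/I$ has zero intersection. Moreover, the condition $A_{s_k}\subseteq I+\mathfrak m^k$ is exactly the weak condition for $\{A_n/I\}$ in $R/I$. So it suffices to find a Noetherian local ring $R$ that is weakly quasi-complete and an ideal $I$ (I aim for a prime $P$) such that $R/P$ is not weakly quasi-complete.

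\textbf{Step 2: an obstruction criterion via $\widehat R$.} Suppose there is an ideal $J\neq 0$ of $\widehat R$ with $J\cap R=0$ and $J\not\subseteq \widehat{\mathfrak m}^2$. Set $A_n := R\cap (J+\widehat{\mathfrak m}^n)$.
\begin{itemize}
\item The $A_n$ decrease.
\item By Krull's intersection theorem in $\widehat R/J$, we get $\bigcap_n A_n = R\cap J = 0$.
\item Pick $q\in J\setminus\widehat{\mathfrak m}^2$. By density of $R$ in $\widehat R$ there is $x_n\in R$ with $x_n-q\in\widehat{\mathfrak m}^n$. Then $x_n\in A_n\setminus \mathfrak m^2$ for every $n\geq 3$, so weak quasi-completeness fails at $k=2$.
\end{itemize}
For the converse direction, which is what certifies weak quasi-completeness, I would prove the following. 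If $\{A_n\}$ violates the weak condition at level $k$, pass to the ideals $\bigcap_n (A_n\widehat R+\widehat{\mathfrak m}^{j})$. Artinian stabilization in each $\widehat R/\widehat{\mathfrak m}^j$ produces, by a compactness (inverse-limit) argument, a nonzero ideal $J$ of $\widehat R$ with $J\not\subseteq \widehat{\mathfrak m}^k$ but $J\cap R\subseteq \bigcap_n A_n=0$. Faithful flatness $A_n\widehat R\cap R=A_n$ is used here.

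The resulting criterion reads: \emph{$R$ is weakly quasi-complete if and only if every nonzero ideal of $\widehat R$ meets $R$ nontrivially}. Equivalently, since every nonzero ideal contains a nonzero annihilator of some associated prime, this happens if and only if no nonzero ideal of $\widehat R$ is disjoint from $R$ in this sense. This step also needs an Artin–Rees-type adjustment to handle the exponent $k$ instead of $2$.

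\textbf{Step 3: the construction.} I want a two-dimensional local domain $R$ with the following properties.
\begin{itemize}
\item Every nonzero prime of $\widehat R$ (for instance $\widehat R=k[[x,y,z]]/(f)$, or a power series ring) contracts to a nonzero prime of $R$. That is, the generic formal fiber is trivial, giving weak quasi-completeness via Step 2.
\item There is a height-one prime $P$ of $R$ such that $R/P$ is not analytically unramified. Then the nilradical $N$ of $\widehat{R/P}$ is a nonzero ideal with $N\cap (R/P)=0$, because $R/P$ is reduced. By Step 2 and Step 1, $R$ is not quasi-complete. If $N\subseteq\widehat{\mathfrak m}^2$, I replace $N$ by a suitable annihilator ideal, or use the general-$k$ form of the criterion.
\end{itemize}
Both features are of the type controlled by Heitmann/Loepp-style constructions of subrings of a prescribed complete local ring $T$. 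There one builds $R$ as a union of countable (or small-cardinality) subrings, adjoining elements so that each prime of $T$ in a specified family meets $R$. At the same time a specified prime $Q$ of $T$ is kept with $T/Q$ non-reduced along the contraction. I would invoke Jensen's theorem \cite{jensen2006completions} on completions with prescribed formal fibers to obtain such an $R$.

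\textbf{Main obstacle.} The hardest step is Step 3: simultaneously forcing the generic formal fiber of $R$ to be trivial and making some quotient $R/P$ analytically ramified. The two requirements pull in opposite directions, since adjoining elements to hit primes of $T$ can destroy the ramification at $P$. I would first try to choose $T$ and $Q\in\operatorname{Spec}T$ with $T/Q$ non-reduced, and then run the inductive adjunction while keeping $Q\cap R$ fixed as a prime $P$ with $PT$ having an embedded or non-reduced component. If this proves too delicate, I would fall back on checking the precise hypotheses of Jensen's theorem and extracting the counterexample directly from it.
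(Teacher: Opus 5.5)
Your Steps 1 and 2 are sound. Step 1 is Anderson's reduction: quasi-complete iff every quotient is weakly quasi-complete. Step 2 is a correct, self-contained proof of a general form of Farley's criterion. No Artin--Rees adjustment is needed for the exponent: since $J\neq 0$, Krull's intersection theorem gives some $k$ with $J\not\subseteq\widehat{\mathfrak m}^k$, and your density argument works verbatim at that $k$.

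The gap is Step 3, which carries the entire content of the theorem. There you give no construction, only the hope that Jensen's theorem or a modified Heitmann--Loepp recursion will produce a suitable $P$. Jensen's Corollary~2.4 \cite{jensen2006completions} says nothing directly about quotients $R/P$. What it gives is a local \emph{UFD} $R$ with $\widehat R\cong T$ and trivial generic formal fiber, provided $|T|=|T/M|$, $\operatorname{depth}T\ge 2$, and no prime $J$ has $\operatorname{ht}J>\operatorname{depth}T_J=1$.

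The missing idea is to play the UFD property against a non-factorial $T$. Take $T=\mathbb C[[x,y,z]]/(x^2-yz)$. It is a complete Cohen--Macaulay domain of dimension $2$ with $|T|=|T/M|=|\mathbb C|$, so Jensen's hypotheses hold with $P=(0)$. Its height-one prime $Q=(x,y)T$ is not principal. Trivial generic formal fiber gives $Q\cap R\neq 0$. Going down bounds $\operatorname{ht}(Q\cap R)$ by $\operatorname{ht}Q=1$, so the height is $1$, and the UFD property forces $Q\cap R=aR$. Now $aT\subsetneq Q$, so $aT$ is not prime: a nonzero principal prime would have height one and hence equal $Q$. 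Thus $\widehat{R/aR}\cong T/aT$ is not a domain. The two requirements therefore do not ``pull in opposite directions,'' and there is no need to rerun the transfinite adjunction while protecting a prime. The bad quotient comes for free once $T$ is chosen non-factorial.

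Your target in Step 3 is also misdirected. ``$T/Q$ non-reduced'' is impossible for a prime $Q$. The condition you presumably intend, $T/PT$ non-reduced, is stronger than needed, and Jensen's theorem does not control it. In the example above, $T/yT\cong\mathbb C[[x,z]]/(x^2)$ is non-reduced while $T/xT\cong\mathbb C[[y,z]]/(yz)$ is reduced, although both $x,y\in Q$. The construction gives no say over which kind of generator $Q\cap R$ has.

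What suffices is that $\widehat{R/P}$ is not a domain, and your own Step 2 already handles this. A nonzero minimal prime of $\widehat{R/P}$ contracts to $0$ in the domain $R/P$ (flatness, going down). So it is a nonzero ideal with zero contraction, and $R/P$ fails to be weakly quasi-complete. Drop the nilradical/ramification route in favor of this, and replace the speculative construction by the UFD-plus-non-principal-height-one-prime argument.
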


\subsection{Overview of the mathematical proof}\label{subsec:nl_proof_sketch}

The counterexample discovered by Rethlas and formalized by Archon relies heavily on a technical result from a paper by Jensen \cite{jensen2006completions}, which was located by Matlas. This result is very challenging to uncover without domain expertise and is not obviously related at first glance. A body of research has been developed in this area, studying completions and generic formal fibers; see, for example, \cite{heitmann1993characterization, loepp1997constructing, jensen2006completions, fleming2019completely}. Nevertheless, Rethlas quickly identified the relevant result and successfully applied it.

For a detailed mathematical construction and proof, see Appendix~\ref{app:math_proof}; we present only a brief sketch here to clarify the logical structure and the role of Jensen's result. After initial reductions and the application of several lemmas from the related literature, it suffices to construct a ring \(A\) whose completion \(T = \widehat A\) satisfies the following conditions:
\begin{enumerate}[leftmargin=*]
\item[A.] The generic formal fiber of \(A\), i.e., \(\operatorname{Spec}\, (T \otimes_{A} \operatorname{Frac} A)\), is trivial.
\item[B.] There exists a prime element \(a \in A\) such that \(A/aA\) is a one-dimensional Noetherian local domain that is not analytically irreducible (equivalently, its completion with respect to the maximal ideal is not a domain).
\end{enumerate}

At this point, Jensen's result \cite[Corollary 2.4]{jensen2006completions} becomes relevant. Jensen completely characterizes, under mild assumptions, when a ring \(T\) can be realized as the completion of a local UFD \(A\) with trivial generic formal fiber. Setting
\[
T = \mathbb{C}[[x,y,z]]/(x^2 - yz)
\]
and verifying that \(T\) satisfies all conditions in Jensen's theorem yields the desired construction. Since \(T\) is not factorial and contains a nonprincipal height-one prime \(Q\), the contraction $q \coloneqq Q \cap A = (a)$ produces a quotient \(A/aA\) that is not weakly quasi-complete.

\subsection{Rethlas' discovery process}\label{subsec:discovery}

In this section, we present Rethlas' discovery process, illustrated in \Cref{fig:rethlas_exploration}, which traces the journey from the initial problem stated in \Cref{thm:main} to a complete proof.

\begin{figure}[hptb]
\centering
\includegraphics[width=\linewidth]{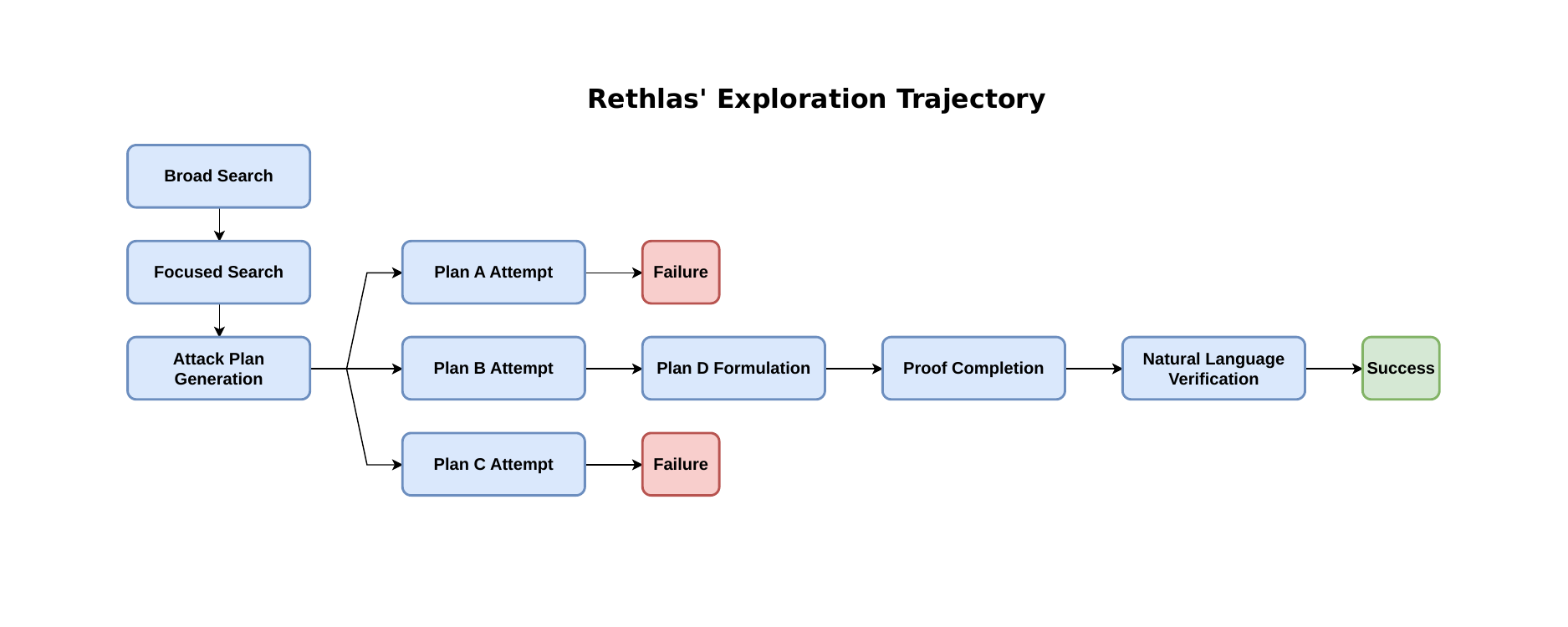}
\caption{Illustration of Rethlas' exploration trajectory on Anderson's open problem.}
\label{fig:rethlas_exploration}
\end{figure}

\begin{enumerate}[leftmargin=*]
    \item \textbf{Broad search.} By searching the original problem statement and its references, Rethlas located the literature that records this problem as an open question, as well as directly relevant technical sources including \cite{cahen2014open, farley2016quasi}. This led to a more tractable reformulation: it suffices to construct a weakly quasi-complete ring that admits a homomorphic image which is not weakly quasi-complete.
    \item \textbf{Focused search.} Searching the reformulated problem and related results from the previous step, Rethlas accumulated further technical understanding. During this phase, by searching the query \verb`There exists a` \verb`weakly quasi-complete local ring with a quotient isomorphic to k[X,Y]_(X,Y).` using Matlas, it discovered one of the works that systematically study the relationship between a Noetherian local ring and its completion, with particular attention to the generic formal fiber, namely \cite{fleming2019completely}. Rethlas noted that this line of research might be useful for constructing the desired counterexample.
    \item \textbf{Attack plan generation.} With the knowledge from the search results, Rethlas drafted three decomposition plans for attacking the problem.
    \begin{enumerate}[leftmargin=*]
        \item[A.] Construct a weakly quasi-complete ring \(R\) that surjects onto a known non-weakly-quasi-complete local ring \(S\) (ideally \(S = k[X,Y]_{(X,Y)}\)) via a square-zero extension or idealization.
        \item[B.] Construct a weakly quasi-complete local domain \(A\) using formal fiber control, then select a nonprime ideal \(I\) such that the completion of the quotient \(A/I\) exhibits an obvious obstruction to weak quasi-completeness.
        \item[C.] Begin with a non-weakly-quasi-complete local ring \(S\) and a weakly quasi-complete overring \(T\) with maximal ideal \(M\). Form a subring \(R = S + M\) of \(T\) or a fibered product, and test whether the added \(M\)-part forces every descending chain in \(R\) with zero intersection into powers of the maximal ideal.
    \end{enumerate}
    \item \textbf{Attempts of Plans A, B, C and formulation of Plan D.} Plans A and C failed after several attempts, yielding no clear progress. During these attempts, continued investigation of works relating a Noetherian local ring \(A\) to its completion \(T\) brought Jensen's result \cite{jensen2006completions} to light. Building on this, Rethlas identified a concrete candidate \(T = \mathbb{C}[[x,y,z]]/(x^2 - yz)\) and proposed Plan D, which is substantially clearer than Plan B.
    \begin{enumerate}[leftmargin=*]
        \item[D.] Use Jensen's corollary to construct a weakly quasi-complete local UFD \(A\) whose completion \(T\) is not factorial and contains a nonprincipal height-one prime \(Q\); then the contraction \(q = Q \cap A = (a)\) yields a quotient \(A/aA\) that is not weakly quasi-complete.
    \end{enumerate}
    \item \textbf{Proof completion and verification.} Rethlas worked out the remaining proof details and produced a Markdown file that clearly states all cited theorems and the logical steps of the argument. The natural-language verifier accepted the proof, yielding the final output. Some minor details that a human mathematician would consider negligible but that formal verification requires are not fully filled in; see Appendix~\ref{subsubsec:gap-filling}.
\end{enumerate}



\subsection{Archon's formalization process}\label{subsec:overview-result}

With the informal proof in hand, we now turn to its formalization and verification in Lean~4. The formalization\footnote{The complete formalization is open-sourced at \url{https://github.com/frenzymath/Anderson-Conjecture}.}, carried out by Archon, translates the complete proof of Anderson's conjecture into a Lean~4 project grounded in Mathlib, covering the main theorem, all supporting lemmas, and key results drawn from six external papers---Anderson~\cite{anderson2014quasi}, Farley~\cite{farley2016quasi}, Jensen~\cite{jensen2006completions}, Heitmann~\cite{heitmann1993characterization}, Loepp~\cite{loepp1997constructing}, and Chevalley~\cite{chevalley1943theory}. The task is non-trivial: beyond routine translation, it requires filling gaps left implicit in the informal argument, implementing underspecified constructions (such as transfinite recursion) in full detail, and adapting proof strategies to the available library infrastructure. The resulting codebase comprises approximately 19{,}000 lines of Lean~4 code across 42 files (see Figure~\ref{fig:code-structure} for a detailed breakdown) and was completed in approximately 80 hours of agent runtime.

\Needspace{0.45\textheight}
\begin{wrapfigure}{r}{0.38\textwidth}
    \centering
    \includegraphics[width=\linewidth]{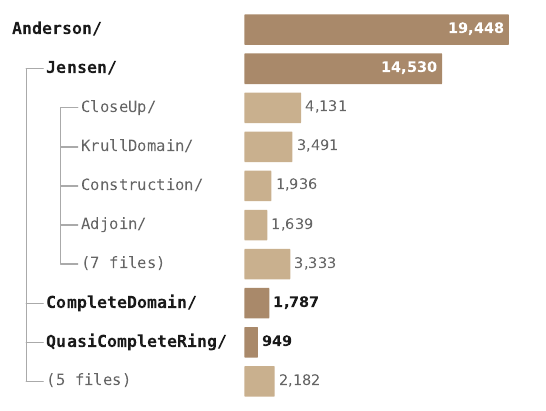}
    \vspace{-0.5\baselineskip}
    \caption{Structure of the formalization codebase (19{,}448 lines of Lean~4).}
    \label{fig:code-structure}
    \vspace{-0.5\baselineskip}
\end{wrapfigure}

The computational cost consisted of three Claude Code Max subscriptions (\$200/month each), with each account consuming roughly 70\% of its weekly quota during the one-week project. The entire process was fully autonomous: the only human intervention consisted of downloading paywalled PDF files that Archon could not retrieve on its own and placing them in the project's reference directory, after which Archon automatically performed OCR and organized the content into structured Markdown files for its own use; no mathematical judgment was required from the human operator. For context, based on interviews with Mathlib contributors, we estimate that an experienced formalization expert produces roughly 150--250 lines of Lean code per day; this codebase thus represents a workload comparable to several person-months of expert effort.

The correctness of the formalization is verified at multiple levels. The entire project passes \texttt{lake build} without errors. Additionally, we employ Comparator\footnote{\url{https://github.com/leanprover/comparator}}, a sandboxed verification tool for Lean proofs, to confirm that the formalized theorem statements match a human-reviewed simplified specification (Appendix~\ref{appendix:szm_comparator}) and that the proof relies only on the standard Lean axioms without introducing any hidden assumptions.

Beyond these quantitative metrics, the formalization process reveals several mathematical capabilities of Archon worth noting. First, Archon consistently demonstrated the ability to autonomously \textbf{fill non-trivial gaps} in the informal proof---arguments that the source text omits or leaves implicit---by supplying complete formal arguments without human input or additional informal context. Second, when an initial proof strategy proved untenable, Archon was able to diagnose its own mathematical errors and carry out a large-scale cross-session refactoring. Third, where the reference proof relied on definitions absent from Mathlib, Archon independently identified alternative proof strategies that bypass missing infrastructure. We give a fuller account of these capabilities, together with analogous behaviors observed on the FirstProof projects, in \Cref{subsec:archon-capabilities}.

These observations also suggest a practical mode of human--AI collaboration for formalization. In our experience, a mathematician can guide Archon in much the same way one explains a proof to a graduate student---by indicating key difficulties, pointing out errors, supplying references, and elaborating at specific bottlenecks---without needing to spell out every formal detail.

To quantify this collaborative mode, we conducted a controlled ablation\label{subsubsec:ablation-human}. At a checkpoint where three proof obligations remained open---all related to establishing that a certain intersection of subrings is a UFD (the same technical challenge discussed in Appendix~\ref{subsubsec:alternative-strategy})---we forked two branches from the same project state: in one, a human supervisor provided a Markdown proof blueprint (reproduced in Appendix~\ref{appendix:human-blueprint}) suggesting a Krull-domain route following Heitmann~\cite{heitmann1993characterization}; in the other, Archon continued without intervention. The human-guided branch resolved all remaining obligations in a single session (2 hours and 12 minutes), while the autonomous branch required two sessions and 3 hours and 43 minutes---roughly 70\% longer. Notably, Archon did not adopt the suggested Krull-domain route in the human-guided branch; it continued along the Kaplansky criterion approach it had already been developing (Appendix~\ref{subsubsec:alternative-strategy}), selectively incorporating only those intermediate observations from the blueprint that were applicable to both proof strategies. The primary effect of the human input was thus not to redirect the proof strategy but to supply useful intermediate results that accelerated the agent's progress.

\section{Capabilities of the Informal and Formal Agents}\label{sec:add_results}
In this section, we demonstrate the capabilities of our informal and formal agents on additional case studies. In \Cref{subsec:rethlas-capabilities}, we illustrate Rethlas’s ability in informal mathematical reasoning and discovery through examples in algebraic groups and $p$-adic Hodge theory. In \Cref{subsec:archon-capabilities}, we test Archon on the formalization of proofs for two research-level problems from FirstProof~\cite{abouzaid2026first}.

\subsection{Rethlas for Informal Mathematical Reasoning and Discovery}\label{subsec:rethlas-capabilities}
We demonstrate Rethlas’s ability to assist real mathematical research in two ways. In Section~\ref{subsubsec:rethlas-algebraic-groups}, we present an example in which Rethlas resolves a research problem in the theory of algebraic groups. We also note that Rethlas has resolved other documented open problems from the literature, including two problems from Ha{"i}m Brezis’s list of favorite open problems~\cite{Brezis2023Favorite}, as discussed in~\cite{dou2026degenerateconstantsdegreeinequalities}, and open problems in commutative algebra~\cite{jiang2026openproblemscommutativealgebra}; these examples are not discussed here, and we refer interested readers to the corresponding papers. In Section~\ref{subsubsec:rethlas-padic-hodge}, we present an example from $p$-adic Hodge theory in which Rethlas contributes to the exploratory process of identifying the correct mathematical picture: it proves a conjectural statement proposed by human mathematicians, clarifies which hypotheses are essential, sharpens a condition into an explicit bound, and constructs counterexamples when a key condition is removed. This example illustrates that Rethlas can support not only the proof of well-formulated mathematical statements, but also the broader process of mathematical exploration and discovery.

\subsubsection{Rethlas for Automated Problem Solving in Algebraic Groups}
\label{subsubsec:rethlas-algebraic-groups}

We illustrate Rethlas's capabilities in solving research-level problems by considering a natural and fundamentally important problem in the theory of algebraic groups which, to the best of our knowledge, was not recorded in the published literature or on the internet, and is comparable to problems in \cite{abouzaid2026first}. Rethlas, with GPT-5.5 xhigh used by the generation agent and GPT-5.4 xhigh used by the verification agent, produced a correct solution, except for a citation-substitution issue discussed in the following sections. We also tested GPT-5.5 Pro through its webpage interface, which provides an agentic system; it produced a completely wrong proof (see Appendix~\ref{app:gpt5.5pro-output}).


\textbf{Rational conjugacy classes of 1-parameter subgroups.}\label{sec: statement of brian's problem}

The theory of algebraic groups is of central importance in algebraic geometry, algebraic number theory, and representation theory. One-parameter subgroups play an important role in understanding the structure of algebraic groups. The following problem arises naturally in research.

Let $G$ be a connected smooth affine group scheme over a field $k$ and $\mathbb{X}_\ast(G)=\mathrm{Hom}_{k\textrm{-group}}(\mathrm{GL}_1, G)$ the set of 1-parameter subgroups of $G$ over $k$. There is a natural left action of $G(k)$ on $\mathbb{X}_\ast(G)$ by conjugation, i.e. 
$$(g\cdot \lambda)(t) = g\lambda(t)g^{-1},\quad\textrm{for $g\in G, \lambda\in\mathbb{X}_\ast(G), t\in \mathrm{GL}_1$}.$$
\begin{problem*}[Rational Conjugacy Classes of 1-parameter subgroups]
	Let $K$ be any field extension of $k$ and $G_K$ the base change of $G$ to $K$. Is $\mathbb{X}_\ast(G)/G(k)\rightarrow \mathbb{X}_\ast(G_K)/G(K)$ injective?
\end{problem*}
This is a natural and non-trivial problem in the theory of algebraic groups which, to the best of our knowledge, was not recorded in the literature or on the internet.

\begin{remark*} There are three easier cases with additional assumptions, which are known to experts but not recorded in the literature. 
\begin{enumerate}
	\item \textbf{Split case:} $G$ is assumed to be split reductive over $k$. This case is not hard and the proof strategy is obvious to experts.
	\item \textbf{Reductive case:} $G$ is assumed to be (not necessarily split) reductive (over a general field $k$).
	\item \textbf{Perfect field case:} $k$ is assumed to be a perfect field (and $G$ is a general smooth affine group scheme).
\end{enumerate}
All three easier cases above can be proved by both Rethlas (with GPT-5.4 xhigh) and GPT-5.5 Pro, accessed through the web interface (see \href{https://srliu3264.github.io/rethlas_results_math/}{Rethlas Results Github Homepage} for raw output).
\end{remark*}
We tasked Rethlas (with GPT-5.4 xhigh) to give proofs of the three easier cases above and then tasked Rethlas (with GPT-5.4 xhigh and GPT-5.5 xhigh) to give a proof or a counterexample for the general case, assuming the perfect field case is already known to be true. After running for 44 minutes, Rethlas proved the following theorem, giving an affirmative answer.
\begin{theorem}
	The natural map $\mathbb{X}_\ast(G)/G(k)\rightarrow \mathbb{X}_\ast(G_K)/G(K)$ is injective.	
\end{theorem}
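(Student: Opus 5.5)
The plan is to turn the question into a descent problem for the \emph{transporter scheme} and then solve that descent using the structure theory of smooth connected affine groups over arbitrary fields: conjugacy of maximal split tori and relative Weyl groups, as in Borel--Tits and Conrad--Gabber--Prasad. Fix $\lambda,\mu\in\mathbb{X}_\ast(G)$ that become $G(K)$-conjugate. Consider the closed subscheme
\[
\mathrm{Transp}(\lambda,\mu)=\{g\in G : g\lambda g^{-1}=\mu\}\subseteq G .
\]
It has a $K$-point, so it is nonempty. It is a right torsor under the centralizer $Z_G(\lambda)$. Because $Z_G(\lambda)$ is the centralizer of a torus in a smooth connected affine group, it is smooth and connected. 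Hence $\mathrm{Transp}(\lambda,\mu)$ is a smooth $k$-scheme, so it has a $k^s$-point. This first step shows that we may replace $K$ by a separable closure $k^s$, and in particular by a finite Galois extension of $k$. What must be shown is that the class of this torsor, a class in $\ker\bigl(H^1(k,Z_G(\lambda))\to H^1(k,G)\bigr)$, is trivial. This is not automatic: Galois cohomology of $Z_G(\lambda)$ can be nontrivial, so the argument has to use more than the fact that the class dies over $k^s$.

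The key input I would use is that maximal $k$-split tori of a smooth connected affine $k$-group are $G(k)$-conjugate. This holds with no reductivity or perfectness assumption (Borel--Tits, in the form of CGP, Appendix C). The image of $\lambda$ is a $k$-split torus, so it lies in some maximal $k$-split torus $S$. Similarly the image of $\mu$ lies in some maximal $k$-split torus $S'$, and after conjugating $\mu$ by an element of $G(k)$ we may assume $S'=S$. Then $\lambda,\mu\in\mathbb{X}_\ast(S)$, and it remains to show the following statement:
\begin{center}
if $\lambda,\mu\in\mathbb{X}_\ast(S)$ are $G(k^s)$-conjugate, then they are $N_G(S)(k)$-conjugate.
\end{center}

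To prove this, take $g\in G(k^s)$ with $g\lambda g^{-1}=\mu$. Both $S$ and $gSg^{-1}$ are split tori inside $Z_G(\mu)_{k^s}$. I would apply rational conjugacy of maximal split tori inside the smooth connected group $Z_G(\mu)$. Its maximal $k$-split tori contain $S$ as a maximal one, since $S$ is maximal split in $G$. The aim is to modify $g$ by an element of $Z_G(\mu)$ so that it normalizes $S$. The delicate point is to carry this out while keeping track of Galois descent. Concretely, the cocycle $\sigma\mapsto g^{-1}\sigma(g)$ takes values in $Z_G(\lambda)(k^s)$. I want to show that it can be moved into $N_G(S)\cap Z_G(\lambda)$, and then that it becomes trivial modulo $Z_G(S)$. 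Here I would use that $W=N_G(S)/Z_G(S)$ is a finite étale group with $W(k)=N_G(S)(k)/Z_G(S)(k)$ (CGP). I would also use that the $W$-orbits on $\mathbb{X}_\ast(S)$ detect $G(k^s)$-conjugacy, by comparing with the relative root datum restricted to $S$.

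An alternative route, in case the previous step stalls, is to use the given perfect-field case. Applying it over $k^{\mathrm{perf}}$ shows that $\lambda$ and $\mu$ are conjugate over some finite purely inseparable extension $k'=k^{1/p^n}$. One then needs the purely inseparable step from $k'$ back to $k$. This step again reduces to the statement about maximal split tori above, because $k$-split tori and their cocharacters do not change under purely inseparable extension, and maximal $k$-split tori stay compatible.

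I expect the main obstacle to be the step showing that $\lambda,\mu\in\mathbb{X}_\ast(S)$ which are geometrically conjugate are conjugate under the \emph{rational} normalizer $N_G(S)(k)$. This is where non-reductive phenomena over imperfect fields (pseudo-reductive quotients, non-smooth unipotent radicals) could interfere. My first attempt would be to pass to the pseudo-reductive quotient $G/\mathscr{R}_{u,k}(G)$. The image of $S$ is isomorphic to $S$, and the $k$-unipotent radical is split-free relative to $S$. I would then use the relative root system of $(G,S)$ together with CGP's description of $W$ to finish the argument.
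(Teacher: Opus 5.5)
Your opening reductions are sound and essentially the paper's. Since $Z_G(\lambda)$ is smooth and connected, the transporter is a smooth torsor, so conjugacy over $K$ already gives $G(k_s)$-conjugacy; this even covers non-algebraic $K$. Rational conjugacy of maximal $k$-split tori then puts $\lambda,\mu$ in $X_\ast(S)$, exactly as in the paper's wound case.

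\textbf{The gap.} The statement you then isolate is essentially the whole theorem, and the proposal gives no argument for it. The one concrete move you suggest fails. $gSg^{-1}$ is only a $k_s$-torus, so $k$-rational conjugacy of maximal split tori in $Z_G(\mu)$ says nothing about it. Over $k_s$, the torus $S_{k_s}$ is not maximal in $Z_G(\mu)_{k_s}$ as soon as the $k$-rank is smaller than the absolute rank, so conjugacy of maximal tori does not apply either. The cocycle manipulation is never carried out. The assertion that $W$-orbits on $X_\ast(S)$ detect $G(k_s)$-conjugacy is a restatement of what must be proved. Your perfect-field fallback ends at the same unproved step.

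\textbf{The missing idea} is a comparison of relative and absolute Weyl groups through dominant chambers.
\begin{enumerate}
\item Take a maximal $k_s$-torus $T\supseteq S_{k_s}$. If $\mu=\operatorname{Int}(g)\circ\lambda$ with $g\in G(k_s)$, then $T$ and $gTg^{-1}$ are maximal tori of the smooth connected group $Z_G(\mu)^\circ_{k_s}$. So some $c$ there conjugates one to the other, and $cg\in N_G(T)(k_s)$ still carries $\lambda$ to $\mu$. This is the paper's geometric normalizer lemma. Hence $\lambda,\mu$ are $W(\Phi_{\mathrm{abs}})$-conjugate.
\item Choose a minimal pseudo-parabolic $P\supseteq S$ and $B\subseteq P_{k_s}$ containing $T$. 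Then every positive absolute root restricts on $S$ to $0$ or to a positive relative root (\cite[Theorem C.2.15]{Conrad_Gabber_Prasad_2015}). So elements of $X_\ast(S)$ that are ${}_k\Phi^+$-dominant are also $\Phi^+_{\mathrm{abs}}$-dominant.
\item $N_G(S)(k)$ realizes $W({}_k\Phi)$, so it moves $\lambda,\mu$ into the relative dominant chamber. There they are absolutely dominant and $W(\Phi_{\mathrm{abs}})$-conjugate, hence equal.
\end{enumerate}

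\textbf{The remark about $G/R_{u,k}(G)$ is also off.} The $k$-unipotent radical need not be ``split-free relative to $S$''; take $\mathbf{G}_a\rtimes\mathbf{G}_m$ with nonzero weight. Nor can injectivity simply be pushed down to that quotient. The paper's lifting step uses $H^1(k,U)=1$, which holds for the split part $R_{us,k}(G)$ but can fail for a wound radical. That is why the paper quotients only by $R_{us,k}(G)$ and lifts conjugacy via maximal split tori in $\mathbf{G}_m\times_{\overline G}G$. It then runs the chamber argument on the quotient, whose unipotent radical is wound.

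Because you place $\lambda,\mu$ in $X_\ast(S)$ inside $G$ itself, using the pseudo-reductive quotient only to define ${}_k\Phi$ might let you skip that reduction. That works only once the dominance comparison above is actually supplied.
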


For comparison, we tasked GPT-5.5 Pro, accessed through the web interface, the same way. It did not produce a meaningful solution; see Appendix~\ref{app:gpt5.5pro-output}. In contrast, Rethlas with GPT-5.4 xhigh produced an overall plausible solution, while Rethlas with GPT-5.5 xhigh produced a correct solution (except that, due to copyright constraints, it replaced a direct reference to \cite[Theorem C.2.15]{Conrad_Gabber_Prasad_2015} with a reference to the survey \cite[Theorem 5.3.2(iv)]{conrad_prasad2017pseudoreductive}, whose proof refers back to the same theorem but whose statement does not contain the required assertion.).

\textbf{Overview of the mathematical proof.}\label{sec: overview of brian's problem}

Rethlas factored the proof into five lemmas and a proof of the main theorem. The detailed proof discovered by Rethlas, using GPT-5.5 xhigh, is presented in Appendix~\ref{app:math_proof_brian}. We give only a brief sketch here and highlight the main points. \par

The logical structure is as follows:
\begin{enumerate}
\item \textbf{Reduce to the quasi-reductive group case}. Rethlas achieved this by proving the following lemmas. Let $U=R_{us,k}(G)$ be the maximal-split smooth connected unipotent normal subgroup of $G$. Then
\begin{enumerate}
\item The Galois cohomology set is trivial: $H^1(k,U)=1$.
\item The conjugacy of maximal split $k$-tori \cite[Theorem 4.2.9]{conrad_prasad2017pseudoreductive}, together with (a), shows that injectivity for $\overline{G}:=G/U$ implies injectivity for $G$. 
\item Then \cite[Corollary 3.12]{conrad2015solvable} implies that $\overline{G}$ is quasi-reductive, i.e. its $k$-unipotent radical is $k$-wound.
\end{enumerate}
\item \textbf{Prove the quasi-reductive group case}. This is the most technical part.
\begin{enumerate}
	\item Fix a maximal $k$-split torus $S$ of $G$. By \cite[Theorem 4.2.9, Proposition 5.3.1]{conrad_prasad2017pseudoreductive}, $\mathbb{X}_\ast(G)/G(k)$ is in bijection with $\mathbb{X}_\ast(S)/W(G,S)$, where $W(G,S) = N_G(S)/Z_G(S)$ is the relative Weyl group and $N_G(S)$ (resp., $Z_G(S)$) denotes the normalizer (resp., centralizer) of $S$.
	\item Similarly, after fixing a maximal $k$-torus $T$ of $G$ containing $S$, one obtains a bijection between $\mathbb{X}_\ast(G_{k_s})/G(k_s)$ and $\mathbb{X}_\ast(T_{k_s})/W(G_{k_s},T_{k_s})$.
	\item The relationship between the relative Weyl group $W(G,S)$ and the absolute Weyl group $W(G_{k_s},T_{k_s})$ is studied through the relationship between relative and absolute root systems. 
		\begin{itemize}
			\item The key ingredient is that the restriction map from absolute roots $\Phi_{\textrm{abs}}$ to relative roots ${}_k\Phi\cup\{0\}$ carries a root basis $\Delta_{\textrm{abs}}^+$ surjectively onto a root basis ${}_k\Delta_+$ or ${}_k\Delta_+\cup\{0\}$.
        \end{itemize}
\end{enumerate}
\end{enumerate}
It is worth mentioning that one of the key insights in this proof is to first reduce to the quasi-reductive case and then prove that case. The concept of quasi-reductive groups and their structure theory are developed in \cite{Conrad_Gabber_Prasad_2015}, but this reference is copyrighted and was not accessible to Rethlas with GPT-5.5 xhigh. 
However, Rethlas found the survey \cite{conrad_prasad2017pseudoreductive} of \cite{Conrad_Gabber_Prasad_2015}, and still identified the correct reduction to the quasi-reductive group case, which it called the ``wound unipotent case.'' The key ingredient in Step 2 is contained in \cite[Theorem C.2.15]{Conrad_Gabber_Prasad_2015}, to which Rethlas did not have access. Instead, it cited \cite[Theorem 5.3.2(iv)]{conrad_prasad2017pseudoreductive} (a survey article of \cite{Conrad_Gabber_Prasad_2015}); the proof of \cite[Theorem 5.3.2(iv)]{conrad_prasad2017pseudoreductive} in \textit{loc.cit.} refers back to \cite[Theorem C.2.15]{Conrad_Gabber_Prasad_2015} but the statement of \cite[Theorem 5.3.2(iv)]{conrad_prasad2017pseudoreductive} is partial and does not contain the required assertion.

\textbf{Rethlas's discovery process.}\label{sec: process of brian's problem}

In this section, we present Rethlas’s discovery process for the rational conjugacy problem for one-parameter subgroups, illustrated in \Cref{fig:rethlas_exploration_bc}.

\begin{figure}[hptb]
\centering
\includegraphics[width=0.9\linewidth]{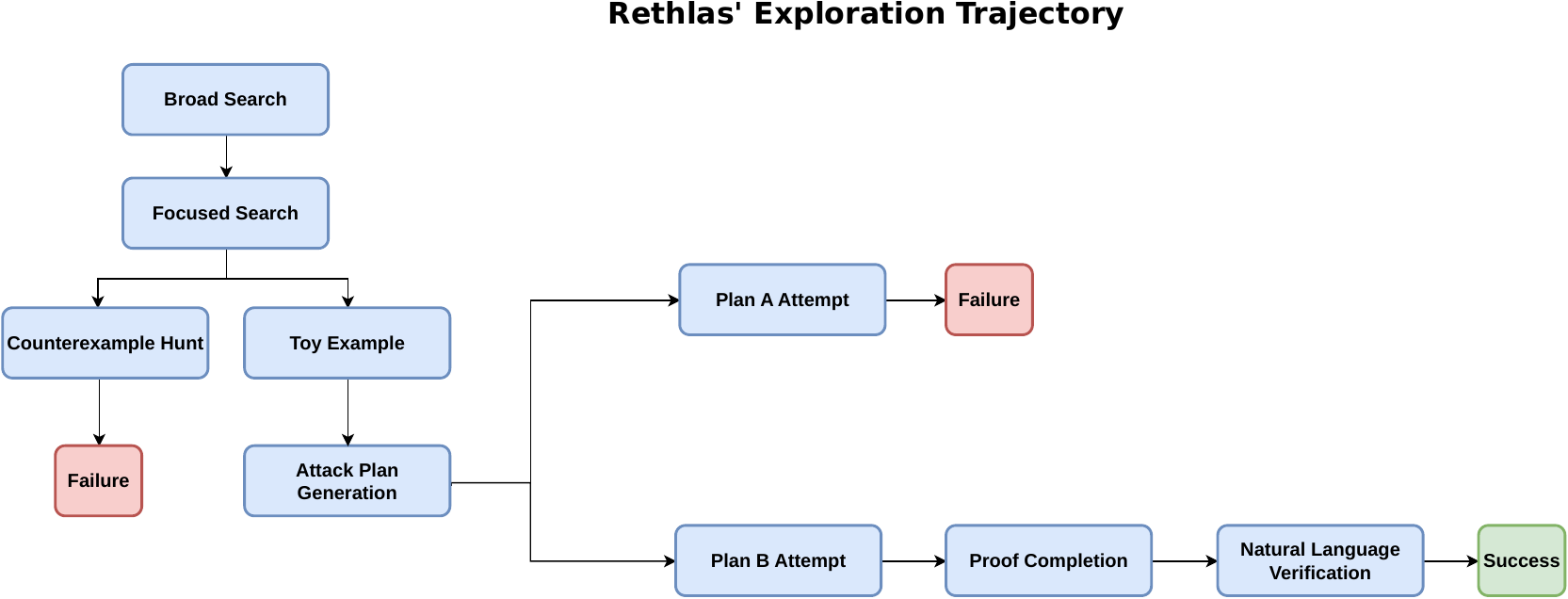}
\caption{Illustration of Rethlas' exploration trajectory on the algebraic group problem.}
\label{fig:rethlas_exploration_bc}
\end{figure}
\begin{enumerate}[leftmargin=*]
\item \textbf{Broad search.} 
    Rethlas conducted a broad search, filtered out weak references that assumed reductive groups or characteristic zero, and located relevant technical resources on algebraic groups, including \cite[Theorem 3.1.3]{lourencco2019grassmanniennes}, Brian Conrad’s answer to “Conjugate cocharacters in a maximal torus” on MathOverflow\footnote{\url{https://mathoverflow.net/questions/29073/conjugate-cocharacters-in-a-maximal-torus}}, \cite{conrad2015solvable}, and \cite{conrad_prasad2017pseudoreductive}. Rethlas also found \cite{hain2010remarks} and \cite{Rosengarten_2021}, aiming to use cohomological obstructions or pathological behavior of unipotent groups to find counterexamples.
\item \textbf{Focused search} 
	\begin{itemize}
    \item Rethlas searched for the structure of solvable groups and found \cite[Corollary 3.12, Corollary 4.4, Theorem 5.4, Corollary 5.12]{conrad2015solvable}.
    \item Rethlas searched for the Borel--Tits theorem and found \cite[Theorem 4.2.9]{conrad_prasad2017pseudoreductive} and \cite[Theorem 3.1.3]{lourencco2019grassmanniennes}. The two theorems are identical, and Rethlas decided to cite \cite{conrad_prasad2017pseudoreductive}.
    \item Rethlas searched for a key ingredient, namely the relation between absolute and relative roots, in particular \cite[Theorem 5.3.2]{conrad_prasad2017pseudoreductive}, and failed to download \cite{Conrad_Gabber_Prasad_2015} due to copyright restrictions.
\end{itemize}
	\item \textbf{Counterexample hunt and toy examples.} Rethlas failed to find a counterexample after a literature search and a study of cocharacters. In parallel, it analyzed the toy model
    \(G=\mathbb G_a\rtimes\mathbb G_m\), where \(\mathbb G_m\) acts on
    \(\mathbb G_a\) with positive weight $n$. Injectivity holds in this example. Rethlas therefore gave up the search for a counterexample and focused instead on proving injectivity.

    \item \textbf{Plan generation.} With these observations in memory, Rethlas
    proposed two main routes. 
  \begin{itemize}
	  \item Plan A (\verb|full_pseudo_reductive_quotient plan|, 7 lemmas and main theorem):
		 \begin{enumerate}
        \item Reduce to \(\overline{G} := G/R_{us,k}(G)\), by quotienting out the maximal \(k\)-split smooth connected unipotent normal subgroup of \(G\). This step uses two lemmas.
        \item Reduce to \(G_2=\overline{G}/R_{s,k}(\overline{G})\), where \(R_{s,k}(\overline{G})\) is the maximal central \(k\)-split torus, by \cite[Corollary 5.12]{conrad2015solvable}. This step uses two lemmas.
        \item Reduce to \(G_3=G_2/R_k(G_2)\), which is a pseudo-reductive group. This step is carried out in the proof of the main theorem.
        \item Prove the pseudo-reductive group case by studying relative roots and the relative Weyl group. This step uses three lemmas.
    \end{enumerate}

\item Plan B (\verb|split-then-wound plan|, 5 lemmas and main theorem): 
	\begin{enumerate}
        \item Reduce to the quasi-reductive case, which Rethlas called the wound unipotent case:
        \[
        \overline{G} := G/R_{us,k}(G),
        \]
        by quotienting out the maximal \(k\)-split smooth connected unipotent normal subgroup of \(G\). This step uses two lemmas.
        \item Prove the quasi-reductive case using relative roots and the relative Weyl group. This step uses three lemmas and the main theorem.
    \end{enumerate}
\end{itemize}
\item \textbf{Attempts of Plan A and B}
	\begin{itemize}
		\item Rethlas proved reduction to $\overline{G}$ by considering $H^1(k,U)$ and $\mathrm{GL}_1\times_{G/U}G$.
		\item Rethlas realized that the reduction to $G_2$ in Plan A might be tricky and recorded Plan A as a failure. Rethlas gave the following reason: "Unnecessary and riskier than the selected split-then-wound plan; central split-torus quotient step contains a potential torus-shear gap if written naively." As evidence, it noted: "Automorphisms of a split torus extension can act trivially on kernel and quotient while adding homomorphisms from quotient lattice to kernel lattice."
	\item For Plan B, Rethlas used information from the toy example above, realized that CGP relative root theory was the key ingredient, and finished all technical subgoals.
	\end{itemize}
   
    \item \textbf{Assembly and verification.} Rethlas worked out the remaining proof details and produced a Markdown file that clearly stated all cited theorems and the logical steps of the argument. The natural-language verifier accepted the proof, yielding the final output.
\end{enumerate}

\subsubsection{Rethlas for Mathematical Discovery in \texorpdfstring{$p$}{p}-adic Hodge Theory}
\label{subsubsec:rethlas-padic-hodge}

We present an example in which Rethlas assisted active research in $p$-adic Hodge theory; for a detailed mathematical exposition, we refer the reader to \cite{pan2026lift}. In mathematical practice, formulating the correct statement is often not much easier than proving it, and the two tasks are typically accomplished together.

\paragraph{Mathematical setup} Let $p$ be a prime and let $X$ be a proper smooth rigid curve over $C = \mathbb{C}_p$ with a smooth formal model $\mathfrak{X}/\mathcal{O}_C$. Fix a section $\exp : C \to 1 + \mathfrak{m}_C$ of the $p$-adic logarithm. The integral $p$-adic Simpson correspondence \cite{abbes2016p, min2024integral, anschutz2023small, sheng2024small} 
is an exact tensor equivalence
$$S_{\widetilde{\mathfrak{X}}} : \{\text{Hitchin-small } \widehat{\mathcal{O}}_X^+ \text{-bundles on } X_v\} \xrightarrow{\simeq} \{\text{Hitchin-small Higgs bundles on } \mathfrak{X}_{\textnormal{\'et}}\},$$
depending on a choice of a flat lifting $\widetilde{\mathfrak{X}}$ of $\mathfrak{X}$ over $A_2 = A_{\textnormal{inf}}^+/\xi^2$. A Higgs bundle $(H, \theta)$ is called \emph{lift-independent} if for any two such liftings $\widetilde{\mathfrak{X}}_1$, $\widetilde{\mathfrak{X}}_2$, there exists an isomorphism $S_{\widetilde{\mathfrak{X}}_1} (S_{\widetilde{\mathfrak{X}}_2}^{-1}(H, \theta)) \cong (H, \theta)$. A trivial observation is that Higgs bundles with zero Higgs field ($\theta = 0$) are lift-independent. A natural question is whether this is the only case: \emph{does every lift-independent Hitchin-small Higgs bundle have zero Higgs field?}

Unfortunately, this statement does not hold in general, though explicit calculations in small rank suggest it is true in most cases. The key is to find the correct formulation to prove. A key human insight replaces the lift-independence condition with a weaker but more explicit condition called \emph{cohomological lift-independence}, which simplifies the problem. Mathematicians first posed the following conjecture to Rethlas.
\begin{conjecture*}
\label{conj:padic-hodge-initial}
Suppose $\mathfrak{X}$ has genus $g \ge 2$. Let $(H, \theta)$ be a semistable, nilpotent Hitchin-small Higgs bundle of rank $r$ with $g \gg r$. If $(H, \theta)$ is cohomologically lift-independent, then $\theta = 0$.
\end{conjecture*}
All conditions added in the above conjecture are mathematically motivated. Semistability originates from the classical Simpson correspondence over the complex numbers. The assumptions $g \ge 2$ and $g \gg r$ are inspired by explicit calculations in small rank examples. Nilpotency is added to simplify the problem. This is an initial guess based on insights from mathematicians rather than a polished statement.

Rethlas produced a proof that does not use the nilpotency condition and replaces $g \gg r$ with the explicit bound $g \ge r^2 + 1$, thereby establishing the following strengthened theorem. The proof, together with its strengthened form, provides a partial explanation of why the condition $g \gg r$ is needed, which was previously unknown.
\begin{theorem}
\label{thm:padic-hodge-sharpened}
Suppose $\mathfrak{X}$ has genus $g \ge 2$. Let $(H, \theta)$ be a semistable Hitchin-small Higgs bundle of rank $r$ with $g \ge r^2 + 1$. If $(H, \theta)$ is cohomologically lift-independent, then $\theta = 0$.
\end{theorem}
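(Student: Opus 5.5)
I would first make ``cohomological lift-independence'' explicit. Two flat liftings $\widetilde{\mathfrak{X}}_1,\widetilde{\mathfrak{X}}_2$ over $A_2$ differ by a class $c\in H^1(\mathfrak{X},T_{\mathfrak{X}})\otimes(\xi)/(\xi^2)$, and the composite $S_{\widetilde{\mathfrak{X}}_1}\circ S_{\widetilde{\mathfrak{X}}_2}^{-1}$ should be twisting by the exponential of the contraction of $c$ with $\theta$. At the infinitesimal (cohomological) level I expect the condition to become the following. For every $c\in H^1(X,T_X)$, the class
\[
c\cdot\theta \in H^1(X,\operatorname{End}(H))
\]
obtained by contracting $c$ with $\theta\in H^0(X,\operatorname{End}(H)\otimes\Omega^1_X)$ is trivial in the relevant hypercohomology $\mathbb{H}^1$ of the deformation complex $[\operatorname{End}(H)\xrightarrow{[\theta,-]}\operatorname{End}(H)\otimes\Omega^1]$. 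Equivalently, it lies in the image of $[\theta,-]$ on $H^0$-type data. I would take this as the working definition and reduce the theorem to a linear-algebra statement about the pairing
\[
H^1(T_X)\otimes H^0(\operatorname{End}(H)\otimes\Omega^1)\to H^1(\operatorname{End}(H)).
\]

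Next I would apply Serre duality. The map $c\mapsto c\cdot\theta$ is dual to the multiplication map $H^0(\operatorname{End}(H))\to H^0(\operatorname{End}(H)\otimes\Omega^1)$ sending $\phi\mapsto \operatorname{tr}$-paired with $\theta$. More precisely, $H^1(T_X)^\vee=H^0(\Omega^{\otimes 2})$, and vanishing of all $c\cdot\theta$ modulo the image of $[\theta,-]$ amounts to saying that for every $\psi$ in a suitable $H^0(\operatorname{End}(H)\otimes\Omega^1)$ commuting appropriately with $\theta$, the quadratic differential $\operatorname{tr}(\theta\psi)\in H^0(\Omega^{\otimes2})$ vanishes. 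Taking $\psi=\theta$ (allowed since $[\theta,\theta]=0$) gives $\operatorname{tr}(\theta^2)=0$, and similarly $\operatorname{tr}(\theta\cdot\theta^{k-1}\eta)=0$ for $\eta$ in the centralizer.

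Next I use semistability to bound the centralizer. For semistable $(H,\theta)$, the $\theta$-equivariant endomorphisms form a finite-dimensional algebra of dimension at most $r^2$. I would use this to show that the space of pairings needed to kill all of $H^0(\Omega^{\otimes 2})$, which has dimension $3g-3$, is too small unless $\theta=0$. The idea is to consider the linear map
\[
H^0(\operatorname{End}(H)\otimes\Omega^1)^{\theta}\to H^0(\Omega^{\otimes 2}),\qquad \psi\mapsto\operatorname{tr}(\theta\psi),
\]
and compare its rank with a dimension count. The kernel condition forced by lift-independence, combined with $g\ge r^2+1$, will be shown to force the spectral data of $\theta$ to vanish: first the characteristic coefficients $\operatorname{tr}(\theta^k)$, and then $\theta$ itself.

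The main obstacle is the nilpotent case, since there all traces vanish automatically. Here I would use the Jordan filtration $\ker\theta\subset\ker\theta^2\subset\cdots$. Semistability gives slope inequalities on the graded pieces, which make $\theta:\mathrm{gr}_i\to\mathrm{gr}_{i-1}\otimes\Omega^1$ injective-ish maps between bundles of controlled degree. Lift-independence then implies that the corresponding class in $H^1(\operatorname{Hom}(\mathrm{gr}_i,\mathrm{gr}_{i-1}))$ is killed for all $c$. This yields a nonzero map into a bundle whose sections would have to annihilate the $(3g-3)$-dimensional space of quadratic differentials. Using Riemann--Roch with $r^2$ bounding the ranks of the Hom bundles, $g\ge r^2+1$ should make this impossible. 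The result is that $\theta=0$, which also explains why nilpotency is unnecessary.
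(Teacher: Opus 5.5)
Your linearization is a reasonable start, but the step that must use $g\ge r^2+1$ is only announced ("will be shown", "should make this impossible"), and the routes you sketch toward it fail.

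\textbf{Spectral route.} Under your working definition, the hypothesis is simply $c\cdot\theta=0$ in $H^1(X,\mathcal A)$ with $\mathcal A=\ker(\operatorname{ad}\theta)\subset\operatorname{End}(H)$. This is because $H^1(X,\mathcal A)\to\mathbb H^1$ is injective. (Your alternative phrasing, ``image of $[\theta,-]$ on $H^0$-type data'', is not equivalent to this.) Serre duality therefore tests $\theta$ only against global sections $\psi$ of $\mathcal A^\vee\otimes\Omega^1$, a quotient of $\operatorname{End}(H)\otimes\Omega^1$, through the quadratic differential $\operatorname{tr}(\theta\psi)$. This yields $\operatorname{tr}\theta=0$ and $\operatorname{tr}\theta^2=0$, and nothing further on the spectral side. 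Your test vectors $\theta^{k-1}\eta$ are sections of $\operatorname{End}(H)\otimes\Omega^{\otimes(k-1)}$, and $\operatorname{tr}(\theta^k\eta)$ is a $k$-differential. So for $k\ge3$ they are not admissible, and for $r\ge 3$ the characteristic polynomial is not controlled.

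\textbf{Dimension count.} This looks at the wrong space. In $H^0(\mathcal A)\to H^0(\mathcal A/\theta(T_X))\to H^1(T_X)\to H^1(\mathcal A)$, the classes that die are the image of the connecting map out of $H^0(\mathcal A/\theta(T_X))$. The bound $\dim H^0(\mathcal A)\le r^2$ is therefore irrelevant. Riemann--Roch only gives lower bounds on $h^0$, so it cannot produce the contradiction.

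\textbf{Nilpotent case.} The reduction is invalid as stated. The symbol map to $\operatorname{Hom}(\mathrm{gr}_i,\mathrm{gr}_{i-1})$ is defined only on the subsheaf $\mathcal A_{-1}\subset\mathcal A$ of filtration-lowering endomorphisms; on all of $\mathcal A$ the graded map lands in $\bigoplus\operatorname{End}(\mathrm{gr}_i)$ and kills $\theta(v)$. Vanishing in $H^1(\mathcal A)$ does not imply vanishing in $H^1(\mathcal A_{-1})$.

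\textbf{The missing idea} is an \emph{upper} bound on sections of the quotient, and this is where semistability and $r^2$ enter. Here is one way to get it under your definition. Suppose $\theta\neq0$, and let $T_X(D)\subset\mathcal A$ be the saturation of $\theta(T_X)$.
\begin{enumerate}
\item Since $H^1(T_X)\to H^1(T_X(D))$ is onto, the hypothesis kills $H^1(T_X(D))\to H^1(\mathcal A)$. Hence $h^0(\mathcal A/T_X(D))\ge h^1(T_X(D))=h^0(\Omega^{\otimes2}(-D))\ge 3g-3-\deg D$.
\item Every subsheaf of $\mathcal A$ is $\operatorname{ad}\theta$-invariant. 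Semistability of the degree-zero Higgs bundle $(\operatorname{End}(H),\operatorname{ad}\theta)$, which holds in characteristic $0$, then gives $\deg D\le 2g-2$. It also gives that every subsheaf of $\mathcal A/T_X(D)$ has degree at most $2g-2-\deg D$.
\item Apply Clifford's inequality for semistable bundles to the Harder--Narasimhan factors of nonnegative slope. This gives $h^0(\mathcal A/T_X(D))\le (r^2-1)+(g-1)-\tfrac12\deg D$.
\item Comparing the two bounds gives $g\le r^2$, which $g\ge r^2+1$ excludes. No trace or nilpotency analysis is needed.
\end{enumerate}
The paper only states this theorem and refers to the companion article for the proof and for the precise notion of cohomological lift-independence. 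You should check your working definition against that notion before relying on the reduction.
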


Mathematicians also asked whether the condition $g \gg r$ can be removed. Rethlas found several counterexamples to the original conjecture when this condition is dropped. Below is one particularly interesting counterexample, derived from Landesman–Litt's solution \cite{landesman2024prill}
to Prill's problem in complex algebraic geometry. This counterexample might have taken mathematicians years to discover, as it originates from a distant topic.
\begin{proposition}
\label{prop:padic-hodge-counterexample}
For sufficiently large $p$, there exists a proper smooth rigid curve $X$ over $C = \mathbb{C}_p$ with a smooth formal model $\mathfrak{X}/\mathcal{O}_C$ of genus $g=2$, and a lift-independent Hitchin-small Higgs bundle $(H, \theta)$ of rank $r = 36$ whose Higgs field $\theta$ is nonzero (and not even nilpotent).
\end{proposition}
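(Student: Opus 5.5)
The plan is to turn Landesman--Litt's answer to Prill's problem into a Higgs bundle whose Higgs field is nonzero but which is nevertheless fixed by every change of lifting. Their result gives a genus-2 curve $Y$ with a finite étale (or branched) cover $f\colon Z\to Y$ of degree $d$ such that, for every point $y\in Y$, the fibre $f^{-1}(y)$ moves in a positive-dimensional linear system on $Z$. Equivalently, $h^0(Z,\mathcal{O}_Z(f^{-1}(y)))\ge 2$; the degrees involved are tied to the rank $36$.

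The working principle is that the change-of-lifting operator $S_{\widetilde{\mathfrak X}_1}\circ S_{\widetilde{\mathfrak X}_2}^{-1}$ twists a Higgs bundle by the difference class $\delta\in H^1(\mathfrak X,T_{\mathfrak X})\otimes(\text{twist})$. Concretely, it sends $(H,\theta)$ to $(H,\theta)$ glued via $\exp(\delta\cdot\theta)$. So lift-independence of $(H,\theta)$ amounts to the gauge-equivalence of these twisted objects for all $\delta$.

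For Higgs bundles coming from the spectral (BNR) correspondence, the twist by $\delta\cdot\theta$ should act as tensoring the spectral sheaf by the line bundle $\exp(\delta|_{\text{spectral curve}})$. Here $\delta$ is restricted via the tangent direction, as a deformation of the spectral line bundle. Lift-independence then reduces to showing that the relevant line bundle on the spectral curve is torsion or trivial, or more flexibly that the pushforward is isomorphic after the twist.

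The construction proceeds in the following steps.
\begin{enumerate}
\item Take the Landesman--Litt cover $Z\to Y$ over a number field, embed it in $\mathbb{C}_p$, and, for $p$ large, choose good reduction so that there is a smooth formal model $\mathfrak X$ of $X=Y_{\mathbb{C}_p}$ of genus $2$.
\item Build $(H,\theta)$ as a pushforward $f_*(L,\phi)$, where $\phi$ is a small section of $f^*\Omega_Y$ or a spectral datum on $Z$. The result has rank equal to $\deg f$ times the rank on $Z$, which gives $36$, and $\theta$ is non-nilpotent because its characteristic polynomial is nonzero.
\item Show that Hitchin-smallness holds after scaling $\theta$ by a power of $p$.
\item Prove lift-independence by using the Landesman--Litt property: the deformation classes in $H^1(T)$ map to zero, or to a trivial twist, on the relevant $f_*$ data, because fibres move in linear systems.
\end{enumerate}

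The main obstacle is step 4. It requires an exact formula for how the Simpson functor depends on the lift, which the literature on integral $p$-adic Simpson correspondences provides as twisting by the torsor of liftings. It also requires matching that twist with the geometric rigidity from Prill's problem. I would try first to compute the effect on pushforward Higgs bundles through functoriality of $S$ under finite étale maps, and then use the vanishing statement implicit in Landesman--Litt to kill the twist. The condition that $p$ be large is used to ensure good reduction, smallness, and that $\exp$ converges on the needed classes.
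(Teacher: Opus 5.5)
\textbf{There is a genuine gap in Step~4: the Prill pencil property does not control the change-of-lift twist.} Your overall architecture is reasonable. You push forward a rank-one Higgs bundle $(\mathcal O_Z,\omega)$ along a finite \'etale cover $f\colon Z\to Y$ of a genus-$2$ curve. A lift of $\mathfrak Y$ induces a lift of $\mathfrak Z$ with difference class $f^*\delta$. Changing the lift replaces $(L,\omega)$ by $(L\otimes\exp(f^*\delta\cup\omega),\omega)$. But the property you invoke, that every fibre $f^{-1}(y)$ moves in a pencil, says nothing about these twists.

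What you actually need is $f^*\delta\cup\omega=0$ in $H^1(Z,\mathcal O_Z)$ for \emph{every} $\delta\in H^1(Y,T_Y)$. ``Torsion'' is not enough: $\delta$ ranges over a module, and a nonzero class $c$ gives twists $\exp(tc)$ that are nontrivial for small $t$. By Serre duality on $Z$ and on $Y$, the condition is equivalent to $\operatorname{tr}_{Z/Y}(\omega\eta)=0$ in $H^0(Y,\omega_Y^{\otimes 2})$ for all $\eta\in H^0(Z,\Omega^1_Z)$. In other words, $\omega$ is killed by the derivative of the period map of $H^1(Z)$ in every direction coming from deformations of $Y$.

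This condition implies the pencil property. All values $\eta(y)$ lie in the trace-orthogonal of $\omega(y)$ inside $(\omega_Y\otimes f_*\mathcal O_Z)_y$, so evaluation on $f^{-1}(y)$ is not surjective, and Riemann--Roch gives $h^0\ge 2$. The converse fails. The pencil property only says that the subsheaf $F\subset\omega_Y\otimes f_*\mathcal O_Z$ generated by global sections has nonzero trace-orthogonal $F^\perp$. You need $F^\perp$ to have a nonzero \emph{global} section, and that section is your $\omega$. So ``because fibres move in linear systems'' cannot be why the twist dies. Step~2 is correspondingly underdetermined: a generic $\phi\in H^0(Z,f^*\Omega^1_Y)$ will not work.

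\textbf{How to close the gap.} You must take from Landesman--Litt the stronger statement underlying their example. This is a nonzero $\omega\in H^0(Z,\Omega^1_Z)$ lying in an isotrivial piece of the variation of Hodge structure $H^1(Z)$ over the moduli of $Y$, coming from Markovi\'c's genus-$2$ failure of Putman--Wieland; such an $\omega$ is killed by every $f^*\delta$. Then take $(H,\theta)=f_*(\mathcal O_Z,p^N\omega)$, so that $r=\deg f$. With that input the rest of your outline goes through:
\begin{itemize}
\item The condition is linear, so rescaling for Hitchin-smallness preserves it.
\item The condition is algebraic, so it can be arranged over a number field.
\item For large $p$ the cover has good reduction with $p\nmid\deg f$. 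Hence $f$ extends to a finite \'etale map of smooth formal models, and lifts correspond.
\item $H^1(\mathfrak Z,\mathcal O_{\mathfrak Z})$ is torsion-free, so the vanishing holds integrally and the twisting line bundle is trivial.
\item $\theta$ is not nilpotent, because its eigenvalues over $y$ are the values of $\omega$ on $f^{-1}(y)$.
\end{itemize}
You should also state explicitly the compatibility $S_{\widetilde{\mathfrak X}}\circ f_*\cong f_*\circ S_{\widetilde{\mathfrak Z}}$ for finite \'etale $f$ and compatible lifts, since the whole reduction rests on it.
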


To conclude, in this example Rethlas assisted mathematicians with mathematical exploration and discovery in an active area of $p$-adic Hodge theory. The mathematicians proposed an insightful but immature conjecture; Rethlas not only provided a proof but also strengthened the statement by making conditions more explicit and exposing redundant hypotheses. Rethlas further drew a cross-domain counterexample from a distant topic to test the necessity of the assumptions. In this way, Rethlas collaborated with human mathematicians to simultaneously explore the correct formulation of the statement and find the proof, helping mathematicians gain a better understanding of the mathematics.

\subsection{Archon on Research-Level Formalization Tasks}\label{subsec:archon-capabilities}
We now turn to the capabilities of Archon, drawing on the Anderson formalization above together with our prior validation on the FirstProof benchmark~\cite{abouzaid2026first}. \Cref{subsubsec:archon-extra-results} presents additional formalization results that illustrate the scope of problems Archon can handle beyond Anderson's open problem. \Cref{subsubsec:archon-cap-list} synthesizes the notable mathematical capabilities of Archon observed across these projects. \Cref{subsubsec:archon-limitations} discusses recurring limitations and directions for further refinement.

\subsubsection{Additional formalization results}\label{subsubsec:archon-extra-results}

FirstProof~\cite{abouzaid2026first} is a benchmark of ten research-level mathematical problems whose reference solutions were not publicly available at the time of evaluation, mitigating concerns about data contamination. Two prior efforts published submissions on this benchmark: Google's Aletheia~\cite{feng2026aletheia} and OpenAI's first-proof technical report~\cite{openai_first_proof_2026}. From the ten problems, we selected Problems~4 and~6 for formalization based on two considerations: both sit within Mathlib's existing infrastructure to a degree that makes a focused formalization effort feasible, and both are proof-heavy problems whose core difficulty lies in symbolic reasoning rather than in algorithmic design or in discovering the right construction---a profile well suited to current formalization workflows. The remaining eight problems were excluded on one of these grounds: Problems~1, 2, 3, 5, and~7 require substantial domain-specific Mathlib infrastructure (stochastic PDEs, representation theory, equivariant homotopy theory, and symplectic geometry, among other topics) that is not yet in place, while Problems~8, 9, and~10 are discovery-heavy in the sense that a substantial part of the difficulty lies in algorithmic design or in identifying the right construction rather than in proof-heavy symbolic reasoning. Since OpenAI's results included these two problems, we ultimately chose them as our initial informal-language data.

The experimental setup for both problems was kept deliberately lightweight. A single human mathematician oversaw the runs, reviewing the agent's progress reports and intermediate Lean code approximately once per 24-hour cycle for roughly twenty minutes each time; no mathematical intervention was provided beyond the single one-sentence hint for Problem~4 described below. After Archon reported completion, we verified each formalization in two stages: an automated check that the compiled artifact is free of \texttt{sorry}, \texttt{axiom}, and other escape hatches, followed by a brief manual review (around five minutes per problem) of the top-level theorem statements and the key definitions on which they depend. Because Lean's kernel guarantees the remainder of the proof once these statements are accepted, this targeted review suffices to certify correctness; the same statement-verification methodology is discussed in more detail in \Cref{appendix:szm_comparator}.


Archon's formalization of Problem~6 was fully autonomous, while Problem~4 required a single one-sentence natural-language hint from a human supervisor at one obligation where Mathlib lacks the requisite analytic infrastructure (Appendix~\ref{subsubsec:firstproof4-vieta}); no other mathematical intervention occurred in either project. The full source code is available online.\footnote{The complete formalization is open-sourced at \url{https://github.com/frenzymath/Archon-FirstProof-Results}.} Problem~4 was completed in approximately 50~hours of agent runtime and Problem~6 in 30~hours; the two runs were carried out on per-call API usage, costing approximately \$1{,}200 and \$750 respectively, with all calls dispatched to Claude Opus 4.6.

\subsubsection{Notable mathematical capabilities}\label{subsubsec:archon-cap-list}

We highlight three categories of notable behavior observed during the Anderson formalization; analogous patterns also appeared in the FirstProof projects. Detailed mathematical examples are provided in Appendix~\ref{appendix:detailed-examples}.

\textbf{Autonomous gap-filling.} Archon reliably fills gaps that the informal proof leaves implicit, ranging from missing sub-arguments to entire constructions that the source literature describes only schematically. For instance, when the informal proof asserts an isomorphism without proving injectivity, Archon independently constructs the required argument at the coefficient level (Appendix~\ref{subsubsec:gap-filling}). At a larger scale, the transfinite recursive construction of the local UFD, described in the source papers only as ``recursively define $\{R_\alpha\}$ and take unions at limit ordinals'', required Archon to design a bundled record type, implement well-founded recursion on ordinals, and verify that algebraic invariants propagate through both successor and limit stages (Appendix~\ref{subsubsec:transfinite}).

\textbf{Self-diagnosis and cross-session refactoring.} When Archon's initial approach to the transfinite construction, based on Zorn's lemma and an unjustified countability assumption, failed, it diagnosed the root cause without human guidance: Zorn's lemma provides only existence without the fine-grained control the construction demands, and the countability assumption is not valid in ZFC without the Continuum Hypothesis. Archon then carried out a large-scale refactoring across multiple sessions, replacing the approach with explicit well-founded recursion and general cardinal arithmetic (Appendix~\ref{subsubsec:self-diagnosis} and~\ref{subsubsec:refactoring}).

\textbf{Discovery of alternative proof strategies.} When the reference proof relies on mathematical infrastructure absent from Mathlib, Archon can sometimes find alternative routes that bypass the gap entirely. In this project, a key lemma originally proved via Krull domain theory, which has no Mathlib formalization, was instead established using Kaplansky's criterion, a characterization that Archon identified independently and that does not explicitly appear in any of the reference papers used in this project (Appendix~\ref{subsubsec:alternative-strategy}). A similar episode arose in the formalization of FirstProof Problem~4: confronted with an application of the residue theorem to a rational function with polynomial numerator and denominator---whose standard formal proof requires toy contour theory, contour integration, and a formal notion of the interior of a closed curve, infrastructure not yet present in Mathlib---Archon recognized that the desired identity follows from the classical Euler--Jacobi formula and discharged the obligation through purely algebraic means (Appendix~\ref{subsubsec:firstproof4-residue}).

\subsubsection{Observed limitations}\label{subsubsec:archon-limitations}

We also observe several recurring limitations. These are tendencies rather than invariable behaviors, since Archon occasionally performs well in each area, but they arose frequently enough to merit discussion.

\textbf{Overthinking in the face of proof difficulties.} When confronted with proof obligations where the informal reference contains gaps or where Mathlib lacks the requisite infrastructure, Archon sometimes spends time deferring the obligation or searching for tangentially related material before committing to a direct attempt. While this behavior does not prevent eventual progress---Archon ultimately completed all obligations in our experiments---it reduces efficiency. We note a mild concern that for substantially more complex projects, such delays could accumulate and become a more serious bottleneck. In practice, providing targeted prompt-level guidance, such as instructing the agent to persist on a specific obligation, noticeably improves throughput. However, rigidly requiring Archon to follow the original proof is not always optimal: as illustrated by its independent discovery of the Kaplansky criterion route (Appendix~\ref{subsubsec:alternative-strategy}), the agent can sometimes identify simpler alternatives on its own. The most effective intervention is for a human formalization expert to provide case-by-case guidance, leveraging their judgment of which proof route is more tractable given the available Mathlib infrastructure.

\textbf{Code quality and Mathlib conventions.} Archon tends to produce proofs that are structurally verbose, relying heavily on \texttt{have} statements that mirror the step-by-step flow of natural-language reasoning. Individual proof blocks often span hundreds of lines without taking advantage of more idiomatic Lean tactics or term-mode constructions. When Archon extracts auxiliary lemmas to manage complexity, the results tend to be direct extractions of subgoals from the proof state---lightly simplified but not generalized into broadly applicable results. The naming conventions and statement styles also remain at a noticeable distance from Mathlib standards: theorem names do not consistently follow dot-notation patterns, and statement formulations sometimes differ from canonical forms. While none of these issues affect correctness, bringing the codebase into alignment with Mathlib conventions for potential upstream contribution would require non-trivial human effort.

\section{Conclusion}\label{sec:concl}
In this paper, we propose an automated framework that integrates natural language reasoning with formal verification to tackle research-level mathematical problems. The framework consists of two agents. The first is a natural language reasoning agent that mimics the workflow of human mathematicians and is equipped with our mathematical theorem search tool, Matlas. The second is a formal agent equipped with LeanSearch, consisting of two subagents: one responsible for task decomposition and targeted guidance, and the other for executing formalization. Using this framework, we successfully solved an open problem in commutative algebra \cite{cahen2014open} and automatically formalized the proof with essentially no human intervention. During the reasoning process, the success of the informal agent is largely attributed to Matlas, which enables the discovery of relevant theorems from adjacent fields and guides the search toward the correct direction. Meanwhile, the formal agent demonstrates the ability to autonomously fill nontrivial gaps that are omitted or left implicit in the informal proof, without requiring human assistance.

Beyond this end-to-end result, we further demonstrated the capabilities of Rethlas and Archon separately on additional research-level problems. On the informal-reasoning side, Rethlas handled an expert-level problem in algebraic groups which, to the best of our knowledge, had not previously appeared in the literature or on the internet. In contrast, GPT-5.5 Pro, accessed through the ChatGPT webpage rather than as a bare model call, produced a completely incorrect proof for the same problem. We also show that Rethlas can assist current mathematical research through a real research problem in $p$-adic Hodge theory. In this exploratory process, human mathematicians proposed insightful but immature conjectures, while Rethlas automatically proved them, strengthened the statements by making conditions more explicit and exposing redundant hypotheses, and constructed counterexamples when a key condition was removed. This case illustrates how Rethlas can help mathematicians better understand a problem and refine its correct formulation, going beyond merely proving an already well-formulated statement. On the formalization side, Archon successfully formalized natural-language proofs of two research-level problems, completing one entirely autonomously and the other with only a one-sentence hint. These case studies provide supporting evidence that the capabilities observed in the Anderson case are not confined to that example.

Our results highlight several key strengths of the framework. For the informal agent, it demonstrates strong capabilities in retrieval, understanding, and the application of deep domain-specific knowledge. In discovering the proof of Anderson's open problem, Rethlas explored a related branch connected to the open problem and successfully identified and applied highly technical methods (e.g., Jensen’s paper). Achieving such cross-domain integration would typically require collaboration between experts from different fields. This illustrates the effectiveness of retrieval tools in enabling a level of interdisciplinary reasoning comparable to expert-level discussions. In addition, the agent operates significantly faster than human mathematicians, completing tasks that would typically require substantial time for an individual mathematician unfamiliar with the relevant literature.

The framework also exhibits strong generality. The paradigm of discovering and leveraging knowledge from adjacent fields can be expected to extend to a wide range of mathematical domains and problems. For the formal agent, the framework greatly reduces human labor, requiring only one or two individuals with mathematical background and almost no expertise in formalization. Moreover, human mathematicians can interact with the system in a natural and intuitive way, similar to explaining concepts to students, by highlighting key difficulties, pointing out errors, suggesting references, and elaborating on challenging points, without the need for exhaustive formal detail.

Overall, our results demonstrate that, for a genuine open problem in mathematics, informal reasoning agents and formal agents can effectively cooperate: the informal agent generates plausible proofs, while the formal agent formalizes them and fills in missing details. Human involvement is limited to a minimal role, primarily verifying the semantic correctness of statements. This work provides a concrete example of how mathematical research can be substantially automated using AI.

\section*{Acknowledgements}
The authors would like to warmly thank Brian Conrad for bringing the algebraic group problem in Section~\ref{subsubsec:rethlas-algebraic-groups} to their attention, for valuable discussions on this problem, for carefully verifying Rethlas's output, and for his careful reading of and valuable feedback to Section~\ref{subsubsec:rethlas-algebraic-groups} and Appendix~\ref{app:math_proof_brian}. The authors also extend their warm thanks to Jiahong Yu for suggesting the 
$p$-adic Hodge theory problem in Section~\ref{subsubsec:rethlas-padic-hodge} and for carefully evaluating the output.

This work is supported in part by the National Key R\&D Program of China grant 2024YFA1014000, the Fundamental and Interdisciplinary Disciplines Breakthrough Plan of the Ministry of Education of China (JYB2025XDXM113), and the New Cornerstone Investigator Program.

\clearpage

\bibliographystyle{plainnat}
\bibliography{ref}

\clearpage

\beginappendix


\section{Mathematical Proof of Anderson's Open Problem}\label{app:math_proof}

In this section, we present the mathematical construction and proof of Anderson’s open problem as generated and verified by our complete automated pipeline. The original proof, produced by the AI system, has been refactored by human authors solely to improve exposition and clarity, without altering its logical content. This proof, together with several other commutative algebra problems solved by Rethlas, is also presented in the pure mathematical paper \cite{jiang2026openproblemscommutativealgebra}.

\subsection*{Definitions and statement of the main theorem}

Let $(R, M)$ be a Noetherian local ring. Recall the following definitions.

\begin{definition}
$R$ is \emph{quasi-complete} if for every decreasing sequence $\{A_n\}_{n=1}^{\infty}$ of ideals of $R$ and each integer $k \ge 1$, there exists $s_k$ such that 
\[
A_{s_k} \subseteq \Bigl(\bigcap_{n=1}^{\infty} A_n\Bigr) + M^k.
\]
\end{definition}

\begin{definition}
$R$ is \emph{weakly quasi-complete} if for every decreasing sequence $\{A_n\}_{n=1}^{\infty}$ of ideals of $R$ with $\bigcap_{n=1}^{\infty} A_n = 0$ and each integer $k \ge 1$, there exists $s_k$ such that 
\[
A_{s_k} \subseteq M^k.
\]
\end{definition}

\begin{definition}
$R$ is \emph{analytically irreducible} if its $M$-adic completion $\widehat{R}$ is an integral domain.
\end{definition}

Quasi-completeness can be viewed as a topological approximation property: every descending chain of ideals eventually becomes arbitrarily close to its intersection in the $M$-adic topology. The weak variant imposes this condition only for chains whose intersection is zero. The following basic relationships are known.

\begin{corollary}\label{cor:basic-relationships}
Let $(R, M)$ be a Noetherian local ring.
\begin{enumerate}[leftmargin=*]
\item If $R$ is complete with respect to the $M$-adic topology, then $R$ is quasi-complete \cite[Lemma 7]{chevalley1943theory}.
\item If $R$ is quasi-complete, then $R$ is weakly quasi-complete.
\item $R$ is quasi-complete if and only if every homomorphic image $R/I$ (for any proper ideal $I$) is weakly quasi-complete \cite[Theorem 1.3]{anderson2014quasi}.
\end{enumerate}
\end{corollary}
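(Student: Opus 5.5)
The statement collects three facts, and the plan is to reduce everything to elementary manipulations together with two standard facts: the Artinian property of $R/M^k$, and Krull's intersection theorem (every ideal $I$ of a Noetherian local ring satisfies $\bigcap_j (I+M^j)=I$, by applying Krull to $R/I$). Part (2) is immediate: if $\bigcap_n A_n=0$, the quasi-completeness condition reads $A_{s_k}\subseteq 0+M^k$.

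For part (1), following Chevalley, I first reduce to the case $\bigcap_n A_n=0$. Put $I=\bigcap_n A_n$ and pass to $R/I$, which is again complete local Noetherian; an inclusion $\bar A_s\subseteq \bar M^k$ in $R/I$ lifts to $A_s\subseteq I+M^k$. So assume $\bigcap_n A_n=0$.

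Fix $j$. The chain $(A_n+M^j)/M^j$ is a descending chain in the Artinian ring $R/M^j$, so it stabilizes. Set $B_j=\bigcap_n (A_n+M^j)$. Then $B_j=A_n+M^j$ for all $n\ge N_j$. Taking $n\ge \max(N_j,N_{j+1})$ gives
\[
B_j=A_n+M^j=(A_n+M^{j+1})+M^j=B_{j+1}+M^j .
\]

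Now let $x\in B_k$. Using $B_j=B_{j+1}+M^j$ repeatedly, choose $x_k=x$ and $x_{j+1}\in B_{j+1}$ with $x_{j+1}-x_j\in M^j$. This sequence is Cauchy, so by completeness it has a limit $y$, and $y-x\in M^k$. For each fixed $n$ and $j$, we have $y\in x_j+M^j\subseteq A_n+M^j$. Hence $y\in\bigcap_j(A_n+M^j)=A_n$ by Krull, so $y\in\bigcap_n A_n=0$. Therefore $x\in M^k$, i.e. $B_k\subseteq M^k$, and $A_s\subseteq B_k\subseteq M^k$ for $s\ge N_k$.

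The main obstacle is exactly this step. Stabilization modulo each $M^j$ must be made compatible across $j$, which is what $B_j=B_{j+1}+M^j$ does. Completeness is then used to produce a genuine element of $\bigcap_n A_n$, and the closedness of ideals from Krull's theorem is what places the limit inside every $A_n$.

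For part (3), first the forward direction. Let $I$ be a proper ideal and let $\bar A_n$ be a chain in $R/I$ with zero intersection. Its preimages $A_n\supseteq I$ form a chain in $R$ with $\bigcap_n A_n=I$. Quasi-completeness gives $A_{s_k}\subseteq I+M^k$, which says $\bar A_{s_k}\subseteq \bar M^k$, using that the maximal ideal of $R/I$ is $M/I$ and $\bar M^k=(M^k+I)/I$.

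For the converse, let $\{A_n\}$ be a decreasing chain in $R$ with $J=\bigcap_n A_n$. If $J=R$, then every $A_n=R$ and the condition holds trivially. Otherwise $R/J$ is weakly quasi-complete by hypothesis. The chain $A_n/J$ has zero intersection, so $A_{s_k}/J\subseteq (M^k+J)/J$, which gives $A_{s_k}\subseteq J+M^k$.
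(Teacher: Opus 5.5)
Your proof is correct in all three parts. The paper does not prove this corollary itself: it treats (2) as evident and cites Chevalley's Lemma~7 for (1) and Anderson's Theorem~1.3 for (3). Your proposal is therefore a self-contained reconstruction of the standard arguments behind those citations.

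For (1), your argument is the usual proof of Chevalley's lemma:
\begin{itemize}
\item stabilize $A_n+M^j$ in the Artinian ring $R/M^j$;
\item make the stable values compatible via $B_j=B_{j+1}+M^j$;
\item use completeness to produce a limit $y$;
\item use Krull's intersection theorem, i.e.\ closedness of ideals, to place $y$ in every $A_n$.
\end{itemize}
For (3), the argument is just pulling chains back along $R\to R/I$.

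Two small remarks on (1). First, you should set aside the trivial case $\bigcap_n A_n=R$ before passing to $R/I$, as you do in (3). Second, the reduction to $R/I$ is unnecessary. Run the same argument with $I=\bigcap_n A_n$; it yields $y\in I$ and hence $x=(x-y)+y\in M^k+I$ directly. This also avoids having to justify that quotients of complete Noetherian local rings are complete.
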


The main result of this section provides a negative answer to Anderson's question.

\begin{theorem}\label{thm:main-appendix}
There exists a weakly quasi-complete Noetherian local ring that is not quasi-complete.
\end{theorem}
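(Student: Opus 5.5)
We follow the outline of Section~\ref{subsec:nl_proof_sketch}. By Corollary~\ref{cor:basic-relationships}(3), it suffices to construct a weakly quasi-complete Noetherian local ring $A$ with a proper ideal $I$ such that $A/I$ is not weakly quasi-complete. We aim for a local UFD $A$ with completion $T=\mathbb{C}[[x,y,z]]/(x^2-yz)$ and trivial generic formal fiber, and take $I=aA$ for a suitable prime element $a$.

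\textbf{Step 1: weak quasi-completeness from a trivial generic formal fiber.} Let $A$ be a local domain with $T=\widehat A$ such that $P\cap A=0$ implies $P=0$ for every prime $P$ of $T$. Take a decreasing chain $A_n$ with $\bigcap A_n=0$. The ideals $A_nT$ decrease, and by Chevalley's lemma (Corollary~\ref{cor:basic-relationships}(1)) applied to the complete ring $T$, for each $k$ there is $s$ with
\[
A_sT\subseteq \Bigl(\bigcap_n A_nT\Bigr)+\widehat M^k .
\]
We then show $J:=\bigcap_n A_nT=0$. Faithful flatness gives $J\cap A\subseteq \bigcap_n (A_nT\cap A)=\bigcap_n A_n=0$. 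For every associated prime $P$ of $J$ we get $P\cap A=0$ (a nonzero element of $P\cap A$ would make some nonzerodivisor-type argument force a nonzero element into $J\cap A$), hence $P=0$. Since $T$ is a domain, this forces $J=0$ after a short primary-decomposition argument. Contracting back to $A$ via $\widehat M^k\cap A=M^k$ yields $A_s\subseteq M^k$.

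\textbf{Step 2: realizing $T$ via Jensen.} Apply \cite[Corollary 2.4]{jensen2006completions} to $T=\mathbb{C}[[x,y,z]]/(x^2-yz)$. This requires checking its hypotheses: $T$ is a complete local normal domain of dimension $2$, its residue field $\mathbb{C}$ has cardinality $|T|$, it is not a field, and it contains $\mathbb{Q}$ (I expect any depth or normality conditions to follow from $T$ being a normal hypersurface). The corollary then gives a local UFD $A$ with $\widehat A=T$ and trivial generic formal fiber, so Step~1 applies to $A$.

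\textbf{Step 3: the bad quotient.} The prime $Q=(x,y)T$ has height one and is not principal, since the class group of $T$ is $\mathbb{Z}/2$. Put $q=Q\cap A$. Because $Q\neq0$, trivial generic fiber gives $q\ne0$; by going-down for flat maps, $\operatorname{ht} q\le 1$, so $q$ is a height-one prime. As $A$ is a UFD, $q=aA$ for a prime element $a$. Then $A/aA$ is a one-dimensional local domain with completion $T/aT$. Here $aT\subseteq Q$, but $aT\neq Q$ since $Q$ is nonprincipal, so $aT$ has a minimal-prime/primary decomposition involving at least two components, or a non-reduced structure. In either case $T/aT$ is not a domain, and $A/aA$ is not analytically irreducible. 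Finally we show that a one-dimensional analytically reducible local domain $D$ fails weak quasi-completeness. Pick a nonzero minimal prime $\mathfrak p$ of $\widehat D$, which satisfies $\mathfrak p\cap D=0$, and set $A_n=(\mathfrak p+\widehat M^n)\cap D$. Then $\bigcap_n A_n=\mathfrak p\cap D=0$ by Krull's intersection theorem, while $A_n\not\subseteq M^k$ for large $k$ because $A_n\widehat D\supseteq$ elements of $\mathfrak p$ outside $\widehat M^k$.

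\textbf{Main obstacle.} I expect the hardest part to be verifying Jensen's hypotheses precisely and the claim $aT\ne Q$ with $T/aT$ not a domain. This needs a careful handling of how $aT$ sits relative to $Q$ and its conjugate prime $(x,z)T$: if $aT$ were $Q$-primary, the quotient $T/aT$ would still have zero divisors through nilpotents, and the last sub-step must cover that case as well, with an appropriate nonzero minimal-prime-type ideal.
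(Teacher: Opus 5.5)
Your proof is correct and has the same skeleton as the paper's: the reduction via Corollary~\ref{cor:basic-relationships}(3), the ring $T=\mathbb{C}[[x,y,z]]/(x^2-yz)$, Jensen's Corollary~2.4 with $P=(0)$, the contraction $Q\cap A=aA$, and the observation that $aT\subsetneq Q$ prevents $aT$ from being prime.

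Where you differ is that you prove, rather than cite, the two criteria the paper takes from Farley and from Anderson's Corollary~2.2.
\begin{itemize}
\item You show that a trivial generic formal fiber implies weak quasi-completeness, using Chevalley's lemma in $T$ together with faithful flatness.
\item You show that a non-domain completion rules out weak quasi-completeness. Your argument takes a nonzero minimal prime $\mathfrak p$ of $\widehat{D}$, which contracts to $0$ by going-down, and uses the chain $(\mathfrak p+\widehat{M}^n)\cap D$, whose extension to $\widehat D$ is exactly $\mathfrak p+\widehat{M}^n$.
\end{itemize}
The paper's route buys brevity. Yours makes the argument self-contained apart from Jensen, Chevalley and Anderson's quotient criterion. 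Your second argument also never uses $\dim D=1$, so it already covers the $Q$-primary (non-reduced) case you worried about in your last paragraph.

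One step needs repair. In Step~1 the parenthetical does not show that every associated prime of $J$ contracts to $0$: from $P=(J:t)$ and $0\neq b\in P\cap A$ you only get $bt\in J$, not a nonzero element of $J\cap A$. You do not need that claim, though. Since $J$ misses the multiplicative set $A\setminus\{0\}$, some prime $P\supseteq J$ satisfies $P\cap A=0$, hence $P=0$ and $J=0$.

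Jensen's hypotheses, which you flagged as the main obstacle, are a short check, as in the paper. $T$ is a Cohen--Macaulay domain of dimension $2$, so it has depth $2$, its only associated prime is $(0)$, and no prime $J$ satisfies $\operatorname{ht}J>\operatorname{depth}T_J=1$. Finally, for the non-principality of $Q$, the count $\dim_{\mathbb{C}}Q/MQ=2$ is more direct than quoting that the class group is $\mathbb{Z}/2$. That quotation also needs the fact that $[Q]$ is its nontrivial element.
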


\subsection*{Reduction of the problem}

By the third property in Corollary \ref{cor:basic-relationships}, a Noetherian local ring $R$ fails to be quasi-complete if and only if some homomorphic image $R/I$ is not weakly quasi-complete. Consequently, to prove \Cref{thm:main-appendix}, it suffices to construct a Noetherian local ring $A$ that is weakly quasi-complete yet admits a proper quotient failing weak quasi-completeness.

We now recall two external results that reduce this construction to concrete conditions.

\begin{cited}[{Farley, \cite[Proposition 1]{farley2016quasi}}]
A Noetherian local integral domain $A$ is weakly quasi-complete if and only if $P \cap A \neq \{0\}$ for every nonzero prime ideal $P$ of $\widehat A$.
\end{cited}

Equivalently, $A$ is weakly quasi-complete precisely when its generic formal fiber is trivial: the only prime of the completion $\widehat A$ lying over the zero ideal of $A$ is the zero ideal itself.

\begin{cited}[{Anderson, \cite[Corollary 2.2]{anderson2014quasi}}]

A one-dimensional Noetherian local domain is (weakly) quasi-complete if and only if it is analytically irreducible.
\end{cited}

Combining these criteria reduces our task to constructing a Noetherian local domain $A$ satisfying the following two conditions:

\begin{enumerate}[leftmargin=*]
\item[A.] The generic formal fiber of $A$ is trivial, which by Farley's criterion ensures that $A$ is weakly quasi-complete.
\item[B.] There exists a prime element $a \in A$ such that $A/aA$ is a one-dimensional Noetherian local domain that is not analytically irreducible, which by Anderson's criterion implies that $A/aA$ is not weakly quasi-complete.
\end{enumerate}

\subsection*{Construction of the complete local domain $T$}

\begin{lemma}\label{lem:complete_domain_choice-appendix}
Let
\[
T = \mathbb{C}[[x,y,z]]/(x^2 - yz),
\]
and let $M = (x,y,z)T$. Then:
\begin{enumerate}
\item $T$ is a complete $2$-dimensional Cohen–Macaulay local domain with $|T| = |T/M| = |\mathbb{C}|$.
\item The height-one prime ideal $Q = (x,y)T$ is not principal.
\end{enumerate}
\end{lemma}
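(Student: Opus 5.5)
We prove the two parts of Lemma~\ref{lem:complete_domain_choice-appendix} separately. Part (1) uses standard facts about hypersurfaces. Part (2) reduces to a computation in the cotangent space $M/M^2$.

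\textbf{Part (1).} Set $S=\mathbb{C}[[x,y,z]]$. This is a complete regular local ring of dimension $3$, hence a UFD.

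\emph{Domain.} The element $f=x^2-yz$ is irreducible in $S$. Its initial form is the quadratic form $x^2-yz$, which is irreducible over $\mathbb{C}$ because it has rank $3$. If $f=gh$ with $g,h$ nonunits, both factors would have order $1$, and then the initial form would factor into linear forms, which is impossible. Since $S$ is a UFD, $(f)$ is a prime ideal, so $T$ is a domain.

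\emph{Completeness and dimension.} $T$ is complete as a quotient of a complete local ring. Since $f\neq 0$, $\dim T=3-1=2$.

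\emph{Cohen--Macaulay.} $T$ is a hypersurface, i.e.\ a regular local ring modulo a nonzerodivisor. A regular local ring is Cohen--Macaulay, and quotienting a Cohen--Macaulay ring by a regular element preserves the property, so $T$ is Cohen--Macaulay.

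\emph{Cardinalities.} We have $T/M\cong\mathbb{C}$. For the upper bound, $|T|\le|S|=|\mathbb{C}^{\mathbb{N}^3}|=(2^{\aleph_0})^{\aleph_0}=2^{\aleph_0}=|\mathbb{C}|$. For the lower bound, $\mathbb{C}\subseteq T$ via the constants. Hence $|T|=|T/M|=|\mathbb{C}|$.

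\textbf{Part (2).} Let $Q=(x,y)T$.

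\emph{$Q$ is prime.} We compute $T/Q\cong S/(x,y,x^2-yz)=S/(x,y)\cong\mathbb{C}[[z]]$, a domain of dimension $1$.

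\emph{$Q$ has height one.} Since $T$ is a catenary local domain of dimension $2$ and $\dim T/Q=1$, we get $\operatorname{ht}Q=2-1=1$.

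\emph{$Q$ is not principal.} This is the main point. Suppose $Q=gT$. Since $Q\subseteq M$, we have $g\in M$. Consider the images of $Q$ in the cotangent space
\[
M/M^2=\mathbb{C}\bar x\oplus\mathbb{C}\bar y\oplus\mathbb{C}\bar z .
\]
This space is $3$-dimensional because $f\in\mathfrak{m}_S^2$.

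If $Q=gT$, then $x=gu$ and $y=gv$ for some $u,v\in T$. Write $\bar g$ for the class of $g$ in $M/M^2$.
\begin{itemize}
\item If $u\in M$, then $x=gu\in M^2$, contradicting $\bar x\neq 0$. So $u$ is a unit, and $\bar x=u(0)\bar g$.
\item By the same argument $v$ is a unit, and $\bar y=v(0)\bar g$.
\end{itemize}
Thus $\bar x$ and $\bar y$ are both nonzero multiples of $\bar g$, so they are linearly dependent in $M/M^2$. This contradicts the fact that they are part of a basis. Hence $Q$ is not principal.

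An alternative argument would compute the divisor class group of this $A_1$ singularity, which is $\mathbb{Z}/2$. The cotangent argument avoids that computation.

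**Expected difficulty.** No step here is a real obstacle. The only points needing care are the irreducibility of $x^2-yz$ in the power series ring and the linear-independence argument in $M/M^2$, and both are short.
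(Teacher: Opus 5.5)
Your proof is correct and follows the paper's overall outline: hypersurface $\Rightarrow$ Cohen--Macaulay, $T/Q\cong\mathbb{C}[[z]]$, and non-principality from the linear independence of $\bar x,\bar y$ in $M/M^2$. The genuine difference is how you prove that $T$ is a domain.

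\textbf{Domain step.} The paper identifies $T$ with $\mathbb{C}[[u^2,uv,v^2]]\subseteq\mathbb{C}[[u,v]]$ via $x\mapsto uv$, $y\mapsto u^2$, $z\mapsto v^2$. This gives a concrete model of $T$ as the invariants of $(u,v)\mapsto(-u,-v)$, which makes normality and the class group $\mathbb{Z}/2$ visible. However, it requires showing that the kernel of $\mathbb{C}[[x,y,z]]\to\mathbb{C}[[u,v]]$ is exactly $(x^2-yz)$. The paper's text omits that injectivity argument, and Archon had to supply it at the level of coefficients. Your argument avoids any kernel computation and is complete as written: the power series ring is a UFD, initial forms multiply, and a rank-$3$ quadratic form is not a product of two linear forms, so $x^2-yz$ is irreducible and hence prime.

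\textbf{Non-principality.} This is the paper's bound $\dim_{\mathbb{C}}Q/MQ\ge 2$ in another form. The paper uses that $Q/MQ\to M/M^2$ is well defined because $MQ\subseteq M^2$. You instead argue directly from a putative generator $g$.

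\textbf{Smaller points.} You are more careful than the paper in two places. First, the cardinality identity really rests on $(2^{\aleph_0})^{\aleph_0}=2^{\aleph_0}$, not merely on $\mathbb{C}$ being uncountable. Second, you justify $\operatorname{ht}Q=1$, which the paper only asserts. Catenarity is not even needed for that step: $\operatorname{ht}Q+\dim T/Q\le\dim T$ holds in any Noetherian local ring, and $Q\neq 0$ in a domain gives $\operatorname{ht}Q\ge 1$.
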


\begin{proof}
The ring $\mathbb{C}[[x,y,z]]$ is a complete regular local domain of dimension $3$. The quotient
\[
T = \mathbb{C}[[x,y,z]]/(x^2 - yz)
\]
is isomorphic to the subring $\mathbb{C}[[u^2, uv, v^2]] \subseteq \mathbb{C}[[u,v]]$ via
\[
x \mapsto uv,\qquad y \mapsto u^2,\qquad z \mapsto v^2.
\]
Hence $T$ is a domain. Since $x^2 - yz$ is a nonzerodivisor in the regular local ring $\mathbb{C}[[x,y,z]]$, the quotient $T$ is a hypersurface and therefore Cohen–Macaulay of dimension $2$.

Moreover, $T/M \cong \mathbb{C}$, and a formal power series ring in finitely many variables over the uncountable field $\mathbb{C}$ has the same cardinality as $\mathbb{C}$; thus $|T| = |T/M| = |\mathbb{C}|$.

Finally,
\[
T/Q \cong \mathbb{C}[[z]],
\]
so $Q$ is prime and has height $1$. To see that $Q$ is not principal, note that $MQ \subseteq M^2$, while $x, y \notin M^2$ because the defining relation is quadratic. Hence the images of $x$ and $y$ in $Q/MQ$ are nonzero and linearly independent over $T/M \cong \mathbb{C}$. Therefore $\mu(Q) = \dim_{\mathbb{C}}(Q/MQ) \ge 2$, so $Q$ is not principal.
\end{proof}

The two properties of $T$ established above serve distinct roles in the proof. The cardinality condition $|T| = |T/M|$ satisfies a hypothesis of Jensen's construction, which will produce a ring $A$ with trivial generic formal fiber (condition A). The non-principality of $Q$ will be used to verify condition B.

\subsection*{Construction of the local UFD $A$}

\begin{lemma}\label{lem:jensen_special_case-appendix}
There exists a $2$-dimensional local UFD $A$ such that $\widehat{A}\cong T$ and the generic formal fiber of $A$ is local with maximal ideal $(0)$.
\end{lemma}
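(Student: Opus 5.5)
We obtain Lemma~\ref{lem:jensen_special_case-appendix} as a direct specialization of Jensen's characterization \cite[Corollary 2.4]{jensen2006completions}. Its statement, in the form we need, is: let $(T,M)$ be a complete local ring with $\dim T\ge 2$, and suppose no integer of $T$ is a zero divisor (automatic if $T$ is a domain containing $\mathbb{Q}$). Then $T$ is the completion of a local UFD $A$ whose generic formal fiber is local with maximal ideal $(0)$ if and only if $T$ is a domain with $|T|=|T/M|$ and $\operatorname{depth} T>1$. Jensen works in the tradition of Heitmann \cite{heitmann1993characterization} and Loepp \cite{loepp1997constructing}, so the construction is the standard one: a transfinite union of countable-or-small subrings, each adjoining generators to force $A\to T$ to be faithfully flat with $\widehat A=T$. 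The proof therefore reduces to checking Jensen's hypotheses for $T=\mathbb{C}[[x,y,z]]/(x^2-yz)$ and then reading off the conclusions.

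\textbf{Step 1: hypotheses.} By Lemma~\ref{lem:complete_domain_choice-appendix}(1), $T$ is a complete local domain of dimension $2$ with $|T|=|T/M|=|\mathbb{C}|$. It contains $\mathbb{Q}$, so the condition on integers holds. Since $T$ is Cohen--Macaulay of dimension $2$, $\operatorname{depth}T=2>1$. Thus every hypothesis of Jensen's corollary is satisfied.

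\textbf{Step 2: conclusions.} Jensen's corollary produces a local UFD $(A,A\cap M)$ with $\widehat A\cong T$ such that every nonzero prime $P$ of $T$ satisfies $P\cap A\neq 0$. Equivalently, the generic formal fiber $T\otimes_A\operatorname{Frac}A$ has $(0)$ as its unique prime, so it is local with maximal ideal $(0)$. Because completion preserves Krull dimension, $\dim A=\dim T=2$.

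\textbf{Main obstacle.} The main difficulty is not the verification but the fidelity of the citation. One must confirm that Jensen's published statement really includes the ``generic formal fiber local with maximal ideal $(0)$'' conclusion together with the UFD property, rather than only one of the two. One must also confirm that its hypotheses are the ones listed above, for instance that the cardinality condition reads $|T|=|T/M|$ and not merely that $T/M$ is uncountable.

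If the corollary as stated lacks one of these conclusions, the fallback is to combine results from the same line of work. Heitmann's theorem \cite{heitmann1993characterization} gives a UFD with completion $T$ under the same depth and cardinality hypotheses. Loepp-style control of the generic formal fiber \cite{loepp1997constructing} makes each prime $P\neq 0$ of $T$ meet the subring: at each stage one adjoins an element of $P$ outside the finitely many relevant associated primes, which is possible because $|T/M|$ is large. The union over all stages then gives the required $A$.
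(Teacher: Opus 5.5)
Your route is the paper's: specialize Jensen's Corollary~2.4 to $T=\mathbb{C}[[x,y,z]]/(x^2-yz)$ with $P=(0)$. However, the version of the corollary you quote omits one of its hypotheses, so Step~1 does not check everything the actual theorem requires. Jensen's conditions are $\operatorname{depth}T\ge 2$, $P\supseteq\operatorname{Ass}T$, $P\cap(\text{prime subring})=0$, and also that every prime $J$ of $T$ with $\operatorname{ht}J>\operatorname{depth}T_J=1$ be contained in $P$. For $P=(0)$ the last one says that no such $J$ exists.

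This condition is genuinely necessary, so your ``if and only if'' is false as stated. Suppose $A$ were a local UFD with $\widehat A=T$ and trivial generic formal fiber, and $J$ such a prime. Pick $0\ne a\in J\cap A$. Since $a$ is $T_J$-regular and $\operatorname{depth}T_J=1$, we get $JT_J\in\operatorname{Ass}(T_J/aT_J)$. By flatness, $J\cap A\in\operatorname{Ass}_A(A/aA)$, i.e.\ $J\cap A=pA$ for a prime element $p$. Now take any of the infinitely many height-one primes of $T$ inside $J$. Its contraction is a nonzero prime inside the height-one prime $pA$, hence equal to $pA$. So it contains $p$ and is minimal over $pT$, which is impossible since $pT$ has only finitely many minimal primes.

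For a concrete instance, take $\mathbb{C}[[s^4,s^3t,st^3,t^4]][[u]]$. It is a characteristic-zero complete local domain of dimension $3$ and depth $2$ with $|T|=|T/M|$, so it meets every hypothesis of your version. Yet the height-two prime $J$ generated by the four monomials has $\operatorname{depth}T_J=1$.

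For the ring at hand the missing verification is one line, and it is exactly item~5 of the paper's proof. $T$ is Cohen--Macaulay, so every localization $T_J$ is Cohen--Macaulay with $\operatorname{depth}T_J=\operatorname{ht}J$, which excludes any prime with $\operatorname{ht}J>\operatorname{depth}T_J$. With this added your argument is complete, and your observation $\dim A=\dim\widehat A=2$ supplies a detail the paper leaves implicit.

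Two smaller points. First, $|T|=|T/M|$ is a standing hypothesis of Jensen's corollary, not part of the characterization. Second, the corollary does deliver both the UFD property and the local generic formal fiber, so the Heitmann--Loepp fallback is unnecessary; as sketched it would amount to re-proving Jensen's theorem, where the condition above is precisely the new obstruction. You were right that citation fidelity was the risk, but the discrepancy lay in the hypotheses rather than the conclusions.
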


\begin{proof}
We apply the following external result.

\begin{cited}[{Jensen, \cite[Corollary 2.4]{jensen2006completions}}]
Let $(T,M)$ be a complete local ring with $|T/M| = |T|$, and let $P \in \operatorname{Spec} T$. Then there exists a local UFD $A$ such that $\widehat A = T$ and the generic formal fiber of $A$ is local with maximal ideal $P$ if and only if either
\begin{enumerate}
\item $T$ is a field or a DVR and $P = (0)$, or
\item $T$ has depth at least two and satisfies:
    \begin{enumerate}
        \item $P$ is nonmaximal and contains all associated primes of $T$;
        \item $P$ intersects the prime subring of $T$ trivially;
        \item if $J \in \operatorname{Spec} T$ satisfies $\operatorname{ht}(J) > \operatorname{depth}(T_J) = 1$, then $J \subseteq P$.
    \end{enumerate}
\end{enumerate}
\end{cited}

We apply this with the ring $T$ from Lemma \ref{lem:complete_domain_choice-appendix} and with $P=(0)$. The hypotheses hold:

\begin{enumerate}
\item $T$ is complete and $|T|=|T/M|$ by Lemma \ref{lem:complete_domain_choice-appendix}.
\item $T$ has depth~$2$ by Lemma \ref{lem:complete_domain_choice-appendix}.
\item Since $T$ is a domain, its only associated prime is $(0)$, so $P=(0)$ contains all associated primes.
\item $P=(0)$ is nonmaximal and meets the prime subring trivially.
\item Because $T$ is Cohen--Macaulay, there is no prime $J$ with $\operatorname{ht}(J)>\operatorname{depth}(T_J)=1$.
\end{enumerate}

Therefore Jensen’s corollary yields a local UFD $A$ with $\widehat A\cong T$ whose generic formal fiber is local with maximal ideal $(0)$.
\end{proof}

\subsection*{Verification that $A$ is the desired counterexample}

By Lemma \ref{lem:jensen_special_case-appendix}, the generic formal fiber of $A$ is trivial: the only prime ideal of $\widehat A\cong T$ contracting to $(0)$ in $A$ is $(0)$ itself. Hence every nonzero prime ideal of $\widehat A$ has nonzero contraction to $A$. By Farley’s criterion, $A$ is weakly quasi-complete. This establishes condition~A.

It remains to verify condition~B. Consider the height-one prime $Q=(x,y)T$ from Lemma \ref{lem:complete_domain_choice-appendix}. Since $Q\neq (0)$, the triviality of the generic formal fiber implies
\[
q:=Q\cap A \neq (0).
\]
Because $\widehat A$ is faithfully flat over $A$, we have
\[
1 \le \operatorname{ht}(q)\le \operatorname{ht}(Q)=1,
\]
so $\operatorname{ht}(q)=1$. As $A$ is a UFD, $q=aA$ for some prime element $a\in A$.

We claim that $aT$ is not prime. Indeed, $aT\subseteq Q$. If $aT$ were prime, then $\operatorname{ht}(aT)=1$, so the inclusion $aT\subseteq Q$ of height-one prime ideals would force $aT=Q$, contradicting the fact that $Q$ is not principal.

Therefore $T/aT$ is not a domain. Since completion commutes with quotient for Noetherian local rings,
\[
\widehat{A/aA}\cong \widehat A/a\widehat A \cong T/aT,
\]
so $\widehat{A/aA}$ is not a domain. Because $a$ is prime in the domain $A$, the quotient $A/aA$ is a one-dimensional Noetherian local domain. Hence $A/aA$ is not analytically irreducible. By Anderson’s criterion, $A/aA$ is not weakly quasi-complete. This establishes condition~B.

\subsection*{Conclusion}

The ring $A$ is weakly quasi-complete (condition~A), but its homomorphic image $A/aA$ is not weakly quasi-complete (condition~B). Since quasi-completeness is equivalent to every homomorphic image being weakly quasi-complete, $A$ is not quasi-complete. Thus $A$ is a weakly quasi-complete Noetherian local ring that is not quasi-complete, proving \Cref{thm:main-appendix}. \qed

\section{Mathematical Proof of the Algebraic Group Problem}\label{app:math_proof_brian}

In this section, we present the proof of the result on rational conjugacy classes of 1-parameter subgroups in \Cref{sec: statement of brian's problem} as generated and verified by our complete automated pipeline. The original proof, produced by the AI system, has been refactored by human authors solely to improve exposition and clarity, without altering its logical content.

\subsection*{Definitions and statement of the theorem}

Let $k$ be a field and let $G$ be a smooth connected affine $k$-group. We write
\[
\mathbb X_\ast(G):=\operatorname{Hom}_{k\text{-groups}}(\mathbf G_m,G)
\]
for the set of 1-parameter subgroups of $G$ over $k$. The group $G(k)$ acts on $\mathbb X_\ast(G)$ by conjugation:
\[
g\cdot \lambda=\operatorname{Int}(g)\circ\lambda.
\]
If $K/k$ is a field extension, then base change gives a natural map
\[
\mathbb X_\ast(G)/G(k)\longrightarrow \mathbb X_\ast(G_K)/G(K).
\]

Recall that $R_{u,k}(G)$ denotes the $k$-unipotent radical of $G$, and that
$
R_{us,k}(G)
$
denotes the maximal $k$-split smooth connected unipotent normal $k$-subgroup of $G$. A smooth connected unipotent $k$-group is called \emph{$k$-wound} if it contains no nontrivial $k$-split smooth connected unipotent subgroup.

\begin{theorem}\label{thm:cocharacter-conjugacy-descent}
Let $G$ be a smooth connected affine group over a field $k$, and let $K/k$ be an algebraic field extension. Then the natural map
\[
\mathbb X_\ast(G)/G(k)\longrightarrow \mathbb X_\ast(G_K)/G(K)
\]
is injective.
Equivalently, if two $k$-cocharacters of $G$ become conjugate over $K$, then they were already conjugate over $k$.
\end{theorem}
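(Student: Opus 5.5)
We follow the route sketched in the overview: first reduce to the quasi-reductive case, then treat that case through relative and absolute root data. Two cheap preliminary reductions come first. The statement for a general algebraic $K/k$ can be checked after enlarging $K$, since injectivity into a bigger target implies injectivity into a smaller one. We may therefore assume $K$ contains the separable closure $k_s$, or even that $K=\bar k$. The perfect field case is assumed known, so we only need to compare $k$ with $\bar k$, or with $k_s$ followed by a purely inseparable step.

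Set $U=R_{us,k}(G)$ and $\overline G=G/U$. \textbf{Step 1:} show $H^1(k,U)=1$. We use a composition series of $U$ with successive quotients $\mathbf G_a$ and the vanishing of $H^1(k,\mathbf G_a)$. \textbf{Step 2:} transfer injectivity from $\overline G$ to $G$. Let $\lambda,\mu\in\mathbb X_\ast(G)$ be conjugate over $K$. Their images $\bar\lambda,\bar\mu$ are then conjugate over $K$, hence, assuming the result for $\overline G$, conjugate by some $\bar g\in\overline G(k)$. Since $H^1(k,U)=1$, the map $G(k)\to\overline G(k)$ is surjective, so we may assume $\bar\lambda=\bar\mu$. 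Then $\lambda(\mathbf G_m)$ and $\mu(\mathbf G_m)$ are split tori in the smooth connected group $H=\overline\lambda(\mathbf G_m)\ltimes$-preimage, i.e.\ $H=\mathbf G_m\times_{\overline G}G$ pulled back along $\bar\lambda$. Both tori are maximal split tori of $H$, which is an extension of $\mathbf G_m$ by the split unipotent group $U$. By conjugacy of maximal split tori \cite[Theorem 4.2.9]{conrad_prasad2017pseudoreductive}, they are $H(k)$-conjugate. The conjugating element then carries $\lambda$ to $\mu$, because both are sections of the same map to $\mathbf G_m$. \textbf{Step 3:} by \cite[Corollary 3.12]{conrad2015solvable}, $\overline G$ is quasi-reductive.

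For quasi-reductive $G$, fix a maximal $k$-split torus $S$ and a maximal $k$-torus $T\supseteq S$. Borel--Tits conjugacy together with \cite[Proposition 5.3.1]{conrad_prasad2017pseudoreductive} identifies $\mathbb X_\ast(G)/G(k)$ with $\mathbb X_\ast(S)/W(G,S)$. Over $k_s$, the analogous identification is with $\mathbb X_\ast(T_{k_s})/W(G_{k_s},T_{k_s})$. It therefore suffices to show the following: if $\lambda,\mu\in\mathbb X_\ast(S)$ and $w\lambda=\mu$ for some $w$ in the absolute Weyl group, then $\mu\in W(G,S)\lambda$. The inseparable step is then handled by the assumed perfect-field case, or by invariance of these Weyl data under purely inseparable extension.

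To prove this, first move $\lambda$ and $\mu$ into the closed dominant chamber for a relative basis ${}_k\Delta$ using $W(G,S)$. Choose an absolute positive system compatible with it, meaning that the restriction of $\Delta_{\rm abs}$ to $S$ is ${}_k\Delta$ or ${}_k\Delta\cup\{0\}$. Then a cocharacter of $S$ that is relatively dominant is absolutely dominant: pairing with an absolute simple root equals pairing with its restriction, which is either $0$ or a relative simple root. Absolutely dominant representatives of a $W_{\rm abs}$-orbit are unique, so $\lambda=\mu$.

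The main obstacle is this compatibility of root bases, i.e.\ the surjectivity of restriction $\Delta_{\rm abs}\to{}_k\Delta\cup\{0\}$ for pseudo-reductive or quasi-reductive groups. This is where \cite[Theorem C.2.15]{Conrad_Gabber_Prasad_2015} is needed. We also have to check carefully that the Weyl groups are unaffected by the wound unipotent radical, so that we can pass to the pseudo-reductive quotient without harm.
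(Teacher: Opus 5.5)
Your proposal is correct and follows essentially the same route as the paper. The steps match: reduction modulo $R_{us,k}(G)$ via $H^1(k,U)=1$, the pullback extension of $\mathbf G_m$ by $U$ with conjugacy of maximal split tori, and then the quasi-reductive case via relatively dominant representatives, a compatible absolute positive system (where \cite[Theorem C.2.15]{Conrad_Gabber_Prasad_2015} is indeed the right input, as the paper's expert remark notes), and uniqueness of absolutely dominant representatives.

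Drop the suggestion that the perfect-field case handles the purely inseparable step. It cannot, since $k_s$ is imperfect whenever $k$ is. Use your other option instead, as the paper does: $G(\bar k)$-conjugate cocharacters of $T$ are $N(T)(\bar k)$-conjugate by the centralizer argument, and the finite \'etale group $N_{G_{k_s}}(T_{k_s})/Z_{G_{k_s}}(T_{k_s})$ is constant, so no new Weyl elements appear. No detour through the pseudo-reductive quotient is needed either, since \cite[Theorem 5.3.2(i)]{conrad_prasad2017pseudoreductive} already identifies $N_G(S)(k)/Z_G(S)(k)$ with $W({}_k\Phi)$ for $G$ itself.
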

\begin{remark*}[From Human Experts]
The proof still works for general field extension $K/k$, i.e. one can drop the assumption that $K$ is algebraic over $k$ for this theorem and lemmas below (Lemma~\ref{lem:split-unipotent-reduction}, Lemma~\ref{lem:wound-unipotent-case} and Lemma~\ref{lem:wound-relative-dominant-representatives}) since the proofs did not use this assumption.
\end{remark*}
\subsection*{Split unipotent groups and rational points}

We first record a standard vanishing result for split unipotent groups.

\begin{lemma}\label{lem:split-unipotent-h1}
Let $U$ be a $k$-split smooth connected unipotent $k$-group. Then
\[
H^1(k,U)=1
\]
for nonabelian fppf cohomology. Consequently, if
\[
1\to U\to E\xrightarrow{\pi} Q\to 1
\]
is an exact sequence of smooth affine $k$-groups, then the map
\[
E(k)\to Q(k)
\]
is surjective.
\end{lemma}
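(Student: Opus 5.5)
We will argue by dévissage along a composition series of $U$ with successive quotients isomorphic to $\mathbf G_a$, using the long exact sequence in nonabelian cohomology at each step. By the definition of $k$-split (see e.g. \cite[Appendix B]{Conrad_Gabber_Prasad_2015} or \cite{conrad2015solvable}), $U$ admits a composition series
\[
1=U_0\subseteq U_1\subseteq\cdots\subseteq U_n=U
\]
by smooth connected $k$-subgroups, with each $U_{i}$ normal in $U_{i+1}$ and $U_{i+1}/U_i\cong \mathbf G_a$. For the base case, $H^1(k,\mathbf G_a)=0$ for fppf cohomology, since $\mathbf G_a$ is quasi-coherent and $H^1_{\mathrm{fppf}}(\operatorname{Spec}k,\mathbf G_a)=H^1_{\mathrm{Zar}}(\operatorname{Spec}k,\mathcal O)=0$; this is additive Hilbert 90, and it holds in the fppf topology, not merely the étale one, which matters in characteristic $p$.

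For the inductive step, consider the short exact sequence $1\to U_i\to U_{i+1}\to \mathbf G_a\to 1$ of fppf sheaves of groups. It gives an exact sequence of pointed sets
\[
H^1(k,U_i)\to H^1(k,U_{i+1})\to H^1(k,\mathbf G_a).
\]
The right term is trivial and, by induction, so is the left term. Exactness at the middle in the nonabelian setting only says that the fibre over the base point is the image of the left term, but that suffices here: every class maps to the trivial class, hence lies in the image of the trivial set, hence is trivial. This gives $H^1(k,U)=1$.

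For the consequence, take $1\to U\to E\xrightarrow{\pi}Q\to1$. Exactness means $\pi$ is faithfully flat with kernel $U$, so $E\to Q$ is a $U$-torsor and the fppf sheaf sequence is exact. The nonabelian long exact sequence
\[
E(k)\to Q(k)\xrightarrow{\delta} H^1(k,U)
\]
identifies the fibres of $E(k)\to Q(k)$ with the triviality of $\delta$. Concretely, for $q\in Q(k)$ the fibre $\pi^{-1}(q)$ is a right $U$-torsor over $k$. Since $H^1(k,U)$ is trivial, this torsor is trivial and so has a $k$-point, giving a preimage of $q$ in $E(k)$.

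The main point requiring care is the existence of the $\mathbf G_a$-composition series with normal terms. This is the definition (or standard characterization) of $k$-split unipotent groups in \cite{Conrad_Gabber_Prasad_2015}, so we will cite it rather than reprove it. A secondary check is that $U\subseteq E$ is normal: the fibres are then $U$-torsors on either side, and we use the right action. With these in place, no characteristic-$p$ subtleties remain, because vanishing of $H^1$ for $\mathbf G_a$ holds fppf.
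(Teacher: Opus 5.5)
Your proposal is correct and follows the paper's proof essentially step for step. Both arguments do d\'evissage along a $\mathbf G_a$-composition series, using the pointed-set exactness of $H^1(k,U_i)\to H^1(k,U_{i+1})\to H^1(k,\mathbf G_a)$ and the fppf vanishing of $H^1(k,\mathbf G_a)$, and then trivialize the torsor $\pi^{-1}(q)$ to obtain a $k$-point. Your extra remarks (why the $\mathbf G_a$ vanishing holds fppf and not just \'etale, and the explicit fibre-over-base-point argument) fill in details the paper leaves implicit. Using right rather than left $U$-torsors makes no difference.
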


\begin{proof}
Since $U$ is $k$-split, it admits a composition series by smooth connected normal $k$-subgroups whose successive quotients are $k$-isomorphic to $\mathbf G_a$. The result follows by induction on the length of such a series, using the exact sequence in nonabelian fppf cohomology and the standard vanishing
\[
H^1(k,\mathbf G_a)=0.
\]

For the asserted consequence, let $q\in Q(k)$. The fiber $\pi^{-1}(q)$ is a left $U$-torsor over $\operatorname{Spec} k$. Its class in $H^1(k,U)$ is trivial, so the fiber has a $k$-point. Thus $E(k)\to Q(k)$ is surjective.
\end{proof}

This lemma will allow us to quotient by the split unipotent radical without losing control of conjugacy over $k$.

\subsection*{Reduction modulo the split unipotent radical}

\begin{lemma}\label{lem:split-unipotent-reduction}
Let $G$ be a smooth connected affine $k$-group. Let $U=R_{us,k}(G)$
be its maximal $k$-split smooth connected unipotent normal $k$-subgroup, and put $\overline G=G/U.$
Assume that for every algebraic extension $K/k$, the natural map
\[
\mathbb X_\ast(\overline G)/\overline G(k)
\longrightarrow
\mathbb X_\ast(\overline G_K)/\overline G(K)
\]
is injective. Then for every algebraic extension $K/k$, the natural map
\[
\mathbb X_\ast(G)/G(k)
\longrightarrow
\mathbb X_\ast(G_K)/G(K)
\]
is injective.
\end{lemma}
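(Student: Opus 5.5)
Let $\lambda,\mu\in\mathbb X_\ast(G)$ become conjugate over $K$, i.e.\ $\mu_K=\operatorname{Int}(g)\circ\lambda_K$ for some $g\in G(K)$. The plan is to push everything to $\overline G$, use the hypothesis there to conjugate over $k$, lift the conjugating element to $G(k)$ via Lemma~\ref{lem:split-unipotent-h1}, and then finish inside a smaller group where all remaining discrepancy lives in $U$.

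\textbf{Step 1 (descent to $\overline G$).} Write $\pi\colon G\to\overline G$ and $\bar\lambda=\pi\circ\lambda$, $\bar\mu=\pi\circ\mu$. These are conjugate by $\pi(g)\in\overline G(K)$, so by hypothesis there is $\bar h\in\overline G(k)$ with $\bar\mu=\operatorname{Int}(\bar h)\circ\bar\lambda$. By Lemma~\ref{lem:split-unipotent-h1} applied to $1\to U\to G\to\overline G\to 1$, $\bar h$ lifts to $h\in G(k)$. Replacing $\lambda$ by $\operatorname{Int}(h)\circ\lambda$, we may assume $\pi\circ\lambda=\pi\circ\mu$. The original $K$-conjugacy is preserved, with $g$ replaced by $gh^{-1}$.

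\textbf{Step 2 (reduction to $H=\pi^{-1}(\bar\lambda(\mathbf G_m))$).} Set $H:=U\rtimes\lambda(\mathbf G_m)$, more precisely $H=U\cdot\lambda(\mathbf G_m)$. It is a smooth connected $k$-subgroup, being the preimage of the image torus; the paper hints at using $\mathrm{GL}_1\times_{G/U}G$, which is the same idea. Both $\lambda$ and $\mu$ factor through $H$, since $\mu(t)\in\lambda(t)U$. Thus $\mu(t)=\lambda(t)u(t)$ with $u$ a $U$-valued map. The key claim is that $\lambda$ and $\mu$ are $U(k)$-conjugate. Here $\lambda(\mathbf G_m)$ and $\mu(\mathbf G_m)$ are both maximal tori of $H$, or lie in them: $H/U\cong$ the image torus, a split torus of dimension at most $1$. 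Maximal $k$-split tori of the solvable group $H$ are $H(k)$-conjugate, in fact $U(k)$-conjugate since $H=U\cdot T$, by the conjugacy theorem \cite[Theorem 4.2.9]{conrad_prasad2017pseudoreductive} (or Borel--Tits for solvable groups). So after a further $U(k)$-conjugation we may assume $\mu(\mathbf G_m)$ and $\lambda(\mathbf G_m)$ lie in a common split torus $T_0\subset H$ mapping isomorphically onto $H/U$. Then $\pi$ restricted to $T_0$ is injective, and $\pi\circ\lambda=\pi\circ\mu$ forces $\lambda=\mu$. If $\bar\lambda$ is trivial, then $\lambda,\mu$ land in the unipotent group $U$ and are therefore trivial, which settles that case directly.

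\textbf{Main obstacle.} Note that Step 2 does not actually use the $K$-conjugacy after Step 1; the work lies in making the conjugacy of split tori in $H=U\cdot\lambda(\mathbf G_m)$ rigorous. One needs $\lambda(\mathbf G_m)$ to be a maximal $k$-split torus of $H$ (its dimension equals $\dim H/U$ and $U$ contains no tori) and the conjugating element to be taken in $H(k)=U(k)\lambda(k^\times)$. The latter acts on $\mu$ through $U(k)$ up to commuting torus factors. I would make sure the torus conjugacy theorem is cited for smooth connected affine (not necessarily reductive) groups, which is the generality of \cite[Theorem 4.2.9]{conrad_prasad2017pseudoreductive}.
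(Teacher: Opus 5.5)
Your proof is correct and is essentially the paper's argument. The paper works in the fiber product $E=\mathbf G_m\times_{\nu,\overline G,\pi}G$ (with $\nu=\pi\circ\lambda=\pi\circ\mu$), where $\lambda,\mu$ become sections of $E\to\mathbf G_m$. You instead work in its image $H=\pi^{-1}(\nu(\mathbf G_m))\subset G$. Both arguments conjugate the two maximal split tori by a $k$-point, strip off a torus factor to land in $U(k)$, and conclude from injectivity of the projection on the image torus. In your version that last step needs $U\cap\mu(\mathbf G_m)=1$ scheme-theoretically, which holds because a unipotent group has no nontrivial multiplicative-type subgroups.

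There is one small slip. The correct decomposition is $H(k)=U(k)\cdot\lambda(\mathbf G_m)(k)$, which can be strictly larger than $U(k)\lambda(k^\times)$ when $\ker\lambda\neq 1$. This does not affect the argument: $\lambda(\mathbf G_m)(k)$ centralizes $\lambda(\mathbf G_m)$, so writing $h=ut$ with $u\in U(k)$ still gives $u\,\lambda(\mathbf G_m)\,u^{-1}=\mu(\mathbf G_m)$, as you need.
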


\begin{proof}
Let $\pi:G\to \overline G$ be the quotient map. Suppose that
\[
\lambda,\mu\in \mathbb X_\ast(G)
\]
become $G(K)$-conjugate for some algebraic extension $K/k$.

After applying $\pi$, the cocharacters
\[
\overline\lambda:=\pi\circ \lambda,\qquad
\overline\mu:=\pi\circ \mu
\]
become $\overline G(K)$-conjugate. By the assumed injectivity for $\overline G$, there exists $\overline g\in \overline G(k)$ such that
\[
\overline\mu=\operatorname{Int}(\overline g)\circ \overline\lambda.
\]
By Lemma \ref{lem:split-unipotent-h1}, the map
\[
G(k)\to \overline G(k)
\]
is surjective. Choose $g\in G(k)$ lifting $\overline g$. Replacing $\lambda$ by $\operatorname{Int}(g)\circ\lambda$, we reduce to the case
\[
\pi\circ \lambda=\pi\circ \mu=:\nu:\mathbf G_m\to \overline G.
\]

Now form the pullback group
\[
E:=\mathbf G_m\times_{\nu,\overline G,\pi}G.
\]
Then $E$ is a smooth connected affine $k$-group fitting into an exact sequence
\[
1\to U\to E\xrightarrow{q}\mathbf G_m\to 1.
\]
The cocharacters $\lambda$ and $\mu$ are equivalently two $k$-homomorphic sections of $q$:
\[
s_\lambda(t)=(t,\lambda(t)),\qquad
s_\mu(t)=(t,\mu(t)).
\]

We use the following standard conjugacy theorem for split tori.

\begin{cited}[{\cite[Theorem 4.2.9]{conrad_prasad2017pseudoreductive}}]
Any two maximal split $k$-tori in a smooth connected affine $k$-group are conjugate under the group of $k$-rational points.
\end{cited}

The images
\[
s_\lambda(\mathbf G_m),\qquad s_\mu(\mathbf G_m)
\]
are split one-dimensional $k$-tori in $E$. They are maximal split $k$-tori. Indeed, any split torus in $E$ maps injectively to $\mathbf G_m$, because the kernel $U$ is unipotent and contains no nontrivial torus. Hence a split torus in $E$ has dimension at most $1$, and the two displayed tori are maximal.

By the cited theorem, there exists $e\in E(k)$ such that
\[
e\,s_\lambda(\mathbf G_m)\,e^{-1}=s_\mu(\mathbf G_m).
\]
Let
\[
a=q(e)\in k^\times=\mathbf G_m(k).
\]
Since $s_\mu(a)$ centralizes $s_\mu(\mathbf G_m)$, the element
\[
u:=s_\mu(a)^{-1}e
\]
still carries $s_\lambda(\mathbf G_m)$ onto $s_\mu(\mathbf G_m)$. Moreover,
\[
q(u)=1,
\]
so $u\in U(k)$.

Because $u\in \ker(q)$, we have
\[
q\circ \operatorname{Int}(u)\circ s_\lambda
=
q\circ s_\lambda.
\]
The restriction
\[
q|_{s_\mu(\mathbf G_m)}:s_\mu(\mathbf G_m)\to \mathbf G_m
\]
is an isomorphism. Therefore the only section of $q$ whose image is $s_\mu(\mathbf G_m)$ is $s_\mu$ itself. Hence
\[
s_\mu=\operatorname{Int}(u)\circ s_\lambda.
\]
Projecting to $G$ gives
\[
\mu=\operatorname{Int}(u)\circ \lambda.
\]
Thus $\lambda$ and $\mu$ are already $G(k)$-conjugate.
\end{proof}

Therefore, to prove the theorem for general $G$, it remains to prove it after quotienting by $R_{us,k}(G)$. In that quotient, the unipotent radical is $k$-wound.

\subsection*{A geometric normalizer lemma}

We next prove a purely geometric lemma over an algebraically closed field. It says that once two cocharacters inside a fixed maximal torus are conjugate by the whole group, they are already conjugate by the normalizer of that torus.

\begin{lemma}\label{lem:geometric-normalizer-conjugacy}
Let $F$ be an algebraically closed field, let $H$ be a smooth connected affine $F$-group, and let $T\subset H$ be a maximal torus. If
$
\lambda,\mu\in X_\ast(T)
$
are $H(F)$-conjugate, then they are conjugate by an element of $N_H(T)(F)$.
\end{lemma}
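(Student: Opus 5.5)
The plan is the standard centralizer argument: move the conjugating element into the normalizer of $T$ by using conjugacy of maximal tori inside the centralizer of the cocharacter. Suppose $\mu=\operatorname{Int}(g)\circ\lambda$ with $g\in H(F)$. Set $L:=Z_H(\mu)$, the scheme-theoretic centralizer of the image $\mu(\mathbf G_m)$ in $H$. Since $\mu(\mathbf G_m)$ is a torus, $L$ is smooth (centralizers of tori in smooth affine groups are smooth). It is also connected, because $H$ is connected and centralizers of tori in connected smooth affine groups are connected. Over the algebraically closed field $F$ we may simply work with the reduced group of $F$-points, so only the connectedness needs to be justified; I would cite the standard result for $H$ connected.

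The key observation concerns which tori lie in $L$. Because $\lambda,\mu\in X_\ast(T)$ and $T$ is commutative, $T$ centralizes $\mu(\mathbf G_m)$, so $T\subset L$. Since $T$ is maximal in $H$, it is a maximal torus of $L$. On the other side, $\operatorname{Int}(g)$ carries $Z_H(\lambda)$ isomorphically onto $Z_H(\mu)=L$. As $T\subset Z_H(\lambda)$ for the same reason, we get $gTg^{-1}\subset L$, and $gTg^{-1}$ is again a maximal torus of $H$, hence of $L$.

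Next we apply conjugacy of maximal tori in the smooth connected affine group $L$ over the algebraically closed field $F$ (the Borel theorem; equivalently, the cited Theorem 4.2.9 of Conrad--Prasad, since over $\overline F=F$ every torus is split). This gives $l\in L(F)$ with $l\,gTg^{-1}\,l^{-1}=T$. Set $n:=lg$. Then $n\in N_H(T)(F)$, and
\[
\operatorname{Int}(n)\circ\lambda=\operatorname{Int}(l)\circ\operatorname{Int}(g)\circ\lambda=\operatorname{Int}(l)\circ\mu=\mu,
\]
because $l$ centralizes $\mu$. So $\lambda$ and $\mu$ are conjugate by $n\in N_H(T)(F)$, as required.

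I expect the main obstacle to be the justification that $L=Z_H(\mu)$ is a smooth connected group, so that maximal-torus conjugacy applies. If connectedness were in doubt, a workaround is to replace $L$ by its identity component $L^0$. Every torus is connected, so both $T$ and $gTg^{-1}$ lie in $L^0$, and conjugacy within $L^0$ suffices. With that substitution the argument does not actually need $L$ itself to be connected, only the smoothness of $L^0$, which holds since we may pass to the reduced subgroup over $F$.
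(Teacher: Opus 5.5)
Your proof is correct and is essentially the paper's argument: you conjugate $gTg^{-1}$ back to $T$ inside the centralizer of $\mu(\mathbf G_m)$ using conjugacy of maximal tori there, and take $n=lg$, which fixes $\mu$ because $l$ centralizes it. The paper works directly with the identity component $Z_H(\mu(\mathbf G_m))^\circ$, which is exactly your fallback; your claim that the full centralizer of a torus in a connected group is already connected is also a standard fact, so either version works.
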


\begin{proof}
Choose $g\in H(F)$ such that
\[
\mu=\operatorname{Int}(g)\circ \lambda.
\]
Let
\[
C:=Z_H\bigl(\mu(\mathbf G_m)\bigr)^\circ.
\]
By the standard theorem on centralizers of tori in smooth affine groups, $C$ is a smooth connected affine $F$-group.

Both $T$ and $gTg^{-1}$ contain $\mu(\mathbf G_m)$, so both are contained in $C$. They are maximal tori of $C$: any larger torus in $C$ would be a larger torus in $H$.

Since $F$ is algebraically closed, maximal tori are maximal split tori. Applying the conjugacy theorem for maximal split tori to the smooth connected affine group $C$, we can choose $c\in C(F)$ such that
\[
c\,gTg^{-1}c^{-1}=T.
\]
Then
\[
n:=cg\in N_H(T)(F).
\]
Because $c$ centralizes $\mu(\mathbf G_m)$, we have
\[
\operatorname{Int}(n)\circ \lambda
=
\operatorname{Int}(c)\circ \mu
=
\mu.
\]
Thus $\lambda$ and $\mu$ are $N_H(T)(F)$-conjugate.
\end{proof}

\subsection*{The wound case and relative dominant representatives}

We now handle the key case: the unipotent radical is $k$-wound. The proof uses the relative root system attached to a maximal split torus.

\begin{lemma}\label{lem:wound-relative-dominant-representatives}
Let $H$ be a smooth connected affine $k$-group such that $R_{u,k}(H)$ is $k$-wound. Let $S\subset H$ be a maximal split $k$-torus, and let $P$ be a minimal pseudo-parabolic $k$-subgroup containing $S$.

Let
\[
{}_k\Phi=\Phi(H/R_{u,k}(H),S)
\]
be the relative root system, with positive system
\[
{}_k\Phi^+=\Phi(P/R_{u,k}(H),S).
\]
Define the closed dominant chamber
\[
C=
\left\{
\nu\in X_\ast(S)
\ \middle|\
\langle a,\nu\rangle\ge 0
\text{ for all }a\in {}_k\Phi^+
\right\}.
\]
Then:
\begin{enumerate}[leftmargin=*]
\item Every element of $X_\ast(S)$ is $N_H(S)(k)$-conjugate to a unique element of $C$.
\item If $\lambda,\mu\in X_\ast(S)$ become $H(K)$-conjugate over an algebraic extension $K/k$, then they have the same unique representative in $C$. In particular, $\lambda$ and $\mu$ are $N_H(S)(k)$-conjugate.
\end{enumerate}
\end{lemma}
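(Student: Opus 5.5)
Part (1) follows from the structure theory of the relative root system in Conrad--Gabber--Prasad, as surveyed in \cite{conrad_prasad2017pseudoreductive}. Since $R_{u,k}(H)$ is $k$-wound, the quotient $H/R_{u,k}(H)$ is pseudo-reductive, and ${}_k\Phi$ is a (possibly non-reduced) root system in $X^\ast(S)_{\mathbb Q}$. Its Weyl group $W({}_k\Phi)$ coincides with the relative Weyl group $N_H(S)(k)/Z_H(S)(k)$, and this group acts simply transitively on the minimal pseudo-parabolic subgroups containing $S$ \cite[Proposition 5.3.1]{conrad_prasad2017pseudoreductive}. In particular every element of $W({}_k\Phi)$ lifts to $N_H(S)(k)$. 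The standard fact that the closed dominant chamber of a root system is a fundamental domain for its Weyl group, applied to the lattice $X_\ast(S)\subset X_\ast(S)_{\mathbb Q}$, then gives existence and uniqueness of the representative in $C$. Uniqueness uses that $W$ acts through $N_H(S)(k)/Z_H(S)(k)$, and $Z_H(S)$ acts trivially on $X_\ast(S)$.

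For (2), I would first enlarge $K$ to an algebraic closure $\bar k$; conjugacy over $K$ persists over $\bar k$. Choose a maximal $k$-torus $T\supset S$ of $H$, which exists because $Z_H(S)$ has a maximal $k$-torus. By Lemma~\ref{lem:geometric-normalizer-conjugacy}, applied to $H_{\bar k}$ and $T_{\bar k}$, the images $\lambda,\mu\in X_\ast(S)\subset X_\ast(T_{\bar k})$ are conjugate under $N_H(T)(\bar k)$, hence under the absolute Weyl group $W_{\mathrm{abs}}=W(H_{\bar k},T_{\bar k})$. This group is the Weyl group of the reduced root system $\Phi_{\mathrm{abs}}=\Phi((H/R_{u,k}(H))_{\bar k}^{\mathrm{red}},T_{\bar k})$. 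Now choose a positive system $\Phi_{\mathrm{abs}}^+$ compatible with $P$: an absolute root whose restriction to $S$ lies in ${}_k\Phi^+$ is positive. This is done by ordering $X^\ast(T)$ lexicographically with $X^\ast(S)$ first. The key input (CGP Theorem C.2.15, the relative-versus-absolute root comparison) is that restriction sends $\Delta_{\mathrm{abs}}$ onto ${}_k\Delta\cup\{0\}$, and every relative root is a restriction of an absolute root.

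Given this, I would show $C=X_\ast(S)\cap C_{\mathrm{abs}}$, where $C_{\mathrm{abs}}$ is the absolute dominant chamber. For $\nu\in X_\ast(S)$ we have $\langle\alpha,\nu\rangle=\langle\alpha|_S,\nu\rangle$. If $\nu\in C$, each simple absolute root restricts to $0$ or to a positive relative root, so $\nu$ is absolutely dominant. Conversely, every positive relative root is a restriction of a positive absolute root, so absolute dominance implies relative dominance. Now let $\lambda',\mu'\in C$ be the representatives of $\lambda,\mu$ from (1). They are $N_H(S)(k)$-conjugate to $\lambda,\mu$, hence $W_{\mathrm{abs}}$-conjugate to each other by the previous paragraph, and both lie in $C_{\mathrm{abs}}$. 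Since $C_{\mathrm{abs}}$ is a fundamental domain for $W_{\mathrm{abs}}$, we get $\lambda'=\mu'$, which proves (2).

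The main obstacle is the comparison step: the surjectivity of $\Delta_{\mathrm{abs}}\to{}_k\Delta\cup\{0\}$, together with the compatibility of the chosen absolute positive system with $P$, in the pseudo-reductive setting where $(H/R_{u,k}(H))_{\bar k}$ need not be reductive. I would handle this by passing to the maximal reductive quotient over $\bar k$, and I would justify the identification of ${}_k\Phi$ with the nonzero restrictions of $\Phi_{\mathrm{abs}}$ by comparing weights of $S$ on $\mathrm{Lie}$ of the pseudo-parabolic's unipotent radical, citing CGP C.2.15 for the precise statement.
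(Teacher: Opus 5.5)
Your proposal is correct and follows essentially the paper's argument. Part (1) goes through the identification $N_H(S)(k)/Z_H(S)(k)\cong W({}_k\Phi)$ and the chamber being a fundamental domain. Part (2) passes to dominant representatives, uses the geometric normalizer lemma to get absolute Weyl-group conjugacy, and concludes from the equivalence of relative and absolute dominance supplied by CGP Theorem C.2.15, which, as the paper's expert remark notes, is the right source rather than the survey's Theorem 5.3.2(iv). Choosing the absolute positive system by a lexicographic order instead of a minimal pseudo-parabolic $B\subset P_{k_s}$, and identifying $W(H_{\bar k},T_{\bar k})$ directly over $\bar k$, are only cosmetic differences.

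One small caution: you take $\Phi_{\mathrm{abs}}$ to be the roots of the geometric reductive quotient rather than $\Phi(H_{k_s}/R_{u,k_s}(H_{k_s}),T)$. Then in the non-reduced characteristic-$2$ pseudo-reductive case a relative root may only be a positive rational multiple (e.g.\ one half) of the restriction of an absolute root. Your surjectivity statements should therefore be read up to positive multiples, which is all the dominance comparison needs.
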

\begin{remark*}[From Human Experts]
The closed chamber $C$ should be defined in the real vector space $X_*(S)_{\mathbb{R}}$ instead of $X_*(S)$, where $X_*(S)_{\mathbb{R}}$ is the direct sum of the $\mathbb{R}$-span of the roots $X_*(S')$ for $S' = (S \cap DH)_{\textrm{red}}^0)$ and $X_*(Z)$ for the maximal central $k$-split torus $Z$ in $H$.
\end{remark*}
\begin{proof}
We use the following structure results.

\begin{cited}[{\cite[Proposition 5.3.1]{conrad_prasad2017pseudoreductive}}]
If $S$ is a maximal split $k$-torus in a smooth connected affine $k$-group $G$, then
\[
W(G,S):=N_G(S)/Z_G(S)
\]
is a constant finite étale $k$-group, and the natural inclusion
\[
N_G(S)(k)/Z_G(S)(k)\hookrightarrow W(G,S)(k)
\]
is an equality.
\end{cited}

\begin{cited}[{\cite[Theorem 5.3.2(i)]{conrad_prasad2017pseudoreductive}}]
For a smooth connected affine $k$-group $G$, a maximal split $k$-torus $S$, and a minimal pseudo-parabolic $k$-subgroup $P$ containing $S$, the set
\[
{}_k\Phi=\Phi(G/R_{u,k}(G),S)
\]
is a root system in its $\mathbf Q$-span in $X(S)_\mathbf Q$; the subset
\[
\Phi(P/R_{u,k}(G),S)
\]
is a positive system of roots; and the natural map
\[
N_G(S)(k)/Z_G(S)(k)\to W({}_k\Phi)
\]
is an isomorphism.
\end{cited}

\begin{cited}[{\cite[Theorem 5.3.2(iv)]{conrad_prasad2017pseudoreductive}}]
If $R_{u,k}(G)$ is $k$-wound, then the root system $\Phi(G,S)$ consists precisely of the nontrivial $S$-weights on $\operatorname{Lie}(G)$.
\end{cited}

By the second cited result, the group
\[
N_H(S)(k)/Z_H(S)(k)
\]
identifies with the Weyl group of the relative root system ${}_k\Phi$. The usual root-system argument implies that every Weyl orbit on $X_\ast(S)$ contains a unique element of the closed dominant chamber $C$. By the first cited result, every Weyl element is represented by an element of $N_H(S)(k)$. This proves part (1).

For part (2), let $\lambda^+,\mu^+\in C$ be the unique dominant representatives of $\lambda$ and $\mu$ obtained in part (1). Since the conjugations from $\lambda$ to $\lambda^+$ and from $\mu$ to $\mu^+$ are already defined over $k$, the cocharacters $\lambda^+$ and $\mu^+$ are still $H(K)$-conjugate. Replacing $\lambda,\mu$ by $\lambda^+,\mu^+$, we may assume from now on that
$
\lambda,\mu\in C.
$

Let $k_s$ be a separable closure of $k$. Choose a maximal $k_s$-split torus
$
T\subset H_{k_s}
$
containing $S_{k_s}$. Choose a minimal pseudo-parabolic $k_s$-subgroup
$
B\subset P_{k_s}
$
containing $T$. Let
\[
\Phi_{\mathrm{abs}}
=
\Phi(H_{k_s}/R_{u,k_s}(H_{k_s}),T)
\]
be the corresponding absolute root system, with positive system
\[
\Phi_{\mathrm{abs}}^+
=
\Phi(B/R_{u,k_s}(H_{k_s}),T).
\]

For $\nu\in X_\ast(S)\subset X_\ast(T)$ and $\alpha\in \Phi_{\mathrm{abs}}$, we have
$
\langle \alpha,\nu\rangle
=
\langle \alpha|_S,\nu\rangle.
$
The $k$-wound property is preserved under separable extension. Hence, by the third cited result, the relative and absolute roots are precisely the nontrivial weights of $S$ and $T$ on the relevant Lie algebras.

Because $B\subset P_{k_s}$, every positive absolute root $\alpha\in \Phi_{\mathrm{abs}}^+$ restricts either to $0$ or to an element of ${}_k\Phi^+$. Conversely, every element of ${}_k\Phi^+$ occurs as the nonzero restriction of some element of $\Phi_{\mathrm{abs}}^+$. Therefore, for cocharacters belonging to $X_\ast(S)$, dominance with respect to ${}_k\Phi^+$ is equivalent to dominance with respect to $\Phi_{\mathrm{abs}}^+$.

\begin{remark*}[From Human Experts]
This is by applying \cite[Theorem C.2.15]{Conrad_Gabber_Prasad_2015} to $X_*(S')_{\mathbb{R}}$, rather than applying \cite[Theorem 5.3.2(iv)]{conrad_prasad2017pseudoreductive}.
\end{remark*}

Thus $\lambda$ and $\mu$ are also dominant for the absolute positive system $\Phi_{\mathrm{abs}}^+$.

Now let $\Omega$ be an algebraic closure containing both $K$ and $k_s$. Since $\lambda$ and $\mu$ are $H(K)$-conjugate, they are $H(\Omega)$-conjugate. The torus $T_\Omega$ is a maximal torus of $H_\Omega$. By Lemma \ref{lem:geometric-normalizer-conjugacy}, the cocharacters $\lambda$ and $\mu$ are conjugate by an element of
$
N_{H_\Omega}(T_\Omega)(\Omega).
$

By the first cited result applied over $k_s$ to the maximal split torus $T$, the group
$
N_{H_{k_s}}(T)/Z_{H_{k_s}}(T)
$
is a constant finite étale group. Therefore the purely inseparable extension from $k_s$ to $\Omega$ introduces no new Weyl elements. By the second cited result, this Weyl group is the Weyl group
$
W(\Phi_{\mathrm{abs}}).
$
Hence $\lambda$ and $\mu$ lie in the same $W(\Phi_{\mathrm{abs}})$-orbit.

But every Weyl-group orbit in a root system contains a unique element in the closed dominant chamber. Since both $\lambda$ and $\mu$ are $\Phi_{\mathrm{abs}}^+$-dominant, it follows that
$
\lambda=\mu.
$

Undoing the initial $N_H(S)(k)$-conjugations to dominant representatives, we conclude that the original $\lambda$ and $\mu$ have the same unique representative in $C$. Therefore they are $N_H(S)(k)$-conjugate.
\end{proof}

We can now prove injectivity in the wound case.

\begin{lemma}\label{lem:wound-unipotent-case}
Let $H$ be a smooth connected affine $k$-group such that $R_{u,k}(H)$ is $k$-wound. Then for every algebraic extension $K/k$, the natural map
\[
\mathbb X_\ast(H)/H(k)
\longrightarrow
\mathbb X_\ast(H_K)/H(K)
\]
is injective.
\end{lemma}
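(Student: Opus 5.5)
The plan is to reduce an arbitrary pair of cocharacters to a pair lying in a fixed maximal split $k$-torus $S$ of $H$, and then invoke Lemma~\ref{lem:wound-relative-dominant-representatives}(2).

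Let $\lambda,\mu\in\mathbb X_\ast(H)$, and suppose they become $H(K)$-conjugate for some algebraic extension $K/k$. Fix a maximal split $k$-torus $S\subset H$ and a minimal pseudo-parabolic $k$-subgroup $P\supset S$. The image $\lambda(\mathbf G_m)$ is a split $k$-torus, so it is contained in some maximal split $k$-torus $S_\lambda$. By the cited conjugacy theorem \cite[Theorem 4.2.9]{conrad_prasad2017pseudoreductive}, there is $g\in H(k)$ with $gS_\lambda g^{-1}=S$. Replacing $\lambda$ by $\operatorname{Int}(g)\circ\lambda$ therefore puts $\lambda$ in $X_\ast(S)$. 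Doing the same for $\mu$ puts $\mu$ in $X_\ast(S)$ as well.

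These replacements are conjugations by $k$-points, so they do not change the $H(k)$-orbits. They also preserve the hypothesis: since $H(k)\subset H(K)$, the new $\lambda$ and $\mu$ are still $H(K)$-conjugate.

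We now have $\lambda,\mu\in X_\ast(S)$ that become $H(K)$-conjugate. The group $H$ satisfies the hypotheses of Lemma~\ref{lem:wound-relative-dominant-representatives}, namely that $R_{u,k}(H)$ is $k$-wound. Part (2) of that lemma then gives an element of $N_H(S)(k)\subset H(k)$ conjugating $\lambda$ to $\mu$. Undoing the initial conjugations shows that the original cocharacters are $H(k)$-conjugate, which proves injectivity.

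The argument is a formal consequence of the preceding lemma. The only step needing care is the first one: that the image of a cocharacter is a split torus lying inside some maximal split $k$-torus. This holds because every split $k$-torus is contained in a maximal one, by a dimension argument. If the image is trivial, $\lambda$ is the trivial cocharacter, which already lies in $X_\ast(S)$. All the real difficulty sits in Lemma~\ref{lem:wound-relative-dominant-representatives}, which we assume.
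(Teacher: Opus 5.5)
Your proof is correct and matches the paper's argument: you conjugate $\lambda$ and $\mu$ separately by elements of $H(k)$ into $X_\ast(S)$ via the conjugacy of maximal split $k$-tori \cite[Theorem 4.2.9]{conrad_prasad2017pseudoreductive}, observe that they remain $H(K)$-conjugate, and conclude with part (2) of Lemma~\ref{lem:wound-relative-dominant-representatives}. Your extra remarks (that a split torus lies in a maximal one, and the trivial-cocharacter case) only make explicit details the paper leaves implicit.
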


\begin{proof}
Let
$
\lambda,\mu\in \mathbb X_\ast(H)
$
be two $k$-cocharacters that become $H(K)$-conjugate.

Choose a maximal split $k$-torus $S\subset H$. Since the image of any $k$-cocharacter is a split $k$-torus, the conjugacy theorem for maximal split tori allows us to conjugate $\lambda$ and $\mu$, separately by elements of $H(k)$, so that both lie in
$
X_\ast(S).
$
These new cocharacters are still $H(K)$-conjugate.

Choose a minimal pseudo-parabolic $k$-subgroup $P$ containing $S$. Applying Lemma \ref{lem:wound-relative-dominant-representatives} to $H,S,P$, the two cocharacters in $X_\ast(S)$ are $N_H(S)(k)$-conjugate. Therefore the original cocharacters $\lambda$ and $\mu$ are $H(k)$-conjugate.
\end{proof}

\subsection*{Proof of the main theorem}

We now return to an arbitrary smooth connected affine $k$-group $G$.

Let
$
U=R_{us,k}(G)
$
be the maximal $k$-split smooth connected unipotent normal $k$-subgroup of $G$, and set
$
\overline G:=G/U.
$

We use one final structure theorem.

\begin{cited}[{\cite[Corollary 3.12]{conrad2015solvable}}]
For any smooth connected affine $k$-group $G$,
\[
R_{us,k}(G)=R_{u,k}(G)_{\mathrm{split}}.
\]
In particular,
\[
R_{u,k}(G)/R_{us,k}(G)
\]
is $k$-wound.
\end{cited}

Applying this to $G$, the unipotent radical of $\overline G$ is
\[
R_{u,k}(\overline G)
=
R_{u,k}(G)/R_{us,k}(G),
\]
which is $k$-wound. Therefore Lemma \ref{lem:wound-unipotent-case} applies to $\overline G$ and shows that, for every algebraic extension $K/k$, the map
\[
\mathbb X_\ast(\overline G)/\overline G(k)
\longrightarrow
\mathbb X_\ast(\overline G_K)/\overline G(K)
\]
is injective.

Finally, Lemma \ref{lem:split-unipotent-reduction} lifts this injectivity from $\overline G$ to $G$. Hence, if two $k$-homomorphisms
\[
\mathbf G_m\to G
\]
become conjugate by an element of $G(K)$, then they were already conjugate by an element of $G(k)$.

Thus the natural map
\[
\mathbb X_\ast(G)/G(k)
\longrightarrow
\mathbb X_\ast(G_K)/G(K)
\]
is injective. This proves \Cref{thm:cocharacter-conjugacy-descent}. \qed

\section{Raw Output of Anderson's Open Problem (Rethlas + GPT-5.4 xhigh)}

In this section, we present the raw output of Rethlas on Anderson's open problem. Only formatting adjustments were made to display the Markdown file in LaTeX.

\begin{Verbatim}[commandchars=!??]
# lemma lem:complete_domain_choice

## statement
Let
\[
T=\mathbb{C}[[x,y,z]]/(x^2-yz),
\]
and let \(M=(x,y,z)T\). Then:

1. \(T\) is a complete \(2\)-dimensional Cohen-Macaulay local domain.
2. \(|T|=|T/M|=|\mathbb{C}|\).
3. The height-one prime ideal
\[
Q=(x,y)T
\]
is not principal.

## proof
The ring \(\mathbb{C}[[x,y,z]]\) is a complete regular local domain of dimension \(3\). 
The quotient
\[
T=\mathbb{C}[[x,y,z]]/(x^2-yz)
\]
is isomorphic to the subring \(\mathbb{C}[[u^2,uv,v^2]]\subseteq \mathbb{C}[[u,v]]\) via
\[
x\mapsto uv,\qquad y\mapsto u^2,\qquad z\mapsto v^2.
\]
Hence \(T\) is a domain. Since \(x^2-yz\) is a nonzerodivisor in the regular local ring 
\(\mathbb{C}[[x,y,z]]\), the quotient \(T\) is a hypersurface and therefore Cohen-
Macaulay of dimension \(2\).

Also \(T/M\cong \mathbb{C}\), and a formal power series ring in finitely many variables 
over the uncountable field \(\mathbb{C}\) has the same cardinality as \(\mathbb{C}\), so 
\(|T|=|T/M|=|\mathbb{C}|\).

Finally,
\[
T/Q \cong \mathbb{C}[[z]],
\]
so \(Q\) is prime and has height \(1\). To see that \(Q\) is not principal, note that 
\(MQ\subseteq M^2\), while \(x,y\notin M^2\) because the defining relation is quadratic. 
Hence the images of \(x\) and \(y\) in \(Q/MQ\) are nonzero and linearly independent over 
\(T/M\cong\mathbb{C}\). Thus \(\mu(Q)=\dim_{\mathbb{C}}(Q/MQ)\ge 2\), so \(Q\) is not
principal.


# lemma lem:jensen_special_case

## statement
There exists a \(2\)-dimensional local UFD \(A\) such that \(\widehat{A}\cong T\) and 
the generic formal fiber of \(A\) is local with maximal ideal \((0)\).

## proof
We apply the following external result.

Complete cited statement:

Corollary 2.4 of Jensen, *Completions of UFDs with Semi-Local Formal Fibers* (paper_id: 
`10.1080/00927870500346321`, theorem_id: `Corollary 2.4`, arXiv id: none) states:

“Let \((T,M)\) be a complete local ring and \(|T/M|=|T|\). Let \(P\in \operatorname{Spec}
T\). Then there exists a local UFD \(A\) such that \(\widehat A=T\) and the generic 
formal fiber of \(A\) is local with maximal ideal \(P\) iff \(T\) is a field or DVR and 
\(P=(0)\), or \(T\) has depth at least two and:

1. \(P\) is nonmaximal and contains all associated prime ideals of \(T\);
2. \(P\cap\) the prime subring of \(T\) is \((0)\);
3. if \(J\in \operatorname{Spec}T\) with \(\operatorname{ht}(J)>\operatorname{depth}(T_J)
=1\), then \(J\subseteq P\).”

We apply this with the ring \(T\) from Lemma \ref{lem:complete_domain_choice} and with 
\(P=(0)\). The hypotheses hold:

1. \(T\) is complete and \(|T|=|T/M|\) by Lemma \ref{lem:complete_domain_choice}.
2. \(T\) has depth \(2\) by Lemma \ref{lem:complete_domain_choice}.
3. Since \(T\) is a domain, its only associated prime is \((0)\), so \(P=(0)\) contains 
all associated primes.
4. \(P=(0)\) is nonmaximal and meets the prime subring trivially.
5. Because \(T\) is Cohen-Macaulay, there is no prime \(J\) with \(\operatorname{ht}(J)>
\operatorname{depth}(T_J)=1\).

Therefore Jensen’s corollary yields a local UFD \(A\) with \(\widehat A\cong T\) whose 
generic formal fiber is local with maximal ideal \((0)\).


# lemma lem:a_is_weak_and_has_bad_quotient

## statement
For the ring \(A\) of Lemma \ref{lem:jensen_special_case}, the following hold:

1. \(A\) is weakly quasi-complete.
2. There exists a prime element \(a\in A\) such that \(A/aA\) is a one-dimensional 
Noetherian local domain that is not weakly quasi-complete.

## proof
Because the generic formal fiber of \(A\) is local with maximal ideal \((0)\), the only 
prime ideal of \(\widehat A\cong T\) contracting to \((0)\) is \((0)\) itself. Hence 
every nonzero prime ideal of \(\widehat A\) has nonzero contraction to \(A\).

Now invoke the following external result.

Complete cited statement:

Proposition 1 of Farley, *Quasi-completeness and localizations of polynomial domains: A
conjecture from “Open Problems in Commutative Ring Theory”* (paper_id: `BKMS.b140895`, 
theorem_id: `Proposition 1`, arXiv id: none) states:

“A Noetherian local integral domain \(R\) is weakly quasi-complete if and only if \(P\cap
R\neq\{0\}\) for each non-zero prime ideal \(P\) of \(\widehat R\), the completion of 
\(R\).”

Since \(A\) is a Noetherian local integral domain and every nonzero prime of \(\widehat 
A\) meets \(A\) nontrivially, Proposition 1 shows that \(A\) is weakly quasi-complete.

Next consider the height-one prime \(Q=(x,y)T\) from Lemma \ref{lem:complete_domain_
choice}. Since \(Q\neq (0)\), the triviality of the generic formal fiber implies
\[
q:=Q\cap A \neq (0).
\]
Because \(\widehat A\) is faithfully flat over \(A\), we have
\[
1 \le \operatorname{ht}(q)\le \operatorname{ht}(Q)=1,
\]
so \(\operatorname{ht}(q)=1\). As \(A\) is a UFD, \(q=aA\) for some prime element \(a\in 
A\).

We claim that \(aT\) is not prime. Indeed, \(aT\subseteq Q\). If \(aT\) were prime, then 
\(\operatorname{ht}(aT)=1\), so the inclusion \(aT\subseteq Q\) of height-one prime 
ideals would force \(aT=Q\), contradicting the fact that \(Q\) is not principal.

Therefore \(T/aT\) is not a domain. Since completion commutes with quotient for 
Noetherian local rings,
\[
\widehat{A/aA}\cong \widehat A/a\widehat A \cong T/aT,
\]
so \(\widehat{A/aA}\) is not a domain. Because \(a\) is prime in the domain \(A\), the 
quotient \(A/aA\) is a one-dimensional Noetherian local domain. Hence \(A/aA\) is not 
analytically irreducible.

Now use the criterion quoted in the problem statement itself:

Complete cited statement:

Anderson, *Quasi-complete semilocal rings and modules* (paper_id: `Anderson-2014-
Chapter`, theorem_id: `Corollary 2.2`, arXiv id: none): “A one-dimensional Noetherian 
local domain is (weakly) quasi-complete if and only if it is analytically irreducible.”

Since \(A/aA\) is a one-dimensional Noetherian local domain that is not analytically 
irreducible, it follows that \(A/aA\) is not weakly quasi-complete.


# theorem thm:main

## statement
Let (R,M) be a Noetherian local ring. R is said to be quasi-complete if for any decreasing
sequence {A_n}_{n=1}^!(!infty!) of ideals of R and each natural number k, there exists a 
natural number s_k with A_{s_k} !(!subseteq!) (!(!cap!)_{n=1}^!(!infty!) A_n) + M^k. If this condition holds 
for any decreasing sequence {A_n}_{n=1}^!(!infty!) of ideals of R with !(!cap!)_{n=1}^!(!infty!) A_n = 0, 
then R is called weakly quasi-complete (in which case we actually have A_{s_k} !(!subseteq!) M^k). 
If R is complete, then R is quasi-complete, which implies that R is weakly quasi-complete. 
Also, R is quasi-complete if and only if each homomorphic image of R is weakly quasi-
complete. The implication "R complete implies R is quasi-complete" was first proved by 
Chevalley [24, Lemma 7]; see also [4, Theorem 1.3]. Note that a DVR is quasi-complete 
but need not be complete. More generally, a one-dimensional Noetherian local domain is 
(weakly) quasi-complete if and only if it is analytically irreducible [4, Corollary 2.2]. 
References: [24] C. Chevalley, On the theory of local rings, Ann. Math. 44 (1943), 690–
708. [4] D. D. Anderson, Quasi-complete semilocal rings and modules, Commutative Algebra:
Recent Advances in Commutative Rings, Integer-Valued Polynomials, and Polynomial Functions, 
Springer Verlag, New York, 2014. Prove that there exists a weakly quasi-complete ring 
that is not quasi-complete.

## proof
Let \(A\) be the local UFD constructed in Lemma \ref{lem:jensen_special_case}. By Lemma 
\ref{lem:a_is_weak_and_has_bad_quotient}, \(A\) is weakly quasi-complete.

The same lemma also gives a prime element \(a\in A\) such that the homomorphic image 
\(A/aA\) is not weakly quasi-complete.

Finally, the problem statement itself records the equivalence:

Complete cited statement:

“A Noetherian local ring \(R\) is quasi-complete if and only if each homomorphic image 
of \(R\) is weakly quasi-complete.”

Applying this to \(A\), the quotient \(A/aA\) shows that \(A\) is not quasi-complete.

Thus \(A\) is a weakly quasi-complete Noetherian local ring that is not quasi-complete. 
This proves the required existence statement.
\end{Verbatim}

\section{Output for the Anderson's Open Problem (GPT-5.4 Pro)}

Here is a concrete example.

Take a prime \(p\), and let
\[
k=\mathbf F_p(u_1,u_2,u_3,\dots)
\]
be the rational function field in countably many indeterminates. Put
\[
A:=k^p[[X,Y]][k]\subset k[[X,Y]].
\]
Nagata's appendix \cite{NagataLocalRings} construction shows that for rings of the form
\(k^p[[X_1,\dots,X_n]][k]\), the ring is a regular local ring and is
properly contained in its completion \(k[[X_1,\dots,X_n]]\); we use the
case \(n=2\).

So \(A\) is a Noetherian local ring, and its \(M\)-adic completion is
\[
\widehat A = k[[X,Y]].
\]

I will prove:
\begin{enumerate}
    \item \(A\) is weakly quasi-complete.
    \item \(A\) is not quasi-complete.
\end{enumerate}

The second step will use the criterion you quoted: a Noetherian local ring
is quasi-complete if and only if every homomorphic image is weakly
quasi-complete \cite{cahen2014open}.

\subsection*{A useful criterion}

Let \((R,\mathfrak m)\) be a Noetherian local ring and \(\widehat R\) its
\(\mathfrak m\)-adic completion.

\begin{lemma}\label{lem: weak_qc_crt}
If every nonzero prime ideal of \(\widehat R\) contracts to a nonzero ideal
of \(R\), then \(R\) is weakly quasi-complete.
\end{lemma}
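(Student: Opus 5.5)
The plan is to pass to the completion, where quasi-completeness is available by Chevalley's lemma (Corollary~\ref{cor:basic-relationships}(1)). Then faithful flatness and the hypothesis on primes of $\widehat R$ should bring the intersection back down to zero. I expect to get a complete argument only when $R$ is a domain, which is the case used later; the general case has a gap described at the end.

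Let $\{A_n\}$ be a decreasing sequence of ideals of $R$ with $\bigcap_n A_n = 0$, and fix $k \ge 1$. Set $B := \bigcap_n A_n\widehat R$. Since $\widehat R$ is complete, it is quasi-complete, so there is $s_k$ with
\[
A_{s_k}\widehat R \subseteq B + \mathfrak m^k \widehat R .
\]
The goal is to show $B = 0$. Then $A_{s_k} \subseteq A_{s_k}\widehat R \cap R \subseteq \mathfrak m^k\widehat R \cap R = \mathfrak m^k$, using faithful flatness of $R \to \widehat R$ for the last equality.

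First, compute $B \cap R$. By faithful flatness, $A_n\widehat R \cap R = A_n$ for every $n$. Hence
\[
B \cap R = \bigcap_n \bigl(A_n\widehat R \cap R\bigr) = \bigcap_n A_n = 0 .
\]
So $B$ is an ideal of $\widehat R$ meeting $R$ only in $0$.

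Second, show $B = 0$ using the hypothesis. When $R$ is a domain, $S = R \setminus \{0\}$ is a multiplicative subset of $\widehat R$ disjoint from $B$. If $B \neq 0$, the usual Zorn argument gives a prime $P \supseteq B$ of $\widehat R$ with $P \cap S = \emptyset$. Then $P \neq 0$ but $P \cap R = 0$, contradicting the hypothesis. Hence $B = 0$, which finishes the domain case.

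The main obstacle is a non-domain $R$. There $R \setminus \{0\}$ is not multiplicative, so $B \cap R = 0$ does not directly produce a nonzero prime contracting to $0$. For example, when $R = \widehat R$ the step $B = 0$ is automatic, but the prime-avoidance argument says nothing. In that case I would first try to show that the hypothesis forces $R \to \widehat R$ to behave like an isomorphism on the relevant quotients, or else fall back on analysing associated primes of $\widehat R/B$ and their contractions via going-down. However, the statement is only needed for local domains: it is Farley's criterion used in the appendix, and the ring $A = k^p[[X,Y]][k]$ in this section is a regular local ring, hence a domain. So I would state and prove the domain case, and treat the general case as the part I do not expect to settle by this approach.
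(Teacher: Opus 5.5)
Your domain-case proof is correct. It is essentially the paper's argument with one step handled by citation. The paper argues by contradiction and does not invoke Chevalley. Starting from a chain $\{I_n\}$ with $I_n\nsubseteq\mathfrak m^k$ for all $n$, it takes the stabilized images of the $I_n$ in the Artinian rings $R/\mathfrak m^r$ and chooses a compatible system $(c_r)$ with $c_k\neq 0$. Using that ideals of $\widehat R$ are closed, this gives $0\neq c\in\bigcap_n I_n\widehat R$ with $c\notin\widehat{\mathfrak m}^k$. That is exactly the consequence of the quasi-completeness of $\widehat R$ that you import from Chevalley's lemma, so your version is shorter and direct, while the paper's is self-contained. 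After that the two proofs coincide. Faithful flatness gives $\bigl(\bigcap_n I_n\widehat R\bigr)\cap R=0$, and a prime of $\widehat R$ containing this ideal and disjoint from $R\setminus\{0\}$ contradicts the hypothesis.

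The part you leave open is not a gap in your method: without the domain hypothesis the statement is false. If $R$ is not a domain, then $(0)$ is not prime in $R$. So every prime of $\widehat R$ contracts to a nonzero prime, and the hypothesis holds vacuously. Now let $S=k[X,Y]_{(X,Y)}$. It is not weakly quasi-complete by Farley's criterion, since $(Y-f(X))k[[X,Y]]$ contracts to $0$ for any $f\in Xk[[X]]$ transcendental over $k(X)$. Take a witnessing chain $\{A_n\}$ with $\bigcap_n A_n=0$ and $A_n\nsubseteq\mathfrak m_S^k$ for all $n$, and put $R=S[\epsilon]/(\epsilon^2)=S\oplus S\epsilon$. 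Then $\bigcap_n A_nR=\bigcap_n(A_n\oplus A_n\epsilon)=0$. On the other hand, $\mathfrak m_R^k=\mathfrak m_S^k\oplus\mathfrak m_S^{k-1}\epsilon$ forces $A_nR\nsubseteq\mathfrak m_R^k$ for every $n$, so $R$ is not weakly quasi-complete.

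The paper's proof uses the domain property at exactly the point yours does: it passes from ``$c\widehat R$ is disjoint from $R\setminus\{0\}$'' to a prime $P$ with $P\cap R=0$. Only domains occur in its application, since $A$ is regular local. So do not pursue associated primes or going-down. Instead, add ``$R$ is a domain'' to the statement, as in Farley's Proposition~1; with that hypothesis your proof is complete.
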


\begin{proof}
Assume \(R\) is not weakly quasi-complete. Then there is a decreasing sequence
\[
I_1\supseteq I_2\supseteq \cdots
\]
with \(\bigcap_n I_n=0\), and some \(k\ge 1\), such that
\[
I_n\nsubseteq \mathfrak m^k \qquad \text{for all } n.
\]

For each \(r\ge k\), consider
\[
J_{n,r}:=(I_n+\mathfrak m^r)/\mathfrak m^r \subseteq R/\mathfrak m^r.
\]
Since \(R/\mathfrak m^r\) is Artinian, the descending sequence
\(J_{1,r}\supseteq J_{2,r}\supseteq\cdots\) stabilizes; let its eventual
value be \(J_r\). Because \(I_n\nsubseteq \mathfrak m^k\) and \(r\ge k\),
each \(J_r\neq 0\).

The natural maps \(R/\mathfrak m^{r+1}\to R/\mathfrak m^r\) send \(J_{r+1}\)
onto \(J_r\). Choose a compatible system
\[
c_r\in J_r \qquad (r\ge k),
\]
with \(c_k\neq 0\). This defines an element
\[
c=(c_r)_{r\ge 1}\in \widehat R,
\]
and \(c\notin \widehat{\mathfrak m}^k\).

Fix \(n\). Since \(J_r\subseteq (I_n+\mathfrak m^r)/\mathfrak m^r\) for all
large \(r\), we get
\[
c\in I_n\widehat R+\widehat{\mathfrak m}^r
\]
for all \(r\gg 0\). Because ideals are closed in the complete Noetherian
local ring \(\widehat R\),
\[
\bigcap_{r\ge 1}(I_n\widehat R+\widehat{\mathfrak m}^r)=I_n\widehat R,
\]
hence \(c\in I_n\widehat R\) for every \(n\). Therefore
\[
c\in \bigcap_{n\ge 1} I_n\widehat R.
\]

Now \(c\neq 0\) since \(c\notin \widehat{\mathfrak m}^k\). Also,
\[
(c\widehat R)\cap R
\subseteq
\bigcap_{n\ge 1}(I_n\widehat R\cap R)
=
\bigcap_{n\ge 1}I_n
=0,
\]
because \(\widehat R\) is faithfully flat over \(R\).

Thus the ideal \(c\widehat R\) is disjoint from \(R\setminus\{0\}\). So there
exists a prime ideal
\[
P\in \operatorname{Spec}\widehat R
\]
with \(c\widehat R\subseteq P\) and \(P\cap R=0\). Since \(c\neq 0\), this
prime is nonzero. That contradicts the hypothesis.

So \(R\) is weakly quasi-complete.
\end{proof}

\begin{lemma}\label{lem:notweak_qc_crt}
If \(\widehat R\) has a nonzero prime ideal \(P\) with \(P\cap R=0\), then
\(R\) is not weakly quasi-complete.
\end{lemma}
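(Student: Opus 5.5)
The plan is to build an explicit decreasing chain of ideals of $R$ with zero intersection that never enters a fixed power $\mathfrak m^k$, by pulling back neighbourhoods of $P$ in $\widehat R$. Let $P\subseteq\widehat R$ be a nonzero prime with $P\cap R=0$, and write $\widehat{\mathfrak m}=\mathfrak m\widehat R$. Define
\[
I_n:=(P+\widehat{\mathfrak m}^n)\cap R \qquad (n\ge 1).
\]
These are ideals of $R$, and they are decreasing because $\widehat{\mathfrak m}^{n+1}\subseteq\widehat{\mathfrak m}^n$.

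\textbf{Step 1: the intersection is zero.} We have
\[
\bigcap_n I_n=\Bigl(\bigcap_n (P+\widehat{\mathfrak m}^n)\Bigr)\cap R .
\]
In the complete Noetherian local ring $\widehat R$, every ideal is closed in the $\widehat{\mathfrak m}$-adic topology. This is Krull's intersection theorem applied to $\widehat R/P$, and it is the same fact used in the proof of Lemma~\ref{lem: weak_qc_crt}. Hence $\bigcap_n (P+\widehat{\mathfrak m}^n)=P$, and therefore $\bigcap_n I_n=P\cap R=0$.

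\textbf{Step 2: the chain never enters $\mathfrak m^k$ for a suitable $k$.}
\begin{enumerate}
\item Pick a nonzero element $p\in P$. By Krull's intersection theorem in $\widehat R$, there is some $k$ with $p\notin\widehat{\mathfrak m}^k$.
\item Fix $n\ge k$. Since $R/\mathfrak m^n\cong\widehat R/\widehat{\mathfrak m}^n$, we can choose $r\in R$ with $r-p\in\widehat{\mathfrak m}^n$.
\item Then $r\in P+\widehat{\mathfrak m}^n$, so $r\in I_n$.
\item On the other hand, $r\equiv p \pmod{\widehat{\mathfrak m}^k}$ and $p\notin\widehat{\mathfrak m}^k$, so $r\notin\widehat{\mathfrak m}^k$.
\item Using $\widehat{\mathfrak m}^k\cap R=\mathfrak m^k$, which follows from faithful flatness or from $R/\mathfrak m^k\cong\widehat R/\widehat{\mathfrak m}^k$, we get $r\notin\mathfrak m^k$.
\end{enumerate}
Thus $I_n\not\subseteq\mathfrak m^k$ for every $n\ge k$. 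Since the chain is decreasing, this also holds for $n<k$. So no $s_k$ exists, and $R$ is not weakly quasi-complete.

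The main obstacle is Step 1, namely justifying $\bigcap_n(P+\widehat{\mathfrak m}^n)=P$. I would handle it by passing to $\widehat R/P$, a Noetherian local ring, and applying Krull's intersection theorem there. The remaining steps only use the standard identifications between $R$ and $\widehat R$ modulo powers of the maximal ideal.
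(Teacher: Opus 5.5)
Your proposal is correct and follows essentially the same argument as the paper: the same chain $I_n=(P+\widehat{\mathfrak m}^n)\cap R$, zero intersection because $\widehat R/P$ is separated, and approximating a nonzero $p\in P\setminus\widehat{\mathfrak m}^k$ by elements of $R$ to get $I_n\not\subseteq\mathfrak m^k$. The only cosmetic difference is that you invoke $\widehat{\mathfrak m}^k\cap R=\mathfrak m^k$, whereas the argument only needs the trivial inclusion $\mathfrak m^k\subseteq\widehat{\mathfrak m}^k$.
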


\begin{proof}
Choose \(0\neq x\in P\). Pick \(k\) with
\(x\notin \widehat{\mathfrak m}^k\). Define
\[
I_n:=(P+\widehat{\mathfrak m}^n)\cap R.
\]
Then \(I_1\supseteq I_2\supseteq\cdots\), and
\[
\bigcap_{n\ge 1} I_n
=
\left(\bigcap_{n\ge 1}(P+\widehat{\mathfrak m}^n)\right)\cap R
=
P\cap R
=
0,
\]
because \(\widehat R/P\) is complete and separated.

Since \(R\) is dense in \(\widehat R\), for each \(n\) there exists
\(r_n\in R\) with
\[
x-r_n\in \widehat{\mathfrak m}^n.
\]
Then \(r_n\in I_n\). For \(n\) large enough, \(r_n\notin \mathfrak m^k\),
because otherwise
\[
x\in \mathfrak m^k+\widehat{\mathfrak m}^n
\subseteq \widehat{\mathfrak m}^k,
\]
contradiction.

So \(I_n\nsubseteq \mathfrak m^k\) for infinitely many \(n\), hence \(R\) is
not weakly quasi-complete.
\end{proof}

Thus we have the criterion:
\[
R \text{ is weakly quasi-complete}
\quad\Longleftrightarrow\quad
\text{every nonzero prime of } \widehat R
\text{ contracts nontrivially to } R.
\]

\subsection*{Step 1: \(A\) is weakly quasi-complete}

Let \(P\neq 0\) be a prime ideal of
\[
\widehat A = k[[X,Y]].
\]
Choose \(0\neq f\in P\). Since \(\operatorname{char}k=p\),
\[
f^p \in k^p[[X,Y]]\subseteq A.
\]
Also \(f^p\in P\), and \(f^p\neq 0\) because \(k[[X,Y]]\) is a domain. Hence
\[
0\neq f^p\in P\cap A.
\]
So every nonzero prime of \(\widehat A\) contracts nontrivially to \(A\). By
Lemma \ref{lem: weak_qc_crt}, \(A\) is weakly quasi-complete.

\subsection*{Step 2: a quotient of \(A\) is not weakly quasi-complete}

Define
\[
\phi(X):=\sum_{i\ge 1} u_i X^i \in Xk[[X]],
\qquad
w:=Y-\phi(X)\in k[[X,Y]].
\]
Then
\[
w^p = Y^p-\phi(X)^p
\]
because we are in characteristic \(p\). Set
\[
a:=w^p = Y^p-\phi(X)^p.
\]
Since
\[
\phi(X)^p=\sum_{i\ge 1} u_i^p X^{ip}\in k^p[[X]],
\]
we have \(a\in k^p[[X,Y]]\subseteq A\).

Now let
\[
B:=A/aA.
\]
Its completion is
\[
\widehat B \cong \widehat A/a\widehat A
= k[[X,Y]]/(w^p).
\]

Consider the ideal
\[
\mathfrak p:= (w)/(w^p)\subseteq \widehat B.
\]
This is a nonzero prime ideal, because
\[
\widehat B/\mathfrak p \cong k[[X,Y]]/(w)\cong k[[X]],
\]
a domain.

So to show \(B\) is not weakly quasi-complete, it remains to show that
\(\mathfrak p\cap B=0\).

\begin{claim*}
\[
A\cap w k[[X,Y]] = aA.
\]
\end{claim*}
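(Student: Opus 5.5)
The inclusion $aA\subseteq A\cap w\,k[[X,Y]]$ is immediate, since $a=w^p=w\cdot w^{p-1}$ and $a\in A$. For the reverse inclusion, take $f\in A$ with $f(X,\phi(X))=0$; note that $w\mid f$ in $k[[X,Y]]$ is equivalent to this vanishing, since $k[[X,Y]]/(w)\cong k[[X]]$ via $Y\mapsto\phi(X)$. The plan is to decompose $f$ along a $p$-basis and then use derivations in the $u_i$ with large index to separate the contributions.

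\textbf{Step 1: normal form for elements of $A$.} The ring $A=k^p[[X,Y]][k]$ is generated over $k^p[[X,Y]]$ by finitely many elements of $k$ for any given $f$. All of these lie in $k^p(u_1,\dots,u_N)=k^p[u_1,\dots,u_N]$ for some $N$. This field is free over $k^p$ on the monomials $u^e=u_1^{e_1}\cdots u_N^{e_N}$ with $0\le e_i\le p-1$. So we can write
\[
f=\sum_e u^e g_e,\qquad g_e\in k^p[[X,Y]].
\]
Next we split each $g_e$ by the residue of the $Y$-exponent modulo $p$:
\[
g_e=\sum_{j=0}^{p-1}Y^j h_{e,j}(X,Y^p),\qquad h_{e,j}\in k^p[[X,Z]].
\]
Substituting $Y=\phi$ and using $\phi^p\in k^p[[X]]$, the relation $f(X,\phi)=0$ becomes
\[
\sum_{j=0}^{p-1}c_j\phi^j=0,\qquad c_j=\sum_e u^e H_{e,j}(X),\qquad H_{e,j}=h_{e,j}(X,\phi^p)\in k^p[[X]].
\]

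\textbf{Step 2: killing the coefficients $c_j$ (main obstacle).} We need to show that each $c_j$ vanishes, i.e.\ that $\phi$ satisfies no nontrivial polynomial relation of degree less than $p$ over the coefficient ring $k^p(u_1,\dots,u_N)[[X]]$. This is where the infinitely many independent $u_i$ enter.

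Fix $i>N$ and let $D=\partial/\partial u_i$. This is a derivation of $k$ that kills $k^p(u_1,\dots,u_N)$; extend it coefficientwise to $k[[X]]$. Then $D(c_j)=0$ for all $j$, while $D(\phi)=X^i$. Applying $D$ to the relation gives $X^i\sum_j j\,c_j\phi^{j-1}=0$. Since $k[[X]]$ is a domain, we may cancel $X^i$, and we obtain a relation of smaller degree with coefficients again killed by $D$. Iterating $p-1$ times yields $(p-1)!\,c_{p-1}=0$, hence $c_{p-1}=0$. Descending induction on the degree then gives $c_j=0$ for all $j$.

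Comparing $X$-coefficients and using the linear independence of the monomials $u^e$ over $k^p$, we conclude $H_{e,j}=0$ for every $e$ and $j$.

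\textbf{Step 3: division.} The vanishing $H_{e,j}=0$ says $h_{e,j}(X,\phi(X)^p)=0$, where $\phi^p\in Xk^p[[X]]$. Since $k^p[[X,Z]]/(Z-\phi^p)\cong k^p[[X]]$, the element $Z-\phi^p$ divides $h_{e,j}$ in $k^p[[X,Z]]$. Setting $Z=Y^p$ shows that each $h_{e,j}(X,Y^p)$ is divisible by $Y^p-\phi^p=a$ in $k^p[[X,Y]]$. Therefore every $g_e\in a\,k^p[[X,Y]]$, and so $f\in aA$.

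The only delicate point is Step 2. It requires checking that $D$ extends coefficientwise to a derivation of $k[[X]]$ and that it kills exactly the intended coefficients; the rest is bookkeeping.
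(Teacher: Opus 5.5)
Your proof is correct. Its skeleton matches the paper's: restrict to the finitely many constants $u_1,\dots,u_N$ occurring in $f$, show that $1,\phi,\dots,\phi^{p-1}$ satisfy no nontrivial relation with coefficients in $k^p(u_1,\dots,u_N)[[X]]$, and then divide by $a=Y^p-\phi^p$. The two technical sub-steps, however, are done differently.

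\textbf{The paper's route.} With $L_n=k^p(u_1,\dots,u_n)$, it notes that $\phi\notin L_n((X))$ while $\phi^p\in L_n((X))$. Hence $T^p-\phi^p$ is the minimal polynomial of $\phi$ over $L_n((X))$, by the irreducibility of $T^p-c$ in characteristic $p$. It then performs the division by Weierstrass division in $L_n[[X,Y]]$ by the monic polynomial $Y^p-\phi(X)^p$.

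\textbf{Your route.} You get the linear independence from the derivation $\partial/\partial u_i$ with $i>N$, which kills the coefficients and sends $\phi$ to $X^i$. You avoid Weierstrass division altogether by splitting $Y$-exponents modulo $p$. This reduces the division to the trivial degree-one division by $Z-\phi^p$ in $k^p[[X,Z]]$, followed by the substitution $Z=Y^p$.

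\textbf{Comparison.} Your version is more elementary and self-contained. Your $p$-basis decomposition $f=\sum_e u^e g_e$ also makes explicit why the quotient $f/a$ lies in $A$. The paper absorbs that point into the identity $A=\bigcup_n L_n[[X,Y]]$, whose inclusion $L_n[[X,Y]]\subseteq A$ silently uses $[L_n:k^p]<\infty$. The paper's version is shorter because it leans on standard results: Weierstrass division and the structure of $T^p-c$.
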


\begin{proof}
For each \(n\ge 1\), set
\[
L_n:=k^p(u_1,\dots,u_n).
\]
Then
\[
A=\bigcup_{n\ge 1} L_n[[X,Y]],
\]
because every element of \(A\) involves only finitely many constants from
\(k\).

Let
\[
\varepsilon_n: L_n[[X,Y]] \longrightarrow k[[X]],
\qquad
F(X,Y)\mapsto F(X,\phi(X)).
\]
Its kernel is exactly \(L_n[[X,Y]]\cap w k[[X,Y]]\).

I claim
\[
\ker(\varepsilon_n)=(Y^p-\phi(X)^p)L_n[[X,Y]].
\]

Indeed, \(\phi(X)\notin L_n((X))\): after the \(n\)-th coefficient, the
coefficients are \(u_{n+1},u_{n+2},\dots\), and these do not lie in \(L_n\).
But
\[
\phi(X)^p=\sum_{i\ge 1}u_i^pX^{ip}\in k^p[[X]]\subseteq L_n((X)).
\]
Hence \(\phi(X)\) has minimal polynomial
\[
T^p-\phi(X)^p
\]
over \(L_n((X))\), so \(1,\phi,\dots,\phi^{p-1}\) are linearly independent
over \(L_n((X))\).

Now take \(F\in L_n[[X,Y]]\) with \(F(X,\phi(X))=0\). By Weierstrass division
by the monic polynomial \(Y^p-\phi(X)^p\) in the variable \(Y\),
\[
F =
Q(X,Y)(Y^p-\phi(X)^p)
+\sum_{i=0}^{p-1} b_i(X)Y^i,
\qquad
b_i(X)\in L_n[[X]].
\]
Evaluating at \(Y=\phi(X)\) gives
\[
\sum_{i=0}^{p-1} b_i(X)\phi(X)^i = 0.
\]
By the linear independence of \(1,\phi,\dots,\phi^{p-1}\) over \(L_n((X))\),
all \(b_i(X)=0\). So
\[
F\in (Y^p-\phi(X)^p)L_n[[X,Y]].
\]
This proves the claim about \(\ker(\varepsilon_n)\).

Therefore
\[
L_n[[X,Y]]\cap w k[[X,Y]]
=
(Y^p-\phi(X)^p)L_n[[X,Y]]
=
aL_n[[X,Y]].
\]
Taking the union over \(n\),
\[
A\cap w k[[X,Y]]
=
aA.
\]
The claim is proved.
\end{proof}

Now compute the contraction:
\[
\mathfrak p\cap B
=
\frac{(w k[[X,Y]]\cap A)+aA}{aA}
=
\frac{aA}{aA}
=
0.
\]
So \(\widehat B\) has a nonzero prime ideal contracting to \(0\) in \(B\).
By Lemma \ref{lem:notweak_qc_crt}, \(B\) is not weakly quasi-complete.

\subsection*{Conclusion}

We have exhibited a Noetherian local ring \(A\) such that:
\begin{itemize}
    \item \(A\) is weakly quasi-complete;
    \item the quotient \(B=A/aA\) is not weakly quasi-complete.
\end{itemize}

Therefore \(A\) cannot be quasi-complete, because a Noetherian local ring is
quasi-complete exactly when all of its homomorphic images are weakly
quasi-complete \cite{cahen2014open}.

So there does exist a weakly quasi-complete ring that is not quasi-complete.
The ring
\[
A = k^p[[X,Y]][k]
\quad\text{with}\quad
k=\mathbf F_p(u_1,u_2,\dots)
\]
is such an example.

\begin{remark*}
The above example is a valid counterexample to Anderson's open problem, found by the web version of GPT-5.4-Pro\footnote{\url{https://chatgpt.com/share/6a0de969-325c-83ea-acdc-0c705a267707}}, which is itself an agentic system.

This same example was originally constructed by Nagata \cite[Appendix A1, E3.1]{NagataLocalRings}, a master of counterexamples in commutative algebra. The appendix in Nagata’s book presents several pathological examples of Noetherian rings; this particular one serves as an intermediate step toward constructing another counterexample—namely, a local integral domain of altitude 1 whose derived normal ring is not a finite module. 

In contrast to the counterexample found by Rethlas, this example is a known pathological case, though it was originally introduced for a different purpose. The counterexample constructed by Rethlas, to the best of our knowledge, is new. Another important difference is that GPT-5.4-Pro produced an explicitly constructed example, while Rethlas gave a non-constructive example that relies heavily on the axiom of choice. Both proofs carry out the same reduction to construct a ring that is weakly quasi-complete but has a quotient that is not, and both use Farley's result to further reduce the weakly quasi-complete condition.
\end{remark*}

\section{Raw Output for the Algebraic Group Problem (Rethlas + GPT-5.5 xhigh)}

In this section, we present the raw output of Rethlas on rational conjugacy classes of 1-parameter subgroups in the theory of algebraic groups. Only formatting adjustments were made to display the Markdown file in LaTeX.
\begin{Verbatim}[commandchars=!??]
# lemma lem:split_unipotent_h1

## statement
Let $k$ be a field and let $U$ be a $k$-split smooth connected unipotent $k$-group. Then
$$
H^1(k,U)=1
$$
for nonabelian fppf cohomology. Consequently, if
$$
1\to U\to E\xrightarrow{\pi} Q\to 1
$$
is an exact sequence of smooth affine $k$-groups, then the map
$$
E(k)\to Q(k)
$$
is surjective.

## proof
Since $U$ is $k$-split, it has a composition series by smooth connected normal
$k$-subgroups whose successive quotients are $k$-isomorphic to $\mathbf G_ a$.
Induction on the length of such a series, using the exact sequence in nonabelian
fppf cohomology and $H^1(k,\mathbf G_ a)=0$, gives $H^1(k,U)=1$.

For the consequence, the fiber over any $q\in Q(k)$ is a left $U$-torsor over
$\operatorname{Spec} k$. Its cohomology class is trivial, so the fiber has a
$k$-point. Hence $E(k)\to Q(k)$ is surjective.

# lemma lem:split_unipotent_reduction

## statement
Let $G$ be a smooth connected affine $k$-group, let
$$
U=R_ {us,k}(G)
$$
be its maximal $k$-split smooth connected unipotent normal $k$-subgroup, and put
$$
\overline G=G/U.
$$
Assume that for every algebraic extension $K/k$ the natural map
$$
\mathbb X_ \ast(\overline G)/\overline G(k)\to
\mathbb X_ \ast(\overline G_ K)/\overline G(K)
$$
is injective. Then for every algebraic extension $K/k$ the natural map
$$
\mathbb X_ \ast(G)/G(k)\to \mathbb X_ \ast(G_ K)/G(K)
$$
is injective.

## proof
Let $\pi:G\to \overline G$ be the quotient map, and let
$\lambda,\mu\in \mathbb X_ \ast(G)$ become $G(K)$-conjugate. Then
$\overline\lambda=\pi\circ\lambda$ and
$\overline\mu=\pi\circ\mu$ become $\overline G(K)$-conjugate. By the assumed
injectivity for $\overline G$, there is $\overline g\in \overline G(k)$ such that
$$
\overline\mu=\operatorname{Int}(\overline g)\circ\overline\lambda.
$$
By Lemma `lem:split_unipotent_h1`, $G(k)\to\overline G(k)$ is surjective, so
choose $g\in G(k)$ lifting $\overline g$. Replacing $\lambda$ by
$\operatorname{Int}(g)\circ\lambda$, we reduce to the case
$$
\pi\circ\lambda=\pi\circ\mu=:\nu:\mathbf G_ m\to\overline G.
$$

Form the pullback $k$-group
$$
E:=\mathbf G_ m\times_ {\nu,\overline G,\pi}G.
$$
Then $E$ is smooth connected affine and fits into an exact sequence
$$
1\to U\to E\xrightarrow{q}\mathbf G_ m\to 1.
$$
The two cocharacters $\lambda$ and $\mu$ are the same thing as two
$k$-homomorphic sections
$$
s_ \lambda(t)=(t,\lambda(t)),\qquad s_ \mu(t)=(t,\mu(t))
$$
of $q$.

We use the following cited result.

Complete cited statement: Theorem 4.2.9 in *Structure and classification of pseudo-
reductive groups* states: "Any two maximal split $k$-tori in a smooth connected affine
$k$-group $G$ are conjugate under $G(k)$."

paper_id: Structure and classification of pseudo-reductive groups
theorem_id: Theorem 4.2.9
arXiv id:

The images $s_ \lambda(\mathbf G_ m)$ and $s_ \mu(\mathbf G_ m)$ are split
one-dimensional $k$-tori in $E$. They are maximal split $k$-tori: a split torus
in $E$ maps injectively to $\mathbf G_ m$, because the kernel $U$ is unipotent
and contains no nontrivial torus. By the cited theorem, there is $e\in E(k)$ such that
$$
e\,s_ \lambda(\mathbf G_ m)\,e^{-1}=s_ \mu(\mathbf G_ m).
$$
Let $a=q(e)\in k^\times=\mathbf G_ m(k)$. Since $s_ \mu(a)$ centralizes
$s_ \mu(\mathbf G_ m)$, the element
$$
u:=s_ \mu(a)^{-1}e
$$
lies in $U(k)$ and still carries $s_ \lambda(\mathbf G_ m)$ onto
$s_ \mu(\mathbf G_ m)$. Moreover $q\circ\operatorname{Int}(u)\circ s_\lambda=q\circ 
s_ \lambda$.
The restriction $q|_ {s_ \mu(\mathbf G_ m)}$ is an isomorphism onto $\mathbf G_ m$,
so the only section of $q$ with image $s_ \mu(\mathbf G_ m)$ is $s_ \mu$. Thus
$$
s_ \mu=\operatorname{Int}(u)\circ s_ \lambda,
$$
and hence $\mu=\operatorname{Int}(u)\circ\lambda$. Therefore $\lambda$ and
$\mu$ are $G(k)$-conjugate.

# lemma lem:geometric_normalizer_conjugacy

## statement
Let $F$ be an algebraically closed field, let $H$ be a smooth connected affine
$F$-group, and let $T\subset H$ be a maximal torus. If
$\lambda,\mu\in X_ \ast(T)$ are $H(F)$-conjugate, then they are conjugate by an
element of $N_ H(T)(F)$.

## proof
Choose $g\in H(F)$ such that $\mu=\operatorname{Int}(g)\circ\lambda$. Let
$$
C=Z_ H(\mu(\mathbf G_ m))^\circ.
$$
The standard torus-centralizer theorem says that the centralizer of a torus in a
smooth affine group is smooth; hence $C$ is a smooth connected affine $F$-group.
Both $T$ and $gTg^{-1}$ contain $\mu(\mathbf G_ m)$, so they are contained in
$C$. They are maximal tori of $C$, since any larger torus in $C$ would be a
larger torus in $H$.

Over the algebraically closed field $F$, maximal tori are maximal split tori. Applying
Theorem 4.2.9 from *Structure and classification of pseudo-reductive groups* to the
smooth connected affine group $C$, choose $c\in C(F)$ with
$$
c\,gTg^{-1}c^{-1}=T.
$$
Then $n:=cg\in N_ H(T)(F)$, and because $c$ centralizes $\mu(\mathbf G_ m)$,
$$
\operatorname{Int}(n)\circ\lambda
=\operatorname{Int}(c)\circ\mu
=\mu.
$$
Thus $\lambda$ and $\mu$ are $N_ H(T)(F)$-conjugate.

# lemma lem:wound_relative_dominant_representatives

## statement
Let $H$ be a smooth connected affine $k$-group such that $R_ {u,k}(H)$ is
$k$-wound. Let $S\subset H$ be a maximal split $k$-torus, and let $P$ be a
minimal pseudo-parabolic $k$-subgroup containing $S$. Let
$$
{}^k\Phi=\Phi(H/R_ {u,k}(H),S)
$$
be the relative root system with positive system ${}^k\Phi^+=\Phi(P/R_ {u,k}(H),S)$,
and set
$$
C=\{\nu\in X_ \ast(S)\mid \langle a,\nu\rangle\ge 0
\text{ for all }a\in{}^k\Phi^+\}.
$$
Then:

1. every element of $X_ \ast(S)$ is $N_ H(S)(k)$-conjugate to a unique element of $C$;
2. if $\lambda,\mu\in X_ \ast(S)$ become $H(K)$-conjugate over an algebraic
extension $K/k$, then they have the same unique representative in $C$. In
particular, $\lambda$ and $\mu$ are $N_ H(S)(k)$-conjugate.

## proof
We use the following cited results.

Complete cited statement: Proposition 5.3.1 in *Structure and classification of pseudo-
reductive groups* states that if $S$ is a maximal split $k$-torus 
in a smooth connected affine $k$-group $G$, then
$$
W(G,S):=N_ G(S)/Z_ G(S)
$$
is a constant finite etale $k$-group and the inclusion
$$
N_ G(S)(k)/Z_ G(S)(k)\hookrightarrow W(G,S)(k)
$$
is an equality.

paper_id: Structure and classification of pseudo-reductive groups
theorem_id: Proposition 5.3.1
arXiv id:

Complete cited statement: Theorem 5.3.2(i) in *Structure and classification of pseudo-
reductive groups* states that for a smooth connected affine $k$-group $G$, a maximal split 
$k$-torus $S$, and a minimal pseudo-parabolic $k$-subgroup $P$ containing $S$, the set
$$
{}^k\Phi=\Phi(G/R_ {u,k}(G),S)
$$
is a root system in its $\mathbf Q$-span in $X(S)_ \mathbf Q$, the subset
$$
\Phi(P/R_ {u,k}(G),S)
$$
is a positive system of roots, and the natural map
$$
N_ G(S)(k)/Z_ G(S)(k)\to W({}^k\Phi)
$$
is an isomorphism.

paper_id: Structure and classification of pseudo-reductive groups
theorem_id: Theorem 5.3.2(i)
arXiv id:

Complete cited statement: Theorem 5.3.2(iv) in *Structure and classification of pseudo-
reductive groups* states that if $R_ {u,k}(G)$ is $k$-wound, then the root system 
$\Phi(G,S)$ consists of the nontrivial $S$-weights on $\operatorname{Lie}(G)$.

paper_id: Structure and classification of pseudo-reductive groups
theorem_id: Theorem 5.3.2(iv)
arXiv id:

By Theorem 5.3.2(i), $N_ H(S)(k)/Z_ H(S)(k)$ is identified with the Weyl group
of the root system ${}^k\Phi$. The usual root-system argument shows that each Weyl
orbit on $X_ \ast(S)$ has a unique representative in the closed dominant chamber
$C$. Proposition 5.3.1 realizes these Weyl elements by $k$-points of $N_ H(S)$.
This proves part (1).

For part (2), let $\lambda^+,\mu^+\in C$ be the unique representatives of
$\lambda$ and $\mu$ from part (1). Since the conjugations from $\lambda$ to
$\lambda^+$ and from $\mu$ to $\mu^+$ are already defined over $k$, the
cocharacters $\lambda^+$ and $\mu^+$ are still $H(K)$-conjugate. Replacing
$\lambda,\mu$ by $\lambda^+,\mu^+$, we may assume from now on that
$$
\lambda,\mu\in C.
$$

Let $k_ s$ be a separable closure of $k$. Choose a maximal $k_ s$-split torus
$$
T\subset H_ {k_ s}
$$
containing $S_ {k_ s}$. Choose a minimal pseudo-parabolic $k_ s$-subgroup
$$
B\subset P_ {k_ s}
$$
containing $T$. Let
$$
\Phi_ {\mathrm{abs}}=\Phi(H_ {k_ s}/R_ {u,k_ s}(H_ {k_ s}),T)
$$
be the corresponding absolute root system, with positive system
$\Phi_ {\mathrm{abs}}^+=\Phi(B/R_ {u,k_ s}(H_ {k_ s}),T)$.

For $\nu\in X_ \ast(S)\subset X_ \ast(T)$ and $\alpha\in\Phi_ {\mathrm{abs}}$,
$$
\langle \alpha,\nu\rangle=\langle \alpha|_ S,\nu\rangle.
$$
By Theorem 5.3.2(iv), because the $k$-wound property of $R_ {u,k}(H)$ is preserved
under separable extension, the relative and absolute roots are exactly the nontrivial
weights of $S$ and $T$ on the corresponding Lie algebras. Since $B\subset P_ {k_ s}$,
every $\alpha\in\Phi_ {\mathrm{abs}}^+$ restricts either to $0$ or to an element of
${}^k\Phi^+$, and every element of ${}^k\Phi^+$ arises as the nonzero restriction
of some element of $\Phi_ {\mathrm{abs}}^+$. Hence, for cocharacters in $X_ \ast(S)$,
dominance for ${}^k\Phi^+$ is equivalent to dominance for
$\Phi_ {\mathrm{abs}}^+$. Thus $\lambda$ and $\mu$ are also
$\Phi_ {\mathrm{abs}}^+$-dominant.

Let $\Omega$ be an algebraic closure containing both $K$ and $k_ s$. Since
$\lambda$ and $\mu$ are $H(K)$-conjugate, they are $H(\Omega)$-conjugate.
The torus $T_ \Omega$ is a maximal torus of $H_ \Omega$. By Lemma
`lem:geometric_normalizer_conjugacy`, $\lambda$ and $\mu$ are conjugate by an
element of $N_ {H_ \Omega}(T_ \Omega)(\Omega)$.

By Proposition 5.3.1 applied over $k_ s$ to the maximal split torus $T$, the finite
etale group $N_ {H_ {k_ s}}(T)/Z_ {H_ {k_ s}}(T)$ is constant; therefore purely inseparable
extension from $k_ s$ to $\Omega$ adds no new Weyl elements. By Theorem 5.3.2(i),
this Weyl group is $W(\Phi_ {\mathrm{abs}})$. Thus $\lambda$ and $\mu$ lie in the
same $W(\Phi_ {\mathrm{abs}})$-orbit.

Every Weyl-group orbit in a root system has a unique element in the closed dominant
chamber. Since both $\lambda$ and $\mu$ are $\Phi_ {\mathrm{abs}}^+$-dominant, we get
$$
\lambda=\mu.
$$
Undoing the initial $N_ H(S)(k)$-conjugations to dominant representatives proves that
$\lambda$ and $\mu$ have the same unique representative in $C$, and hence are
$N_ H(S)(k)$-conjugate.

# lemma lem:wound_unipotent_case

## statement
Let $H$ be a smooth connected affine $k$-group such that $R_ {u,k}(H)$ is
$k$-wound. For every algebraic extension $K/k$, the natural map
$$
\mathbb X_ \ast(H)/H(k)\to\mathbb X_ \ast(H_ K)/H(K)
$$
is injective.

## proof
Let $\lambda,\mu\in\mathbb X_ \ast(H)$ become $H(K)$-conjugate. We use again
Theorem 4.2.9 from *Structure and classification of pseudo-reductive groups*: any
two maximal split $k$-tori in a smooth connected affine $k$-group are conjugate
under $k$-rational points.

Choose a maximal split $k$-torus $S\subset H$. Since the image of any
$k$-cocharacter is a split $k$-torus, Theorem 4.2.9 lets us conjugate $\lambda$
and $\mu$, separately by elements of $H(k)$, so that both lie in $X_ \ast(S)$.
These conjugated cocharacters are still $H(K)$-conjugate.

Choose a minimal pseudo-parabolic $k$-subgroup $P$ containing $S$. Applying
Lemma `lem:wound_relative_dominant_representatives` to $H,S,P$, the two
cocharacters in $X_ \ast(S)$ are $N_ H(S)(k)$-conjugate. Hence the original
$\lambda$ and $\mu$ are $H(k)$-conjugate.

# theorem thm:main

## statement
Let $G$ be a (not necessarily reductive) smooth connected affine group over a  (not 
necessarily perfect) field $k$. Let $K$ be an algebraic field extension of $k$. Let 
$\mathbb{X}_ \ast(G):=\mathrm{Hom}_ {k\text{-groups}}(\mathbb{G}_ m,G)$ $k$-homomorphisms. 
Let $G_ K$ be the base change of $G$ to $K$. Prove (or disprove with a counterexample) that 
$\mathbb{X}_ \ast(G)/G(k)\rightarrow\mathbb{X}_ \ast(G_ K)/G(K)$ is injective. (You can
take for granted that this statement is true if $k$ is a perfect field.)

## proof
We prove that the displayed map is injective; hence there is no counterexample.

Let
$$
U=R_ {us,k}(G)
$$
be the maximal $k$-split smooth connected unipotent normal $k$-subgroup of $G$,
and set $\overline G=G/U$.

We use the following cited result.

Complete cited statement: Corollary 3.12 in *The structure of solvable groups over
fields* states that for any smooth connected affine $k$-group $G$,
$$
R_ {us,k}(G)=R_ {u,k}(G)_ {\mathrm{split}}.
$$
In particular,
$$
R_ {u,k}(G)/R_ {us,k}(G)
$$
is $k$-wound.

paper_id: The structure of solvable groups over fields
theorem_id: Corollary 3.12
arXiv id:

Applying this result to $G$, the unipotent radical of $\overline G$ is
$$
R_ {u,k}(\overline G)=R_ {u,k}(G)/R_ {us,k}(G),
$$
which is $k$-wound. Therefore Lemma `lem:wound_unipotent_case` proves injectivity
for $\overline G$.

Finally Lemma `lem:split_unipotent_reduction` lifts injectivity from $\overline G$
to $G$. Thus, if two $k$-homomorphisms
$$
\mathbf G_ m\to G
$$
become conjugate by an element of $G(K)$, then they were already conjugate by an
element of $G(k)$. Equivalently,
$$
\mathbb X_ \ast(G)/G(k)\to \mathbb X_ \ast(G_ K)/G(K)
$$
is injective.
\end{Verbatim}

\section{Raw Output for the Algebraic Group Problem (GPT-5.5 Pro)}\label{app:gpt5.5pro-output}
We prove that the map
\[
\mathbb X_*(G)/G(k)\longrightarrow \mathbb X_*(G_K)/G(K)
\]
is injective.

Let
\[
\lambda,\mu\in \mathbb X_*(G)
=
\operatorname{Hom}_{k\text{-groups}}(\mathbb G_m,G),
\]
and suppose that their images in
\(\mathbb X_*(G_K)/G(K)\) are equal. Thus there exists
\(g\in G(K)\) such that
\[
\mu_K=\operatorname{Int}(g)\circ \lambda_K.
\]

Let \(k^{\mathrm{perf}}\) be the perfect closure of \(k\) inside an
algebraic closure of \(k\). Since \(K/k\) is algebraic, the compositum
\[
\Omega:=Kk^{\mathrm{perf}}
\]
is algebraic over \(k^{\mathrm{perf}}\). Over \(\Omega\), the
cocharacters \(\lambda\) and \(\mu\) are conjugate. By the assumed result
over perfect fields, applied to the perfect field \(k^{\mathrm{perf}}\),
there exists
\[
g_0\in G(k^{\mathrm{perf}})
\]
such that
\[
\mu_{k^{\mathrm{perf}}}
=
\operatorname{Int}(g_0)\circ \lambda_{k^{\mathrm{perf}}}.
\]

Define the transporter
\[
T:=\operatorname{Transp}_G(\lambda,\mu)
\]
by
\[
T(R)
=
\left\{
x\in G(R):
\operatorname{Int}(x)\circ \lambda_R=\mu_R
\right\}
\]
for every \(k\)-algebra \(R\). This is a closed \(k\)-subscheme of \(G\).
Moreover, \(T\) is a right torsor under
\[
C:=C_G(\lambda)=C_G(\operatorname{Im}\lambda)
\]
via right multiplication. The group \(C\) is smooth, since it is the
centralizer of a torus in a smooth affine group.

The element \(g_0\) gives
\[
T(k^{\mathrm{perf}})\neq \varnothing.
\]
Since \(T\) is of finite type over \(k\), this
\(k^{\mathrm{perf}}\)-point is already defined over some finite purely
inseparable extension \(E/k\). Hence
\[
T(E)\neq \varnothing.
\]

Thus the \(C\)-torsor \(T\) becomes trivial over \(E\). Since \(C\) is
smooth, fppf \(C\)-torsors and \'{e}tale \(C\)-torsors agree. A finite
purely inseparable morphism
\[
\operatorname{Spec}E\longrightarrow \operatorname{Spec}k
\]
is a universal homeomorphism, and universal homeomorphisms induce
equivalences of \'{e}tale topoi. Therefore the pullback map
\[
H^1_{\acute{e}t}(k,C)
\longrightarrow
H^1_{\acute{e}t}(E,C_E)
\]
is bijective.

Since the class of \(T\) maps to the trivial class over \(E\), it was
already trivial over \(k\). Hence
\[
T(k)\neq \varnothing.
\]
Choose \(h\in T(k)\). By definition of \(T\), we have
\[
\mu=\operatorname{Int}(h)\circ \lambda.
\]
Therefore \(\lambda\) and \(\mu\) are \(G(k)\)-conjugate.

Hence the map
\[
\mathbb X_*(G)/G(k)
\longrightarrow
\mathbb X_*(G_K)/G(K)
\]
is injective.
\begin{remark*}
	The proof above, produced by giving the general-case problem to the GPT-5.5-Pro web version\footnote{\url{https://chatgpt.com/share/69f70c92-f834-83a5-8bbf-34a6dc254c39}}, which is itself an agentic system, is completely wrong. The strategy, namely the naive reduction to the perfect-field case, does not actually work, and the claimed bijectivity of the map 
    \[
	H^1_{\mathrm{\acute{e}t}}(k,C)\rightarrow H^1_{\mathrm{\acute{e}t}}(E,C_E)
	\]
    
    is wrong. We also tested GPT-5.5-Pro through its webpage interface by first providing the three special cases and then the general case; it again produced an incorrect proof. The formatted raw output\footnote{\url{https://chatgpt.com/share/6a080c76-e1e4-83ea-a177-b3d381b318fc}} is available at the \href{https://srliu3264.github.io/rethlas_results_math/gpt_5.5_pro_rational_conjugacy_classes_of_1_parameter_subgroups/}{GPT-5.5-Pro (Web) Results Homepage}.
\end{remark*}

\section{Correspondence between Natural Language Proof and Formalization}
This appendix details the correspondence between the natural-language proof presented in Appendix~A and its Lean~4 formalization. All file paths are relative to the project root \texttt{Anderson/}. We first give the correspondence for the main proof components, then describe how the external results cited in the proof are formalized and where the relevant reference materials reside.

\subsection*{Main theorem and definitions}

\begin{center}
\small
\begin{tabular}{p{5.5cm} l l}
\toprule
\textbf{Informal component} & \textbf{Lean declaration} & \textbf{File} \\
\midrule
Def.\ of quasi-complete & \texttt{IsQuasiComplete} & \texttt{Basic.lean} \\
Def.\ of weakly quasi-complete & \texttt{IsWeaklyQuasiComplete} & \texttt{Basic.lean} \\
Def.\ of analytically irreducible & \texttt{IsAnalyticallyIrreducible} & \texttt{Basic.lean} \\
QC $\Rightarrow$ WQC & \texttt{IsQuasiComplete.isWeaklyQuasiComplete} & \texttt{Basic.lean} \\
Main theorem (\Cref{thm:main-appendix}) & \texttt{main\_theorem} & \texttt{Main.lean} \\
\bottomrule
\end{tabular}
\end{center}

\subsection*{Construction of $T$ (\Cref{lem:complete_domain_choice-appendix})}

\begin{center}
\footnotesize
\begin{tabular}{p{5.5cm} l l}
\toprule
\textbf{Informal component} & \textbf{Lean declaration} & \textbf{File} \\
\midrule
$T = \mathbb{C}[[x,y,z]]/(x^2-yz)$ & \texttt{T} & \texttt{CompleteDomain/Domain.lean} \\
$T$ is a domain & \texttt{T\_isDomain} & \texttt{CompleteDomain/Domain.lean} \\
\multicolumn{3}{l}{$T$ is a complete local ring, decomposed as:} \\
\quad local ring & \texttt{T\_isLocalRing} & \texttt{CompleteDomain/LocalRing.lean} \\
\quad Noetherian & \texttt{T\_isNoetherianRing} & \texttt{CompleteDomain/LocalRing.lean} \\
\quad $M$-adically complete & \texttt{T\_isAdicComplete} & \texttt{CompleteDomain/LocalRing.lean} \\
\multicolumn{3}{l}{$T$ is Cohen--Macaulay of dimension $2$, decomposed as:} \\
\quad $\dim T = 2$ & \texttt{T\_ringKrullDim} & \texttt{CompleteDomain/CompleteDomain.lean} \\
\quad $\operatorname{depth} T \ge 2$ & \texttt{jensen\_T\_depth\_ge\_two} & \texttt{Jensen/Jensen.lean} \\
$|T| = |T/M| = |\mathbb{C}|$ & \texttt{jensen\_T\_card\_eq\_residue\_card} & \texttt{Jensen/Jensen.lean} \\
$Q = (x,y)T$ defined & \texttt{Q} & \texttt{CompleteDomain/CompleteDomain.lean} \\
$Q$ is prime, height 1 & \texttt{Q\_isPrime}, \texttt{Q\_height\_one} & \texttt{CompleteDomain/CompleteDomain.lean} \\
$Q$ is not principal & \texttt{Q\_not\_isPrincipal} & \texttt{CompleteDomain/CompleteDomain.lean} \\
\bottomrule
\end{tabular}
\end{center}

\subsection*{Construction of $A$ (\Cref{lem:jensen_special_case-appendix})}

\begin{center}
\small
\begin{tabular}{p{5.5cm} l l}
\toprule
\textbf{Informal component} & \textbf{Lean declaration} & \textbf{File} \\
\midrule
Jensen Cor.~2.4 applied & \texttt{jensen\_special\_case} & \texttt{Jensen/Jensen.lean} \\
Trivial generic formal fiber & \texttt{HasTrivialGenericFormalFiber} & \texttt{Jensen/Defs.lean} \\
\bottomrule
\end{tabular}
\end{center}

\subsection*{Verification (conditions (A) and (B))}

\begin{center}
\small
\begin{tabular}{p{5.5cm} l l}
\toprule
\textbf{Informal component} & \textbf{Lean declaration} & \textbf{File} \\
\midrule
$A$ is WQC (condition (A)) & \texttt{a\_isWeaklyQuasiComplete} & \texttt{Main.lean} \\
$Q \ne (0)$ & \texttt{Q\_ne\_bot} & \texttt{Main.lean} \\
$q = Q \cap A$, $\operatorname{ht}(q) = 1$ & \texttt{contraction\_height\_one} & \texttt{Main.lean} \\
$q = (a)$ for prime $a$ (UFD) & \texttt{ufd\_height\_one\_principal} & \texttt{Main.lean} \\
$\dim(A/aA) = 1$ & \texttt{quotient\_prime\_dim\_one} & \texttt{Main.lean} \\
$A/aA$ not analytically irred. & \texttt{quotient\_not\_analytically\_irreducible} & \texttt{Main.lean} \\
$\exists$ bad quotient (condition (B)) & \texttt{exists\_prime\_bad\_quotient} & \texttt{Main.lean} \\
\bottomrule
\end{tabular}
\end{center}

\subsection*{Cited results from the literature}

The informal proof relies on several external results. We describe how each is formalized and where the corresponding reference material is stored (in the \texttt{references/} directory of the project).

\paragraph{Anderson (2014), \emph{Quasi-complete semilocal rings and modules}~\cite{anderson2014quasi}.}

Three results from this paper are used in the proof:
\begin{itemize}
\item \emph{Corollary~2.2} (one-dimensional local domain: WQC $\Leftrightarrow$ analytically irreducible) is formalized as \texttt{dim1\_wqc\_iff\_analyticallyIrreducible} in \texttt{QuasiCompleteRing/QuasiCompleteRing.lean}.
\item \emph{Theorem~5, Item~3} (QC $\Leftrightarrow$ every proper quotient is WQC) is formalized as \texttt{isQuasiComplete\_iff\_quotients\_wqc} in \texttt{QuasiCompleteRing/QuasiCompleteRing.lean}.
\item \emph{Theorem~3} (complete $\Rightarrow$ quasi-complete), originally due to Chevalley~\cite{chevalley1943theory}, is formalized as \texttt{anderson\_complete\_isQuasiComplete} in \texttt{QuasiCompleteRing/Complete.lean}, following Anderson's proof. This result appears in the background discussion of Appendix~A but is not directly required in the main proof chain.
\end{itemize}
Reference material: \texttt{references/anderson\_2014.md}.

\paragraph{Farley (2016), \emph{Quasi-completeness and localizations of polynomial domains}~\cite{farley2016quasi}.}

Proposition~1 (Noetherian local domain is WQC iff every nonzero prime of the completion meets the ring nontrivially) is formalized as \texttt{isWeaklyQuasiComplete\_iff\_primes\_meet} in \texttt{QuasiCompleteRing/QuasiCompleteRing.lean}. This characterization is the key tool for establishing condition~(A).\\
Reference material: \texttt{references/farley\_2016.md}.

\paragraph{Jensen (2006), \emph{Completions of UFDs with Semi-Local Formal Fibers}~\cite{jensen2006completions}.}

Corollary~2.4 (existence of a local UFD with prescribed completion and prescribed generic formal fiber) is formalized as \texttt{jensen\_special\_case} in \texttt{Jensen.lean}, which applies the general construction to the specific ring~$T$.

The underlying construction (Theorem~2.2 of Jensen, building on Heitmann~\cite{heitmann1993characterization} and Loepp~\cite{loepp1997constructing}) is a transfinite recursive procedure that builds a chain of N-subrings converging to the desired UFD. This is the most technically involved part of the formalization, spanning the entire \texttt{Jensen/} subdirectory ($\sim$15{,}000 lines). The key modules are:
\begin{itemize}
\item \texttt{Jensen/Construction/}: the main transfinite recursion and its well-foundedness argument.
\item \texttt{Jensen/CloseUp/}: the close-up procedure ensuring that finitely generated ideals remain closed under extension.
\item \texttt{Jensen/KrullDomain/}: UFD verification for intersections of subrings via Kaplansky's criterion.
\item \texttt{Jensen/TransfiniteUnion.lean}: verification that unions at limit ordinals preserve N-subring properties.
\item \texttt{Jensen/Adjoin/}: adjoining transcendental elements while maintaining algebraic invariants.
\item \texttt{Jensen/NSubring.lean}: definition of N-subrings and A-extensions.
\item \texttt{Jensen/Application.lean}: application of the general construction to the specific ring~$T$.
\end{itemize}
Reference material: \texttt{references/jensen\_2006.md}, \texttt{references/heitmann\_1993.md}, \texttt{references/loepp\_1997.md}.

\paragraph{Chevalley (1943), \emph{On the theory of local rings}~\cite{chevalley1943theory}.}

Chevalley's result that completeness implies quasi-completeness is formalized as \texttt{anderson\_complete\_isQuasiComplete} in \texttt{QuasiCompleteRing/Complete.lean}, following the proof given in Anderson's generalization (Theorem~3 of~\cite{anderson2014quasi}).\\
Reference material: \texttt{references/chevalley\_1943.md}.

\section{Human-Provided Proof Blueprint}\label{appendix:human-blueprint}

This appendix reproduces the proof blueprint provided to Archon by a human supervisor during the ablation experiment described in \Cref{subsubsec:ablation-human}. This blueprint was \emph{not} used in the main experiment. The document was supplied as a Markdown file and is presented here with only formatting adjustments. It targets the proof of \texttt{UniqueFactorizationMonoid S\_sub} inside the Jensen construction module, proposing a Krull domain approach as an alternative to the Kaplansky criterion route that Archon had been pursuing.

\subsection*{Mathematical setting}

Let $(T, M)$ be a complete Noetherian local domain. Let $R$ be an N-subring of $T$ (in particular a local UFD with $R \subseteq T$). Let $y_1, y_2 \in R$ be coprime in the sense that no prime element of $R$ divides both $y_1$ and $y_2$. Let $x_1, x_2 \in T$ be transcendental over $R$ with $c = x_1 y_1 + x_2 y_2$ where $c \in R$.

Define:
\begin{itemize}
    \item $A_1 = R[x_1, y_2^{-1}]$,\quad $A_2 = R[x_2, y_1^{-1}]$
    \item $\tilde{R} = A_1 \cap A_2$
    \item $S_{\mathrm{sub}} = (\tilde{R} \cap (T \setminus M))^{-1}\tilde{R}$
\end{itemize}

\noindent\textbf{Goal:} $S_{\mathrm{sub}}$ is a UFD.

\subsection*{Formalization suggestions}

One should define the structure \texttt{IsKrullRing} for a ring to mean that it (its image) is the intersection of all valuation subrings of its fraction field that contain it. This is equivalent to: given that $K$ is the fraction field of $R$, the image of $R$ is the intersection of all valuation subrings of $K$ containing it. The intersection of two Krull rings with the same fraction field is again a Krull ring.

Let $d$ be an element of $R$, and $K$ the fraction field of $R$. One should define a property of a valuation subring of $K$ as being ``generated by $d$'' if it contains $R$ and its maximal ideal is generated by $d$. In the definition of the structure ``valuation subring in $K$ generated by an element'' (\texttt{ValuationSubring.IsPrincipalGen}), this is a property of a valuation subring of the fraction field of $R$, requiring that it is a DVR and its maximal ideal is generated by (the image of) some prime element $d$ of $R$. A valuation subring generated by a prime element $d$ is naturally a valuation subring generated by an element.

Furthermore, given a prime element $d$ in a Krull domain $R$, one needs to concretely construct a valuation subring of $K$ that is precisely the localization of $R$ at the prime ideal generated by $d$. One should first construct this subring, prove (non-trivially, following the supporting lemma at the end of this appendix) that it is a valuation subring, and then upgrade it to a valuation subring.

Using these definitions, one can better formalize the following proof of the proposition.

\subsection*{Proof}

\paragraph{Step 1: Reduction to $\tilde{R}$.} Since the localization of a UFD at any multiplicative set is again a UFD, it suffices to prove that $\tilde{R}$ is a UFD.

\paragraph{Step 2: $A_1$ and $A_2$ are UFDs and Krull domains.} Since $R$ is a UFD and $x_1, x_2$ are transcendental over $R$, the polynomial rings $R[x_1]$ and $R[x_2]$ are UFDs. Localizing at $y_2$ and $y_1$ respectively preserves the UFD property, so $A_1$ and $A_2$ are UFDs. Every UFD is a Krull domain: the localization at the prime ideal generated by any prime element $p$ is a DVR, the UFD is the intersection of all such DVRs over its fraction field, and any non-zero element has non-zero valuation in only finitely many of these DVRs.

\paragraph{Step 3: $\tilde{R}$ is a Krull domain.} A finite intersection of Krull domains with the same fraction field is again a Krull domain. The set of essential DVRs for $\tilde{R}$ is obtained by taking the union of the DVR families of $A_1$ and $A_2$. The finiteness condition (any element has non-zero valuation in only finitely many DVRs) is inherited.

\paragraph{Step 4: $y_1 y_2$ is a product of prime elements in $\tilde{R}$.} Let $p$ be a prime factor of $y_1$ in $R$. In $A_1 = R[x_1, y_2^{-1}]$, $p$ remains prime because adjoining $x_1$ and inverting $y_2$ (coprime to $y_1$) does not affect $p$. In $A_2 = R[x_2, y_1^{-1}]$, $p$ becomes a unit since $p \mid y_1$ and $y_1$ is inverted. In the Krull domain $\tilde{R}$, the valuation profile of $p$ is therefore $1$ at exactly one DVR (from the $A_1$ side) and $0$ everywhere else. An element with this profile cannot be decomposed further, so $p$ is prime in $\tilde{R}$. The same argument applies symmetrically to prime factors of $y_2$.

\paragraph{Step 5: $\tilde{R}[(y_1 y_2)^{-1}]$ is a UFD.} One computes:
\[
\tilde{R}[(y_1 y_2)^{-1}] = A_1[y_1^{-1}] \cap A_2[y_2^{-1}] = R[x_1, y_1^{-1}, y_2^{-1}] \cap R[x_2, y_1^{-1}, y_2^{-1}].
\]
The relation $c = x_1 y_1 + x_2 y_2$ implies $x_2 = (c - x_1 y_1)/y_2$, so $R[x_1, y_1^{-1}, y_2^{-1}] = R[x_2, y_1^{-1}, y_2^{-1}]$ and hence
\[
\tilde{R}[(y_1 y_2)^{-1}] = R[x_1, (y_1 y_2)^{-1}],
\]
which is a localization of the polynomial UFD $R[x_1]$ and therefore a UFD.

\paragraph{Step 6: Applying Nagata's lemma.} We are now left with a classical setup (a variant of Nagata's Theorem): $\tilde{R}$ is a Krull domain, $y = y_1 y_2$ is a product of prime elements in $\tilde{R}$, and $\tilde{R}[y^{-1}]$ is a UFD. We must prove $\tilde{R}$ is a UFD. To do so, we need only prove the existence and uniqueness of prime factorizations for any element in $\tilde{R}$.

\emph{Existence of factorization.} Take an arbitrary non-zero, non-unit element $a \in \tilde{R}$. Because $\tilde{R}[y^{-1}]$ is a UFD, we can factor $a$ in the localized ring. To pull this factorization back to $\tilde{R}$, we adjust the powers of the prime factors of $y$. Since $\tilde{R}$ is a Krull domain, the valuation of $a$ at the DVRs corresponding to the prime factors of $y$ is strictly finite (this prevents the pathological case of extracting infinite powers of $y$). We can perfectly factor out the exact finite powers of these $y$-primes from $a$, leaving an element $a'$ whose prime factorization in $\tilde{R}[y^{-1}]$ consists entirely of primes coprime to $y$. We multiply by appropriate powers of $y$ to clear denominators so that this factorization resides entirely in $\tilde{R}$.

\emph{Primality of the pulled-back factors.} We must verify that these pulled-back factors (let us call an arbitrary one $q$) remain prime in the base ring $\tilde{R}$. Suppose for contradiction that $q$ is not prime in $\tilde{R}$. Since $q$ maps to a prime in $\tilde{R}[y^{-1}]$, its valuation profile in the localized ring has exactly one non-zero entry (which is $1$). If $q$ decomposes in $\tilde{R}$, its valuation profile over the DVRs of $\tilde{R}$ must split, meaning $q$ would either have valuation ${>}\,0$ at two or more DVRs, or valuation ${>}\,1$ at a single DVR. However, the set of DVRs of the localization $\tilde{R}[y^{-1}]$ is exactly the set of DVRs of $\tilde{R}$ minus those corresponding to the prime factors of $y$. Because we specifically adjusted $q$ such that its valuation at the $y$-DVRs is $0$, the valuation profile of $q$ in $\tilde{R}$ is identical to its valuation profile in $\tilde{R}[y^{-1}]$: exactly $1$ at a single DVR, and $0$ everywhere else. An element with $v(q) = 1$ at a single DVR and $0$ elsewhere cannot be decomposed further. Therefore, $q$ is prime in $\tilde{R}$.

\paragraph{Step 7: Conclusion.} Since any element in $\tilde{R}$ can be factored into prime elements, $\tilde{R}$ is a UFD. Finally, because $S_{\mathrm{sub}}$ is a localization of $\tilde{R}$ at the multiplicative set $\tilde{R} \cap (T \setminus M)$, and the localization of a UFD is always a UFD, we conclude that $S_{\mathrm{sub}}$ is a UFD.

\subsection*{Supporting lemma: localization at a prime element in a Krull domain}

\noindent\textbf{Lemma.} \textit{Let $A$ be a Krull domain and let $t \in A$ be a prime element. Then the localization $A_{(t)}$ at the prime ideal $(t)$ is a DVR.}

\begin{proof}
Let $P = (t)$. The localized ring $A_P$ is a local domain with maximal ideal $tA_P$. We show $\bigcap_{n=1}^{\infty} (t^n) = (0)$ in $A$. Suppose for contradiction that a non-zero $x \in A$ satisfies $x = a_n t^n$ for all $n \geq 1$. By the definition of a Krull domain, $A$ is associated with a family of discrete valuations $\{v_i\}_{i \in I}$ on its fraction field such that $A = \{a \in K \mid v_i(a) \geq 0 \text{ for all } i\}$ and for any non-zero element, only finitely many $v_i$ take positive values. Choose a valuation $v$ with $v(t) \geq 1$. Then $v(x) = v(a_n) + n \cdot v(t) \geq n$ for every $n \geq 1$, which forces $v(x) = \infty$, contradicting $x \neq 0$. Since $A_P$ is a local domain with principal maximal ideal and $\bigcap_{n=1}^{\infty} (tA_P)^n = (0)$, it is a DVR.
\end{proof}

\section{Detailed Mathematical Examples from the Formalization}\label{appendix:detailed-examples}

This appendix provides detailed mathematical descriptions of the notable behaviors discussed in \Cref{subsec:archon-capabilities}. Each subsection corresponds to a specific observation and includes the mathematical context necessary to appreciate it.

\subsection{Autonomous gap-filling in proof details}\label{subsubsec:gap-filling}

Archon demonstrates the ability to autonomously supply non-trivial mathematical arguments that the informal proof omits. We illustrate with two examples.

The informal proof asserts that
\[
T = \mathbb{C}[[x,y,z]]/(x^2 - yz)
\]
is isomorphic to the subring $\mathbb{C}[[u^2, uv, v^2]] \subseteq \mathbb{C}[[u,v]]$ via $x \mapsto uv$, $y \mapsto u^2$, $z \mapsto v^2$. While these assignments readily define a surjection onto the subring, establishing injectivity requires a separate argument that the informal proof does not provide. Archon resolved this by explicitly constructing a quotient function at the level of power-series coefficients and verifying term-by-term that every element of the kernel is divisible by the generator $x^2 - yz$, thereby completing the injectivity proof without consulting any external reference.

A second example concerns the cardinality identity $|T| = |T/\mathfrak{m}| = |\mathbb{C}|$, which the informal proof states without elaboration. This identity is not trivial: for a power-series ring $k[[x_1, \ldots, x_n]]$, one has $|k[[x_1, \ldots, x_n]]| = |k|^{\aleph_0}$, and $\kappa^{\aleph_0} = \kappa$ does \emph{not} hold for a general infinite cardinal~$\kappa$. Archon correctly identified that the identity relies on the special property of the continuum $\mathfrak{c} = 2^{\aleph_0}$, namely $\mathfrak{c}^{\aleph_0} = (2^{\aleph_0})^{\aleph_0} = 2^{\aleph_0 \cdot \aleph_0} = 2^{\aleph_0} = \mathfrak{c}$, and applied the corresponding Mathlib lemma to close the gap.

\subsection{Handling underspecified transfinite constructions}\label{subsubsec:transfinite}

The construction of the local UFD~$A$ with prescribed completion relies on a transfinite recursive procedure due to Jensen~\cite{jensen2006completions}, building on earlier work of Loepp~\cite{loepp1997constructing} and Heitmann~\cite{heitmann1993characterization}. The source papers describe this construction at a high level---stating that one should ``recursively define $\{R_\alpha\}_{\alpha \in V}$'' and ``take unions at limit ordinals''---but leave the bookkeeping details to the reader. Formalizing such a construction in Lean~4 requires making every aspect of the recursion explicit: the data carried at each step, the well-foundedness argument, the propagation of algebraic invariants through successor and limit stages, and the precise cardinality bounds at each level.

Archon addressed this challenge by designing a bundled record type that packages each stage of the recursion together with its predecessor, the extension relation, cardinality bounds, and closure properties. It implemented the recursion via well-founded recursion on ordinals, with explicit tracking of the bound $\max(\aleph_0, |\{\gamma \le \alpha\}|)$ at each level. The limit-ordinal case was handled through a dedicated module verifying that unions of chains of N-subrings preserve the UFD property, prime extension conditions, and cardinality constraints. In effect, Archon successfully decomposed a monolithic transfinite argument into modular, independently verifiable algebraic components---isolating all technical commutative-algebra steps from the inductive scaffolding---and then assembled them into a complete formal proof. Arriving at this final architecture was itself a non-trivial process: Archon's initial attempt relied on an incorrect proof strategy, which it diagnosed and corrected through a large-scale refactoring spanning multiple sessions (Appendices~\ref{subsubsec:self-diagnosis} and~\ref{subsubsec:refactoring}).

\subsection{Self-diagnosis of mathematical errors}\label{subsubsec:self-diagnosis}

In its initial attempt at the transfinite construction, Archon employed Zorn's lemma to produce a maximal element rather than carrying out the explicit transfinite recursion required by the proof. Additionally, it conflated ``cardinality strictly less than the continuum'' with ``countable''---an identification that is valid under the Continuum Hypothesis but not provable in ZFC alone. These errors caused downstream algebraic steps to fail: the necessary cardinality bounds could not be established, and the closure properties required at limit stages did not hold.

Notably, the source papers do not explicitly warn against these pitfalls. Archon identified the root cause of the failures through its own diagnostic process: it recognized that Zorn's lemma yields only existence without the fine-grained control over intermediate stages that the construction demands, and that the countability assumption was unjustified in the absence of the Continuum Hypothesis. This self-diagnosis was carried out without human guidance and led directly to the correct reformulation described below.

\subsection{Cross-session architectural refactoring}\label{subsubsec:refactoring}

Upon recognizing the errors described in Appendix~\ref{subsubsec:self-diagnosis}, Archon undertook a large-scale refactoring of the transfinite construction, replacing the Zorn-based approach with an explicit well-founded recursion and revising all cardinality arguments to use general cardinal arithmetic in place of countability assumptions. This refactoring spanned multiple agent sessions---requiring the agent to maintain a coherent understanding of the overall proof architecture across context boundaries---and involved restructuring interdependent files throughout the Jensen construction module.

\subsection{Discovery of alternative proof strategies}\label{subsubsec:alternative-strategy}

A key technical lemma in the construction---corresponding to Lemma~4 of Heitmann~\cite{heitmann1993characterization}---establishes that a certain intersection of subrings is a UFD. The original proof proceeds by showing that the intersection is a Krull domain and then appealing to the classification of height-one primes. However, Mathlib does not currently provide a formalized definition of Krull domains, and faithfully following the original argument would have required Archon to develop substantial additional infrastructure, including the characterization of Krull domains and the theory of divisorial ideals.

Instead, Archon independently identified and applied Kaplansky's criterion---which characterizes UFDs as integral domains in which every nonzero prime ideal contains a prime element---to establish the result directly. This alternative route, which does not appear in any of the reference papers provided to the agent, bypasses the need for Krull domain theory entirely and yields a more concise formalization. This episode illustrates Archon's capacity to adapt its proof strategy to the available library support, rather than rigidly following the structure of the informal argument.

\subsection{A single hint redirecting a stalled obligation}\label{subsubsec:firstproof4-vieta}

The formalization of FirstProof Problem~4 contains the only obligation in our projects where human mathematical input redirected Archon's strategy. The original proof, at a key step, required an argument about the location of roots of a polynomial. Archon's first instinct was to discharge it through Rouch\'e's theorem (or a closely related complex-analytic root-counting argument), which is not currently developed in Mathlib. After a sustained period without progress along this analytic route, a human supervisor provided a single sentence of natural-language guidance: \emph{``Use Vieta's formulas.''} Archon adopted the suggestion, recast the step as a purely algebraic computation relating the coefficients of the polynomial to symmetric functions of its roots, and completed the obligation within an hour. In all other obligations of Problems~4 and~6, the agent either followed the route indicated by the informal proof or identified its own alternative without human intervention.

\subsection{Bypassing missing infrastructure}\label{subsubsec:firstproof4-residue}

Elsewhere in FirstProof Problem~4, Archon was required to evaluate an application of the residue theorem to a rational function with polynomial numerator and denominator. A faithful formal proof of the residue theorem in Mathlib would require building up toy contour theory, contour integration, and a formal notion of the interior of a closed curve---infrastructure that is not yet present in Mathlib; the informal source merely says ``apply the residue theorem'' without further detail. Unlike the Vieta episode (Appendix~\ref{subsubsec:firstproof4-vieta}), however, this obligation was discharged without human input. Archon observed that in the specific configuration arising in the problem---a rational function of polynomials whose numerator has degree exceeding that of the denominator---the value of interest is given directly by the classical Euler--Jacobi formula, which expresses a sum of residues as a symmetric expression in the roots of the denominator. The agent located the appropriate identity, supplied the relevant arithmetic argument, and completed the obligation entirely autonomously, without invoking any analytic machinery.\footnote{The corresponding Lean source is at \url{https://github.com/frenzymath/Archon-FirstProof-Results/blob/main/FirstProof/FirstProof4/Auxiliary/Residue.lean}.} As with the Kaplansky-criterion route in the Anderson formalization (Appendix~\ref{subsubsec:alternative-strategy}), this alternative path does not appear in any reference material consulted by the agent and reflects Archon's adaptive response to library gaps.

\section{Statement Verification via Comparator}\label{appendix:szm_comparator}

As described in \Cref{subsec:overview-result}, we verify the correctness of the formalization at two levels. First, the entire project passes \texttt{lake build}, confirming that all proof obligations are discharged. Second, we use Comparator~\cite{comparator} to confirm that the theorem proved in the full project matches a short, human-readable specification and relies only on the standard Lean axioms.

The specification file, \texttt{Challenge.lean}, is reproduced below. It contains only the definitions of quasi-completeness and weak quasi-completeness, together with the statement of the main theorem. A human reviewer can verify the semantic correctness of these 30 lines without any knowledge of the proof or the rest of the codebase. Comparator then automatically checks that the \texttt{main\_theorem} declaration in the full project proves exactly this statement, with no hidden axioms.

\begin{Verbatim}[fontsize=\small,commandchars=!?\|]
import Mathlib.Algebra.Algebra.Operations
import Mathlib.RingTheory.LocalRing.MaximalIdeal.Defs
import Mathlib.RingTheory.Noetherian.Defs

variable (R : Type*) [CommRing R] [IsLocalRing R]

/-- A local ring `R` is quasi-complete if for any antitone sequence
of ideals `A : !(!mathbb?N|!) → Ideal R` and each `k : !(!mathbb?N|!)`, there exists `s`
such that `A s !(!le!) (!(!sqcap!) n, A n) !(!sqcup!) (IsLocalRing.maximalIdeal R) ^ k`.

This is Definition 1.1 of Anderson (2014). -/
def IsQuasiComplete : Prop :=
  !(!forall!) (A : !(!mathbb?N|!) → Ideal R), Antitone A →
    !(!forall!) (k : !(!mathbb?N|!)), !(!exists!) s,
      A s !(!le!) (!(!sqcap!) n, A n) !(!sqcup!) (IsLocalRing.maximalIdeal R) ^ k

/-- A local ring `R` is weakly quasi-complete if for any antitone
sequence of ideals `A : !(!mathbb?N|!) → Ideal R` with `!(!sqcap!) n, A n = !(!bot!)` and
each `k : !(!mathbb?N|!)`, there exists `s` such that
`A s !(!le!) (IsLocalRing.maximalIdeal R) ^ k`. -/
def IsWeaklyQuasiComplete : Prop :=
  !(!forall!) (A : !(!mathbb?N|!) → Ideal R), Antitone A → (!(!sqcap!) n, A n) = !(!bot!) →
    !(!forall!) (k : !(!mathbb?N|!)), !(!exists!) s,
      A s !(!le!) (IsLocalRing.maximalIdeal R) ^ k

/-- **Main Theorem**: There exists a weakly quasi-complete
Noetherian local ring that is not quasi-complete. -/
theorem main_theorem :
    !(!exists!) (R : Type) (_ : CommRing R) (_ : IsLocalRing R)
      (_ : IsNoetherianRing R),
      IsWeaklyQuasiComplete R !(!wedge!) ¬ IsQuasiComplete R := by
  sorry
\end{Verbatim}
\noindent The \texttt{sorry} in this file is intentional: it marks an obligation the full project must discharge. Comparator verifies that the project's \texttt{main\_theorem} fills this \texttt{sorry} with a complete proof, matching the statement exactly.

\end{document}